\def\includehome{include}
\def\fighome{figures}
\newcommand{\mdp}{\mathcal{M}}
\newcommand{\states}{\mathcal{S}}
\newcommand{\actions}{\mathcal{A}}
\newcommand{\attball}{\mathcal{B}_\epsilon}
\newcommand{\worstbell}{\underline{\mathcal{T}}}
\newcommand{\worstbellpi}{\worstbell^\pi}
\newcommand{\worstqpi}{\underline{Q}^\pi}
\newcommand{\worstvpi}{\underline{V}^\pi}
\newcommand{\advaction}{\mathcal{A}_{\mathrm{adv}}}
\newcommand{\aadvaction}{\hat{\mathcal{A}}_{\mathrm{adv}}}
\newcommand{\worstcritic}{\underline{Q}^\pi_\phi}
\newcommand{\lossest}{\mathcal{L}_{\mathrm{est}}}
\newcommand{\lossworst}{\mathcal{L}_{\mathrm{wst}}}
\newcommand{\lossreg}{\mathcal{L}_{\mathrm{reg}}}
\newcommand{\lossrl}{\mathcal{L}_{\mathrm{RL}}}
\newcommand{\loss}{\mathcal{L}}
\newcommand{\weightworst}{\kappa_{\mathrm{wst}}}
\newcommand{\weightreg}{\kappa_{\mathrm{reg}}}
\newcommand{\oursfull}{{Worst-case-aware Robust RL}\xspace}
\newcommand{\ours}{{WocaR-RL}\xspace}
\newcommand{\ourppo}{{WocaR-PPO}\xspace}
\newcommand{\ourdqn}{{WocaR-DQN}\xspace}
\newcommand{\ourac}{{WocaR-A2C}\xspace}
\newcommand{\sappo}{{SA-PPO}\xspace}
\newcommand{\worstvname}{{worst-attack value}\xspace}
\newcommand{\worstqname}{{worst-attack action value}\xspace}
\newcommand{\worstcriticname}{{worst-attack critic}\xspace}
\newcommand{\bellmanname}{{worst-attack Bellman operator}\xspace}
\newcommand{\bellmannamecap}{{Worst-attack Bellman Operator}\xspace}
\newcommand{\estname}{{worst-attack value estimation}\xspace}
\newcommand{\worstname}{{worst-case-aware policy optimization}\xspace}
\newcommand{\regname}{{value-enhanced state regularization}\xspace}
\def\mytitle{Efficient Adversarial Training without Attacking: Worst-Case-Aware Robust Reinforcement Learning}
\pgfplotsset{compat=newest}
\theoremstyle{plain}
\newtheorem{theorem}{Theorem}[section]
\theoremstyle{definition}
\newtheorem{definition}[theorem]{Definition}
\theoremstyle{remark}
\title{\mytitle}
\author{%
  Yongyuan Liang\textsuperscript{\rm $\dag$}\thanks{Equal contribution.}
  \qquad
  Yanchao Sun\textsuperscript{\rm $\ddag$}\footnotemark[1]
  \qquad
  Ruijie Zheng\textsuperscript{\rm $\ddag$} 
  \qquad
  Furong Huang\textsuperscript{\rm $\ddag$}  \\
  \textsuperscript{\rm $\dag$} Shanghai AI Lab, \quad  \textsuperscript{\rm $\ddag$} University of Maryland, College Park  \\
  \texttt{
  \textsuperscript{\rm $\dag$}\texttt{cheryllLiang@outlook.com}
  \textsuperscript{\rm $\ddag$}\{ycs,rzheng12,furongh\}@umd.edu
  } 
}
\begin{document}

\maketitle

\begin{abstract}
Recent studies reveal that a well-trained deep reinforcement learning (RL) policy can be particularly vulnerable to adversarial perturbations on input observations. Therefore, it is crucial to train RL agents that are robust against any attacks with a bounded budget. Existing robust training methods in deep RL either treat correlated steps separately, ignoring the robustness of long-term rewards, or train the agents and RL-based attacker together, doubling the computational burden and sample complexity of the training process. In this work, we propose a strong and efficient robust training framework for RL, named \oursfull (\ours), that directly estimates and optimizes the worst-case reward of a policy under bounded $\ell_p$ attacks without requiring extra samples for learning an attacker. Experiments on multiple environments show that \ours achieves state-of-the-art performance under various strong attacks, and obtains significantly higher training efficiency than prior state-of-the-art robust training methods. The code of this work is available at \url{https://github.com/umd-huang-lab/WocaR-RL}.
\end{abstract}

\section{Introduction}
\label{sec:intro}


Deep reinforcement learning (DRL) has achieved impressive results by using deep neural networks (DNN) to learn complex policies in large-scale tasks. However, well-trained DNNs may drastically fail under adversarial perturbations of the input~\citep{akhtar2018threat,chakraborty2018adversarial}. Therefore, before deploying DRL policies to real-life applications, it is crucial to improve the robustness of deep policies against adversarial attacks, especially worst-case attacks that maximally depraves the performance of trained agents~\citep{sun2021strongest}.

\begin{wrapfigure}{r}{0.23\textwidth}
\vspace{-2em}
  \centering
  \includegraphics[width=0.2\textwidth]{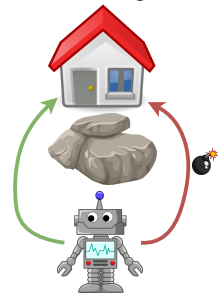}
  \vspace{-0.5em}
  \caption{\small{Policies have different vulnerabilities.}}
  \label{fig:example}
\vspace{-1.3em}
\end{wrapfigure}

A line of regularization-based robust methods~\cite{zhang2020robust,oikarinen2020robust,shen2020deep} focuses on improving the robustness of the DNN itself and regularizes the policy network to output similar actions under bounded state perturbations. 
However, different from supervised learning problems, the vulnerability of a deep policy comes not only from the DNN approximator, but also from the dynamics of the RL environment~\citep{zhang2021robust}. 
These regularization-based methods neglect the intrinsic vulnerability of policies under the environment dynamics, and thus may still fail
under strong attacks~\citep{sun2021strongest}.
For example, in the go-home task shown in Figure~\ref{fig:example}, both the green policy and the red policy arrive home without rock collision, when there is no attack. 
However, although regularization-based methods may ensure a minor action change under a state perturbation, the red policy may still be susceptible to a low reward under attacks, as a very small divergence can lead it to the bomb. 
On the contrary, the green policy is more robust to adversarial attacks since it stays away from the bomb. 
Therefore, besides promoting the robustness of DNN approximators (such as the policy network), it is also important to learn a policy with stronger intrinsic robustness. 



There is another line of work considering the long-term robustness of a deep policy under strong adversarial attacks. 
In particular, it is theoretically proved~\citep{zhang2020robust,sun2021strongest} that the strongest (worst-case) attacker against a policy can be learned as an RL problem, and training the agent under such a learned attacker can result in a robust policy. Zhang et al.~\cite{zhang2021robust} propose the \emph{Alternating Training with Learned Adversaries (ATLA)} framework, which alternately trains an RL agent and an RL attacker. Sun et al.~\citep{sun2021strongest} further propose PA-ATLA, which alternately trains an agent and the proposed more efficient PA-AD RL attacker, obtaining state-of-the-art robustness in many MuJoCo environments. 
However, training an RL attacker requires extra samples from the environment, and the attacker's RL problem may even be more difficult and sample expensive to solve than the agent's original RL problem~\citep{zhang2021robust,sun2021strongest}, especially in large-scale environments such as Atari games with pixel observations. Therefore, although ATLA and PA-ATLA are able to achieve high long-term reward under attacks, they double the computational burden and sample complexity to train the robust agent.

The above analysis of existing literature suggests two main challenges in improving the adversarial robustness of DRL agents: 
(1) correctly characterizing the long-term reward vulnerability of an RL policy, and 
(2) efficiently training a robust agent without requiring much more effort than vanilla training.
To tackle these challenges, in this paper, we propose a generic and efficient robust training framework named \textit{\oursfull (\ours)} that estimates and improves the long-term robustness of an RL agent.

\ours has 3 key mechanisms.
\underline{First}, \ours introduces a novel \emph{\bellmanname} which uses existing off-policy samples to estimate the lower bound of the policy value under the worst-case attack. Compared to prior works~\citep{zhang2021robust,sun2021strongest} which attempt to learn the worst-case attack by RL methods, \ours does not require any extra interaction with the environment.
\underline{Second}, using the estimated worst-case policy value, \ours optimizes the policy to select actions that not only achieve high natural future reward, but also achieve high worst-case reward when there are adversarial attacks. Therefore, \ours learns a policy with less intrinsic vulnerability.
\underline{Third}, \ours regularizes the policy network with a carefully designed state importance weight. As a result, the DNN approximator tolerates state perturbations, especially for more important states where decisions are crucial for future reward.
The above 3 mechanisms can also be interpreted from a geometric perspective of adversarial policy learning, as detailed in Appendix~\ref{app:understand}.

Our \textbf{contributions} can be summarized as below. 
\textbf{(1)} We provide an approach to estimate the worst-case value of any policy under any bounded $\ell_p$ adversarial attacks. This helps evaluate the robustness of a policy without learning an attacker which requires extra samples and exploration.
\textbf{(2)} We propose a novel and principled robust training framework for RL, named \textit{\oursfull (\ours)}, which characterizes and improves the worst-case robustness of an agent. \ours can be used to robustify existing DRL algorithms (e.g. PPO~\citep{schulman2017proximal}, DQN~\citep{mnih2013playing}).
\textbf{(3)} We show by experiments that \ours achieve \textbf{improved robustness} against various adversarial attacks as well as \textbf{higher efficiency}, compared with state-of-the-art (SOTA) robust RL methods in many MuJoCo and Atari games. 
For example, compared to the SOTA algorithm PA-ATLA-PPO~\citep{sun2021strongest} in the Walker environment, we obtain 20\% more worst-case reward (under the strongest attack algorithm), with only about 50\% training samples and 50\% running time.
Moreover, \ours learns \textbf{more interpretable ``robust behaviors''} than PA-ATLA-PPO in Walker as shown in Figure~\ref{fig:walker}.

\begin{figure}[!hb]
\vspace{-0.5em}
    \centering
    \includegraphics[width=0.85\columnwidth]{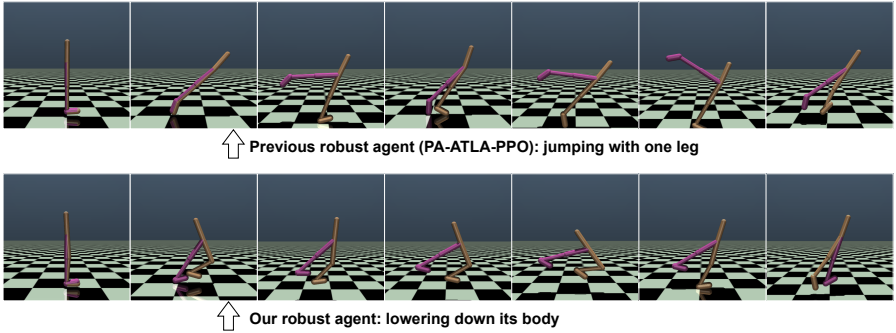}
    \vspace{-0.8em}
    \caption{\small{The robust Walker agents trained with \textbf{(top)} the state-of-the-art method PA-ATLA-PPO~\citep{sun2021strongest} and \textbf{(bottom)} our \ours.
    Although PA-ATLA-PPO agent also achieves high reward under attacks, it learns to jump with one leg, which is counter-intuitive and may indicate some level of overfitting to a specific attacker. 
    In contrast, our \ours agent learns to lower down its body, which is more intuitive and interpretable. 
    The full agent trajectories in Walker and other environments are provided in supplementary materials as GIF figures.
    }}
    \label{fig:walker}
\vspace{-1em}
\end{figure}

\vspace{-0.5em}
\section{Related Work}
\label{sec:related}
\vspace{-0.5em}



\textbf{Defending against Adversarial Perturbations on State Observations.}\quad
\textbf{(1)} 
\textit{Regularization-based methods}~\citep{zhang2020robust, shen2020deep,oikarinen2020robust} enforce the policy to have similar outputs under similar inputs, 
which achieves certifiable performance for DQN in some Atari games. But in continuous control tasks, these methods may not reliably improve the worst-case performance. A recent work by Korkmaz~\citep{korkmaz2021investigating} points out that these adversarially trained models may still be sensible to new perturbations.
\textbf{(2)} \textit{Attack-driven methods} train DRL agents with adversarial examples. Some early works~\citep{kos2017delving, behzadan2017whatever, mandlekar2017adversarially, pattanaik2017robust} apply weak or strong gradient-based attacks on state observations to train RL agents against adversarial perturbations. 
Zhang et al.~\citep{zhang2021robust} propose Alternating Training with Learned Adversaries (ATLA), which alternately trains an RL agent and an RL adversary and significantly improves the policy robustness in continuous control games.
Sun et al.~\citep{sun2021strongest} further extend this framework to PA-ATLA with their proposed more advanced RL attacker PA-AD. Although ATLA and PA-ATLA achieve strong empirical robustness, they require training an extra RL adversary that can be computationally and sample expensive. 
\textbf{(3)} There is another line of work studying \textit{certifiable robustness} of RL policies. 
Several works~\citep{lutjens2020certified, oikarinen2020robust, fischer2019online} computed lower bounds of the action value network $Q^\pi$ to certify robustness of action selection at every step. However, these bounds do not consider the distribution shifts caused by attacks, so some actions that appear safe for now can lead to extremely vulnerable future states and low long-term reward under future attacks. Moreover, these methods cannot apply to continuous action spaces. 
Kumar et al. and Wu et al.\citep{kumar2021policy, wu2021crop} both extend randomized smoothing~\citep{cohen2019certified} to derive robustness certificates for trained policies. But these works mostly focus on theoretical analysis, and effective robust training approaches rather than robust training.

\textbf{Adversarial Defenses against Other Adversarial Attacks.}\quad
Besides observation perturbations, attacks can happen in many other scenarios. For example, the agent's executed actions can be perturbed~\citep{xiao2019characterizing,tan2020robustifying, tessler2019action,lee2020query}. Moreover, in a multi-agent game, an agent's behavior can create adversarial perturbations to a victim agent~\citep{gleave2019adversarial}.
Pinto et al.~\citep{pinto2017robust} model the competition between the agent and the attacker as a zero-sum two-player game, and train the agent under a learned attacker to tolerate both environment shifts and adversarial disturbances.
We point out that although we mainly consider state adversaries, our \ours can be extended to action attacks as formulated in Appendix~\ref{app:extension}. 
Note that we focus on robustness against test-time attacks, different from poisoning attacks which alter the RL training process~\citep{behzadan2017vulnerability,huang2019deceptive,sun2020vulnerability,zhang2020adaptive,rakhsha2020policy}.

\textbf{Safe RL and Risk-sensitive RL.}\quad 
There are several lines of work that study RL under safety/risk constraints~\citep{Heger1994ConsiderationOR,gaskett2003reinforcement,garcia2015comprehensive,bechtle2020curious,thomas2021safe} or under intrinsic uncertainty of environment dynamics~\citep{lim2013reinforcement,Mankowitz2020Robust}. However, these works do not deal with adversarial attacks, which can be adaptive to the learned policy. More comparison between these methods and our proposed method is discussed in Section~\ref{sec:alg}.

\vspace{-0.5em}
\section{Preliminaries and Background}
\label{sec:prelim}
\setlength\abovedisplayskip{2pt}
\setlength\belowdisplayskip{2pt}
\vspace{-0.5em}
\textbf{Reinforcement Learning (RL).}\quad
An RL environment is modeled by a Markov Decision Process (MDP), denoted by a tuple $\mdp=\langle\states, \actions, P, R, \gamma\rangle$, 
where $\states$ is a state space, $\actions$ is an action space, $P: \states \times \actions \rightarrow \Delta(\states)$ is a stochastic dynamics model\footnote{$\Delta(\mathcal{X})$ denotes the space of probability distributions over $\mathcal{X}$.}, $R: \states \times \actions \rightarrow \mathbb{R}$ is a reward function and $\gamma \in[0,1)$ is a discount factor. 
An agent takes actions based on a policy $\pi: \states \rightarrow \Delta(\actions)$. For any policy, its \emph{natural performance} can be measured by the value function
$V^\pi(s) := \mathbb{E}_{P,\pi}[\sum_{t=0}^{\infty} \gamma^{t} R\left(s_t, a_t\right) \mid s_0=s]$,
and the action value function
$Q^\pi(s,a) := \mathbb{E}_{P,\pi}[\sum_{t=0}^{\infty} \gamma^{t} R\left(s_t, a_t\right) \mid s_0=s, a_0=a]$.
We call $V^\pi$ the \textit{natural value} and $Q^\pi$ the \textit{natural action value} in contrast to the values under attacks, as will be introduced in Section~\ref{sec:alg}.

\textbf{Deep Reinforcement Learning (DRL).}\quad
In large-scale problems, a policy can be parameterized by a neural network. For example, value-based RL methods (e.g. DQN~\citep{mnih2013playing}) usually fit a Q network and take the greedy policy $\pi(s)=\mathrm{argmax}_a Q(s,a)$. In actor-critic methods (e.g. PPO~\citep{schulman2017proximal}), the learner directly learns a policy network and a critic network. 
In practice, an agent usually follows a stochastic policy during training that enables exploration, and executes a trained policy deterministically in test-time, e.g. the greedy policy learned with DQN. 
Throughout this paper, we use $\pi_\theta$ to denote the training-time stochastic policy parameterized by $\theta$, while $\pi$ denotes the trained deterministic policy that maps a state to an action.

\textbf{Test-time Adversarial Attacks.}\quad 
After training, the agent is deployed into the environment and executes a pre-trained fixed policy $\pi$.
An attacker/adversary, during the deployment of the agent, may perturb the state observation of the agent/victim at every time step with a certain attack budget $\epsilon$. 
Note that the attacker only perturbs the inputs to the policy, and the underlying state in the environment does not change. This is a realistic setting because real-world observations can come from noisy sensors or be manipulated by malicious attacks. For example, an auto-driving car receives sensory observations; an attacker may add imperceptible noise to the camera, or perturb the GPS signal, although the underlying environment (the road) remains unchanged.
In this paper, we consider the $\ell_p$ \textit{thread model} which is widely used in adversarial learning literature: at step $t$, the attacker alters the observation $s_t$ into $\tilde{s}_t \in \attball(s_t)$, where $\attball(s_t)$ is a $\ell_p$ norm ball centered at $s_t$ with radius $\epsilon$. 
The above setting ($\ell_p$ constrained observation attack) is the same with many prior works~\citep{huang2017adversarial, pattanaik2017robust, zhang2020robust, zhang2021robust, sun2021strongest}.

\vspace{-0.5em}
\section{\oursfull}
\label{sec:alg}
\vspace{-0.5em}
In this section, we present \textit{\oursfull(\ours)}, a generic framework that can be fused with any DRL approach to improve the adversarial robustness of an agent.
We will introduce the three key mechanisms in \ours: \estname, \worstname, and \regname, respectively. 
Then, we will illustrate how to incorporate these mechanisms into existing DRL algorithms to improve their robustness.


\textbf{Mechanism 1: Worst-attack Value Estimation}


Traditional RL aims to learn a policy with the maximal value $V^\pi$. 
However, in a real-world problem where observations can be noisy or even adversarially perturbed, it is not enough to only consider the natural value $V^\pi$ and $Q^\pi$. As motivated in Figure~\ref{fig:example}, two policies with similar natural rewards can get totally different rewards under attacks.
To comprehensively evaluate how good a policy is in an adversarial scenario and to improve its robustness, we should be aware of the lowest possible long-term reward of the policy when its observation is adversarially perturbed with a certain attack budget $\epsilon$ at every step (with an $\ell_p$ attack model introduced in Section~\ref{sec:prelim}). 



The worst-case value of a policy is, by definition, the cumulative reward obtained under the optimal attacker. 
As justified by prior works~\citep{zhang2020robust,sun2021strongest}, for any given victim policy $\pi$ and attack budget $\epsilon>0$, there exists an optimal attacker, and finding the optimal attacker is equivalent to learning the optimal policy in another MDP. 
We denote the optimal (deterministic) attacker's policy as $h^*$. 
However, learning such an optimal attacker by RL algorithms requires extra interaction samples from the environment, due to the unknown dynamics. Moreover, learning the attacker by RL can be hard and expensive, especially when the state observation space is high-dimensional.

Instead of explicitly learning the optimal attacker with a large amount of samples, we propose to directly estimate the worst-case cumulative reward of the policy by characterizing the vulnerability of the given policy. 
We first define the \textit{\worstqname} of policy $\pi$ as $\worstqpi(s,a) := \mathbb{E}_{P} [\sum\nolimits_{t=0}^{\infty} \gamma^{t} R\left(s_t, \pi(h^*(s_t))\right) \mid s_0=s, a_0=a ].$
The \textit{\worstvname} $\worstvpi$ can be defined using $h^*$ in the same way, as shown in Definition~\ref{def:worstv} in Appendix~\ref{app:theory}.
Then, we introduce a novel operator $\worstbellpi$, namely the \textit{\bellmanname}, defined as below.

\begin{definition}[\bellmannamecap]
\label{def:bellman}
For MDP $\mathcal{M}$, given a fixed policy $\pi$ and attack radius $\epsilon$, define the \bellmanname
$\worstbellpi$ as
\begin{equation}
\label{eq:bellman}
\left(\worstbellpi Q\right)(s, a):=\mathbb{E}_{s^{\prime} \sim P(s, a)} [R(s, a)+\gamma \min_{a^{\prime} \in \mathcal{A}_{\mathrm{adv}}(s^\prime, \pi)} Q\left(s^{\prime}, a^{\prime}\right) ],
\end{equation}
where $\forall s\in\states$, $\mathcal{A}_{\mathrm{adv}}(s, \pi)$ is defined as
\begin{equation}
\label{eq:adv_action}
    \mathcal{A}_{\mathrm{adv}}(s, \pi) := \{a\in\mathcal{A}: \exists \tilde{s}\in\mathcal{B}_\epsilon(s) \text{ s.t. } \pi(\tilde{s}) = a \}.
\end{equation}
\end{definition}
\vspace{-0.5em}
Here $\mathcal{A}_{\mathrm{adv}}(s^\prime, \pi)$ denotes the set of actions an adversary can mislead the victim $\pi$ into selecting by perturbing the state $s^\prime$ into a neighboring state $\tilde{s}\in\mathcal{B}_\epsilon(s^\prime)$. This hypothetical perturbation to the \textit{future} state $s^\prime$ is the key for characterizing the worst-case long-term reward under attack. The following theorem associates the \bellmanname and the \worstqname.
\begin{theorem}[Worst-attack Bellman Operator and Worst-attack Action Value]
\label{thm:main}
For any given policy $\pi$, $\underline{\mathcal{T}}^\pi$ is a contraction whose fixed point is $\worstqpi$, the \worstqname of $\pi$ under any $\ell_p$ observation attacks with radius $\epsilon$.
\end{theorem}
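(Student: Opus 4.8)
The plan is to establish the two claims in turn: that $\underline{\mathcal{T}}^\pi$ is a $\gamma$-contraction in the sup-norm (so it has a unique fixed point $Q^\star$ by the Banach fixed-point theorem), and then that $Q^\star = \worstqpi$. For the contraction, take any two bounded $Q_1,Q_2:\states\times\actions\to\mathbb{R}$; since $R(s,a)$ cancels in the difference of the two images, $(\underline{\mathcal{T}}^\pi Q_1)(s,a)-(\underline{\mathcal{T}}^\pi Q_2)(s,a)=\gamma\,\mathbb{E}_{s'\sim P(s,a)}\bigl[\min_{a'\in\mathcal{A}_{\mathrm{adv}}(s',\pi)}Q_1(s',a')-\min_{a'\in\mathcal{A}_{\mathrm{adv}}(s',\pi)}Q_2(s',a')\bigr]$. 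Applying the elementary bound $|\min_x f-\min_x g|\le\sup_x|f-g|$ with $x$ ranging over $\mathcal{A}_{\mathrm{adv}}(s',\pi)$, then bounding the expectation and taking $\sup_{(s,a)}$, yields $\|\underline{\mathcal{T}}^\pi Q_1-\underline{\mathcal{T}}^\pi Q_2\|_\infty\le\gamma\|Q_1-Q_2\|_\infty$; as $\gamma<1$ this is the contraction, and $(\underline{\mathcal{T}}^\pi)^nQ\to Q^\star$ for any starting $Q$.

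To identify the fixed point I would introduce, for each stationary deterministic attacker $h$, the policy-evaluation operator $(\mathcal{T}^{\pi,h}Q)(s,a):=\mathbb{E}_{s'\sim P(s,a)}[R(s,a)+\gamma Q(s',\pi(h(s')))]$, which is again a $\gamma$-contraction with fixed point $Q^{\pi,h}$, the value of $\pi$ under $h$; in particular $Q^{\pi,h^*}=\worstqpi$ for the optimal (reward-minimizing) attacker $h^*$, and all operators here are monotone. For $\worstqpi\ge Q^\star$: since $\pi(h^*(s'))\in\mathcal{A}_{\mathrm{adv}}(s',\pi)$ for every $s'$, we get $\mathcal{T}^{\pi,h^*}Q\ge\underline{\mathcal{T}}^\pi Q$ pointwise for all $Q$; starting from $Q^\star$ and using $\underline{\mathcal{T}}^\pi Q^\star=Q^\star$ together with monotonicity, the iterates $(\mathcal{T}^{\pi,h^*})^nQ^\star$ are nondecreasing and bounded below by $Q^\star$, so their limit $\worstqpi$ dominates $Q^\star$. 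For $\worstqpi\le Q^\star$: let $h^{\mathrm g}$ be a greedy attacker that perturbs each $s'$ to some $\tilde s'\in\mathcal{B}_\epsilon(s')$ with $\pi(\tilde s')\in\arg\min_{a'\in\mathcal{A}_{\mathrm{adv}}(s',\pi)}Q^\star(s',a')$; then by construction $\mathcal{T}^{\pi,h^{\mathrm g}}Q^\star=\underline{\mathcal{T}}^\pi Q^\star=Q^\star$, so $Q^{\pi,h^{\mathrm g}}=Q^\star$, and optimality of $h^*$ gives $\worstqpi\le Q^{\pi,h^{\mathrm g}}=Q^\star$. Combining the two bounds gives $\worstqpi=Q^\star$, which is the claim.

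The contraction step is routine; the substance is the fixed-point identification, and the main obstacle is the upper bound, where the global optimality of $h^*$ enters and where one must ensure the greedy attacker $h^{\mathrm g}$ is well defined — i.e. that $\min_{a'\in\mathcal{A}_{\mathrm{adv}}(s',\pi)}Q^\star(s',a')$ is attained by some \emph{reachable} action and that $s'\mapsto h^{\mathrm g}(s')$ can be chosen measurably. Both hold under the standing $\ell_p$-threat-model and (finite, or suitably compact) action-space assumptions inherited from the cited works, where $\mathcal{A}_{\mathrm{adv}}(\cdot,\pi)$ is well behaved and $h^*$ exists. A slightly terser alternative avoids constructing $h^{\mathrm g}$ by invoking the Bellman optimality equation for the attacker's MDP — whose existence is precisely the fact quoted from the references — which directly yields $\worstvpi(s)=\min_{a\in\mathcal{A}_{\mathrm{adv}}(s,\pi)}\worstqpi(s,a)$ and hence $\underline{\mathcal{T}}^\pi\worstqpi=\worstqpi$; I would include whichever version reads more cleanly with the paper's conventions.
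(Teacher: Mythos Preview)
Your contraction argument is essentially identical to the paper's: cancel $R(s,a)$, apply $|\min f-\min g|\le\max|f-g|$ over $\mathcal{A}_{\mathrm{adv}}(s',\pi)$, then take expectation and sup.

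For the fixed-point identification you are considerably more careful than the paper. The paper dispatches it in one line: since the optimal attacker $h^*$ at each state picks the action in $\mathcal{A}_{\mathrm{adv}}$ that minimises the continuation value, $\worstqpi$ satisfies $\worstqpi=\underline{\mathcal{T}}^\pi\worstqpi$ directly, and uniqueness from the contraction finishes it. This is exactly your ``terser alternative'' at the end, invoked without further justification. Your main route---introducing $\mathcal{T}^{\pi,h}$, sandwiching via $h^*$ and a greedy $h^{\mathrm g}$ with monotonicity---is a genuinely different (and more rigorous) argument: it does not presuppose that the optimal attacker is greedy with respect to $\worstqpi$, but derives it, and it makes explicit the measurable-selection and attainment issues the paper glosses over. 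The cost is length; the benefit is that it would survive scrutiny in settings (continuous actions, non-finite $\mathcal{A}_{\mathrm{adv}}$) where the one-line version needs the very regularity assumptions you flag.
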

\vspace{-0.5em}

Theorem~\ref{thm:main} proved in Appendix~\ref{app:theory} suggests that the lowest possible cumulative reward of a policy under bounded observation attacks can be computed by \bellmanname. The corresponding \worstvname $\worstvpi$ can be obtained by $\worstvpi(s)=\min_{a\in\mathcal{A}_{\mathrm{adv}}(s, \pi)} \worstqpi(s,a)$.

\textbf{How to Compute $\advaction$.}\quad 
To obtain $\mathcal{A}_{\mathrm{adv}}(s,\pi)$, we need to identify the actions that can be the outputs of the policy $\pi$ when the input state $s$ is perturbed within $\attball(s)$. This can be solved by commonly-used convex relaxation of neural networks~\citep{gowal2019scalable,zhang2018finding,wong2018provable,zhang2020towards,gowal2018effectiveness}, where layer-wise lower and upper bounds of the neural
network are derived.
That is, we calculate $\overline{\pi}$ and $\underline{\pi}$ such that $\overline{\pi}(s)\geq \pi(\hat{s}) \geq \underline{\pi}(s), \forall \hat{s}\in\attball(s)$. 
With such a relaxation, we can obtain a superset of $\advaction$, namely $\aadvaction$. Then, the fixed point of Equation~\eqref{eq:bellman} with $\advaction$ being replaced by $\aadvaction$ becomes a lower bound of the \worstqname.
For a continuous action space, $\aadvaction(s,\pi)$ contains actions bounded by $\overline{\pi}(s)$ and $\underline{\pi}(s)$. 
For a discrete action space, we can first compute the maximal and minimal probabilities of taking each action, and derive the set of actions that are likely to be selected.
The computation of $\aadvaction$ is not expensive, as
there are many efficient convex relaxation methods~\citep{mirman2018differentiable,zhang2020towards} which compute $\overline{\pi}$ and $\underline{\pi}$ with only constant-factor more computations than directly computing $\pi(s)$. 
Experiment in Section~\ref{sec:exp} verifies the efficiency of our approach, where we use a well-developed toolbox $\mathrm{auto\_LiRPA}$~\citep{xu2020automatic} to calculate the convex relaxation.
More implementation details and explanations are provided in Appendix~\ref{app:ibp}.
\textbf{Estimating Worst-attack Value.}\quad
Note that the \bellmanname $\worstbellpi$ is similar to the optimal Bellman operator $\mathcal{T}^*$, although it uses $\min_{a\in\advaction}$ instead of $\max_{a\in\actions}$. Therefore, once we identify $\advaction$ as introduced above, it is straightforward to compute the \worstqname using Bellman backups. 
To model the \worstqname, we train a network named \textit{\worstcriticname}, denoted by {\small{$\worstcritic$}}, where $\phi$ is the parameterization. 
Concretely, for any mini-batch $\{s_t,a_t,r_t,s_{t+1}\}_{t=1}^N$, {\small{$\worstcritic$}} is optimized by minimizing the following estimation loss:
\setlength\abovedisplayskip{2pt}
\setlength\belowdisplayskip{2pt}
\begin{align}
    \lossest(\worstcritic) \! :=\! \frac{1}{N} \sum_{t=1}^N(\underline{y}_t \!-\worstcritic(s_t,a_t))^{2}, 
    \text{where }\underline{y}_t \! = r_t + \gamma \min_{\hat{a}\in\advaction(s_{t+1},\pi)} \worstcritic(s_{t+1},\hat{a}). \label{loss:est} 
\end{align}
For a discrete action space, $\advaction$ is a discrete set and solving {\small{$\underline{y}_t$}} is straightforward.
For a continuous action space, we use gradient descent to approximately find the minimizer $\hat{a}$. 
Since $\advaction$ is in general small, this minimization is usually easy to solve. In MuJoCo, we find that 50-step gradient descent already converges to a good solution with little computational cost, as detailed in Appendix~\ref{app:exp:eff}.

\textbf{Differences with Worst-case Value Estimation in Related Work.} 
Our proposed \bellmanname is different from the worst-case Bellman operator in the literature of risk-sensitive RL~\citep{Heger1994ConsiderationOR,gaskett2003reinforcement,tamar2013scaling,garcia2015comprehensive,bechtle2020curious,thomas2021safe}, whose goal is to avoid unsafe trajectories under the intrinsic uncertainties of the MDP. 
These inherent uncertainties of the environment are independent of the learned policy. 
In contrast, our focus is to defend against adversarial perturbations created by malicious attackers that can be \emph{adaptive} to the policy. 
The GWC reward proposed by \cite{oikarinen2020robust} also estimates the worst-case reward of a policy under state perturbations. But their evaluation is based on a greedy strategy and requires interactions with the environment, which is different from our estimation.

\textbf{Mechanism 2: Worst-case-aware Policy Optimization}





So far we have introduced how to evaluate the \worstvname of a policy by learning a \worstcriticname. 
Inspired by the actor-critic framework, where the actor policy network $\pi_\theta$ is optimized towards a direction that the critic value increases the most, we can regard \worstcriticname as a special critic that directs the actor to increase the \worstvname. That is, we encourage the agent to select an action with a higher \worstqname, by minimizing the worst-attack policy loss below:
\setlength\abovedisplayskip{2pt}
\setlength\belowdisplayskip{2pt}
\begin{equation}
\label{loss:worst}
    \lossworst(\pi_\theta;\worstcritic) := - \frac{1}{N} \sum_{t=1}^N \sum_{a\in\actions} \pi_\theta(a|s_t) \worstcritic(s_t, a),
\end{equation}
where $\worstcritic$ is the \worstcriticname learned via $\lossest$ introduced in Equation~\eqref{loss:est}.
Note that $\lossworst$ is a general form, while the detailed implementation of the worst-attack policy optimization can vary depending on the architecture of $\pi_\theta$ in the base RL algorithm (e.g. PPO has a policy network, while DQN acts using the greedy policy induced by a Q network). In Appendix~\ref{app:ppo} and Appendix~\ref{app:dqn}, we illustrate how to implement $\lossworst$ for PPO and DQN as two examples.

The proposed \worstname has several \textbf{merits} compared to prior ATLA~\citep{zhang2021robust} and PA-ATLA~\citep{sun2021strongest} methods which alternately train the agent and an RL attacker. 
\textbf{(1)} Learning the optimal attacker $h^*$ requires collecting extra samples using the current policy (on-policy estimation). In contrast, {\small{$\worstcritic$}} can be learned using off-policy samples, e.g., historical samples in the replay buffer, and thus is more suitable for training where the policy changes over time. ({\small{$\worstcritic$}} depends on the current policy via the computation of {\small{$\advaction$}}.)
\textbf{(2)} We properly exploit the policy function that is being trained by computing the set of possibly selected actions {\small{$\aadvaction$}} for any state. In contrast, ATLA~\citep{zhang2021robust} learns an attacker by treating the current policy as a black box, ignoring the intrinsic properties of the policy. PA-ATLA~\citep{sun2021strongest}, although assumes white-box access to the victim policy, also needs to explore and learn from extra on-policy interactions. 
\textbf{(3)} The attacker trained with DRL methods, namely $\hat{h}^*$, is not guaranteed to converge to an optimal solution, such that the performance of $\pi$ estimated under $\hat{h}^*$ can be overly optimistic.
Our estimation, as mentioned in Mechanism 1, computes a lower bound of $\worstqpi$ and thus can better indicate the robustness of a policy.

\textbf{Mechanism 3: Value-enhanced State Regularization}

As discussed in Section~\ref{sec:intro}, the vulnerability of a deep policy comes from both the policy's intrinsic vulnerability with the RL dynamics and the DNN approximator. The first two mechanisms of \ours mainly focus on the policy's intrinsic vulnerability, i.e., let the policy select actions that are less vulnerable to possible attacks in all future steps. However, if a bounded state perturbation can cause the network to output a very different action, then the $\advaction$ set will be large and $\worstqpi$ can thus be low. Therefore, it is also important to encourage the trained policy to output similar actions for the clean state $s$ and any $\tilde{s}\in\attball(s)$, as is done in prior work~\citep{zhang2020robust,shen2020deep,fischer2019online}. 

But different from these prior methods, we note that different states should be treated differently. Some states are ``critical'' where selecting a bad action will result in catastrophic consequences. For example, when the agent gets close to the bomb in Figure~\ref{fig:example}, we should make the network more resistant to adversarial state perturbations.
To differentiate states based on their impacts on future reward, we propose to measure the importance of states with Definition~\ref{def:weight} below.

\begin{definition}[State Importance Weight]
\label{def:weight}
\!\!Define state importance weight of $s\in\states$ for policy $\pi$ as 
\begin{equation}
\setlength\abovedisplayskip{2pt}
\setlength\belowdisplayskip{2pt}
    w(s) = \max_{a_1 \in \mathcal{A}} Q^\pi(s, a_1) - \min_{a_2 \in \mathcal{A}} Q^\pi(s, a_2).\label{eq:weight}
\end{equation}
\end{definition}
\vspace{-1em}

\begin{wrapfigure}{r}{0.27\textwidth}
\centering
\vspace{-1em}
    \begin{subfigure}[t]{0.13\columnwidth}
        \centering
        \includegraphics[width=\columnwidth]{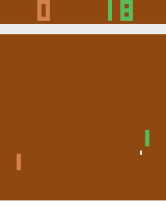}
        \vspace{-0.5em}
    \end{subfigure}
    \hfill
    \begin{subfigure}[t]{0.13\columnwidth}
        \centering
        \includegraphics[width=\columnwidth]{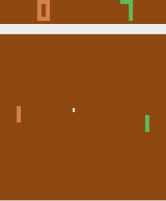}
        \vspace{-0.5em}
    \end{subfigure}
    \vspace{-1em}
    \caption{\small{States in Pong with \\
    \textbf{(left)} high weight $w(s)$ and \\
    \textbf{(right)} low weight $w(s)$.}}
    \label{fig:pong}
    \vspace{-1em}
\end{wrapfigure}
To justify whether Definition~\ref{def:weight} can characterize state importance, we train a DQN network in an Atari game Pong, and show the states with the highest weight and the lowest weight in Figure~\ref{fig:pong}, among many state samples. 
We can see that the state with higher weight in Figure~\ref{fig:pong}(left) is indeed crucial for the game, as the green agent paddle is close to the ball. Conversely, a less-important state in Figure~\ref{fig:pong}(right) does not have significantly different future rewards under different actions.
Computing $w(s)$ is easy in a discrete action space, while in a continuous action space, one can use gradient descent to approximately find the maximal and the minimal Q values for a state. 
Similar to the computation of Equation~\eqref{loss:est} with a continuous action space, we find that a 50-step gradient descent works well in experiments.


By incorporating the state importance weight $w(s)$, we regularize the policy network and let it pay more attention to more crucial states, by minimizing the following loss:
\setlength\abovedisplayskip{2pt}
\setlength\belowdisplayskip{2pt}
\begin{equation}
\label{loss:reg}
    \lossreg(\pi_\theta) = \frac{1}{N} \sum_{t=1}^N w(s_t) \max_{\tilde{s}_t\in\attball(s_t)} \mathsf{Dist} (\pi_\theta(s_t), \pi_\theta(\tilde{s}_t)),
\end{equation}
where $\mathsf{Dist}$ can be any distance measure between two distributions (e.g., KL-divergence). Minimizing $\lossreg$ can result in a smaller $\advaction$, and thus the \worstvname will be closer to the natural value.


\textbf{\ours: A Generic Robust Training Framework}


\begin{wrapfigure}{r}{0.55\textwidth}
\vspace{-2em}
    \centering
    \includegraphics[width=0.54\columnwidth]{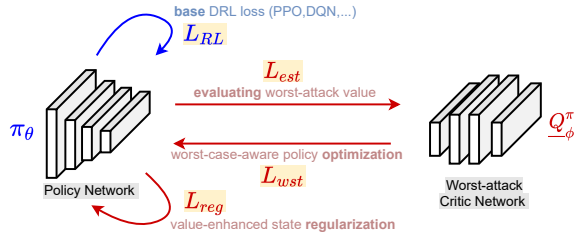}
    \vspace{-0.5em}
    \caption{Training architecture of \ours. (Components proposed in this paper are colored as red.) 
    }
    \label{fig:diagram_loss}
\vspace{-1em}
\end{wrapfigure}

So far we have introduced three key mechanisms and their loss functions, $\lossest$ in Equation~\eqref{loss:est}, $\lossworst$ in Equation~\eqref{loss:worst} and $\lossreg$ in Equation~\eqref{loss:reg}.
Then, our robust training framework \ours combines these losses with any base RL algorithm.
To be more specific, as shown in Figure~\ref{fig:diagram_loss}, for any base RL algorithm that trains policy $\pi_\theta$ using loss $\lossrl$, we learn an extra \worstcriticname network $\worstcritic$ by minimizing
\setlength\abovedisplayskip{2pt}
\setlength\belowdisplayskip{2pt}
\begin{equation}
\label{loss:critic}
    \loss_{\worstcritic} := \lossest(\worstcritic),
\end{equation}
and combine $\lossworst$ and $\lossreg$ with $\lossrl$ to optimize $\pi_\theta$ by minimizing
\setlength\abovedisplayskip{2pt}
\setlength\belowdisplayskip{2pt}
\begin{equation}
\label{loss:policy}
    \loss_{\pi_\theta} := \lossrl(\pi_\theta) + \weightworst \lossworst(\pi_\theta;\worstcritic) + \weightreg \lossreg(\pi_\theta),
\end{equation}
where $\weightworst$ and $\weightreg$ are hyperparameters balancing between natural performance and robustness. 
Note that $\worstcritic$ is trained together but independently with $\pi_\theta$ using historical transition samples, so \ours does not require extra samples from the environment. 
\ours can also be interpreted from a geometric perspective based on prior RL polytope theory~\citep{dadashi2019value,sun2021strongest} as detailed in Appendix~\ref{app:understand}.

Our \ours is a generic robust training framework that can be used to robustify existing DRL algorithms. 
We provide two case studies: 
\textbf{(1)} combining \ours with a policy-based algorithm PPO~\citep{schulman2017proximal}, namely \textit{\ourppo}, and 
\textbf{(2)} combining \ours with a value-based algorithm DQN~\citep{mnih2013playing}, namely \textit{\ourdqn}. 
The pseudocodes of \ourppo and \ourdqn are illustrated in Appendix~\ref{app:ppo} and Appendix~\ref{app:dqn}.
The application of \ours to other DRL methods is then straightforward, since most DRL methods are either policy-based or value-based.
Next, we show by experiments that \ourppo and \ourdqn achieve state-of-the-art robustness with superior efficiency, in various continuous control tasks and video game environments. 
We also empirically verify the effectiveness of each of the 3 mechanisms of \ours and their weights by ablation study in Section~\ref{sec:result_effect}.

\vspace{-0.5em}
\section{Experiments and Discussion}
\label{sec:exp}
\vspace{-0.5em}



In this section, our experimental evaluations on various MuJoCo and Atari environments aim to study the following questions:
\textbf{(1)} Can \ours learn policies with better \textbf{robustness} under existing strong adversarial attacks?
\textbf{(2)} Can \ours maintain \textbf{natural performance} when improving robustness?
\textbf{(3)} Can \ours learn more \textbf{efficiently} during robust training?
\textbf{(4)} Is each mechanism in \ours \textbf{effective}? 
Problem (1), (2) and (3) are answered in Section~\ref{sec:result_main} with detailed empirical results, and problem (4) is studied in Section~\ref{sec:result_effect} via ablation experiments.

\subsection{Experiments and Evaluations}
\label{sec:result_main}

\textbf{Environments.}\quad
Following most prior works~\citep{zhang2020robust,zhang2021robust,oikarinen2020robust} and the released implementation, we apply our \ours to PPO~\citep{schulman2017proximal} on 4 MuJoCo tasks with continuous action spaces, including Hopper, Walker2d, Halfcheetah and Ant, and to DQN~\citep{mnih2013playing} agents on 4 Atari games including Pong, Freeway, BankHeist and RoadRunner, which have high dimensional pixel inputs and discrete action spaces. 

\textbf{Baselines and Implementation.}\quad
We compare our algorithm with several state-of-the-art robust training methods, including
(1) \emph{SA-PPO/SA-DQN}~\cite{zhang2020robust}: regularizing policy networks by convex relaxation.
(2) \emph{ATLA-PPO}~\cite{zhang2021robust}: alternately training an agent and an RL attacker.
(3) \emph{PA-ATLA-PPO}~\cite{sun2021strongest}: alternately training an agent and a more advanced RL attacker PA-AD. 
(4) \emph{RADIAL-PPO/RADIAL-DQN}~\cite{oikarinen2020robust}: optimizing policy network by designed adversarial loss functions based on robustness bounds.
SA and RADIAL have both PPO and DQN versions, which are compared with our \ourppo and \ourdqn.
But ATLA and PA-ATLA do not provide DQN versions, since alternately training on DQN can be expensive as explained in the original papers~\citep{sun2021strongest}. (PA-ATLA has an A2C version, which we compare in Appendix~\ref{app:exp:res}.)
Therefore, we reproduce their ATLA-PPO and PA-ATLA-PPO results and compare them with our \ourppo.
More implementation and hyperparameter details are provided in Appendix~\ref{app:exp:imp}.


\textbf{Case I: Robust PPO for MuJoCo Continuous Control}


\textbf{Evaluation Metrics.}\quad
To reflect both the natural performance and robustness of trained agents, we report the average episodic rewards under no attack and against various attacks. For a comprehensive robustness evaluation, we attack the trained robust models with multiple existing attack methods, including: 
(1) \textit{MaxDiff} \cite{zhang2020robust} (maximal action difference), 
(2) \textit{Robust Sarsa (RS)} \cite{zhang2020robust} (attacking with a robust action-value function), 
(3) \textit{SA-RL} \citep{zhang2020robust} (finding the optimal state adversary) and 
(4) \textit{PA-AD} \citep{sun2021strongest} (the \textit{existing strongest attack} by learning the optimal policy adversary with RL). 
For a clear comparison, we use the same attack radius $\epsilon$ as in most baselines~\citep{zhang2020robust,zhang2021robust,sun2021strongest}.

\textbf{Performance and Robustness of \ourppo}\quad
Figure~\ref{fig:mujoco_curve} (left four columns) shows performance curves during training under four different adversarial attacks. 
Among all four attack algorithms, \ourppo converges much faster than baselines, and often achieves the best asymptotic robust performance, especially under the strongest PA-AD attack. It is worth emphasizing that since we train a robust agent without explicitly learning an RL attacker, our method not only obtains stronger robustness and much higher efficiency, but also a more general defense: \ourppo obtains comprehensively superior performance against a variety of attacks compared against existing SOTA algorithms based on learned attackers (ATLA-PPO, PA-ATLA-PPO). 
Additionally, in our experiments, \ourppo learns relatively more universal defensive behaviors as shown in Figure~\ref{fig:walker}, which can physically explain why our algorithm can defend against diverse attacks. We provide policy demonstrations in multiple tasks in our supplementary materials.\\
The comparison of natural performance and the worst-case performance appears in Figure~\ref{fig:mujoco_curve} (right). We see that \ourppo maintains competitive natural rewards under no attack compared with other baselines, which demonstrates that our algorithm gains more robustness without losing too much natural performance. 
The full results of baselines and our algorithm under different attack evaluations are provided by Table~\ref{tab:mujoco_app} in Appendix~\ref{app:exp:res} (including performance under random attacks).

\begin{figure}[!t]
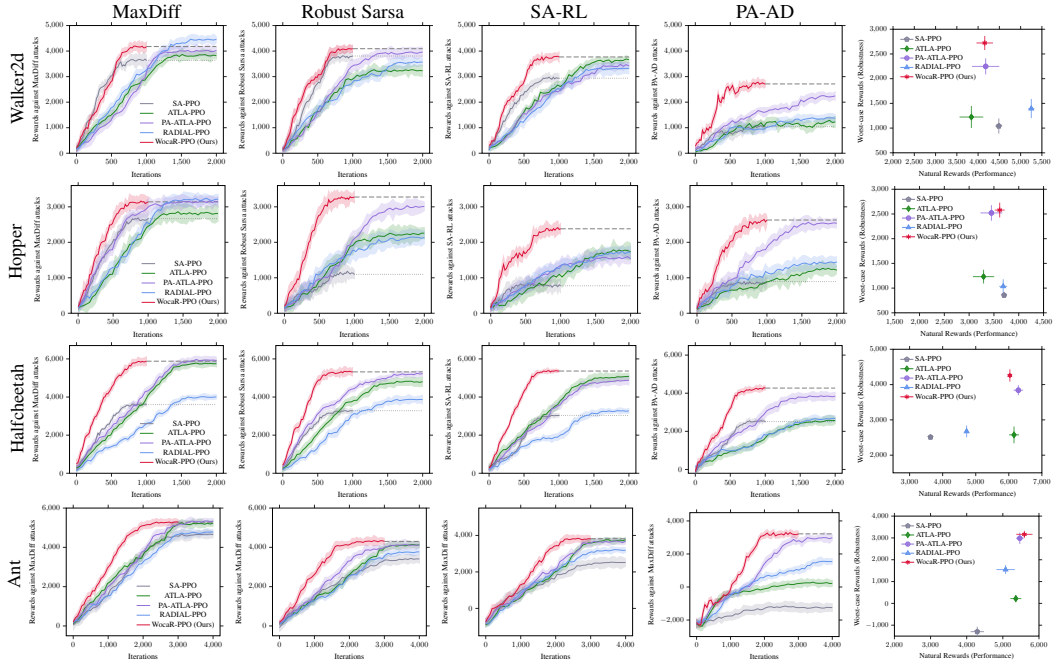

    \centering
    \rotatebox{90}{\scriptsize{\qquad \quad Walker2d}}
    \begin{subfigure}[t]{0.19\textwidth}
        \centering
        \resizebox{\textwidth}{!}{\input{\fighome/curve_atla/walker_mad.tex}}
        \vspace{-1.5em}
    \end{subfigure}
    \begin{subfigure}[t]{0.19\textwidth}
        \centering
        \resizebox{\textwidth}{!}{\input{\fighome/curve_atla/walker_rs.tex}}
        \vspace{-1.5em}
    \end{subfigure}
    \begin{subfigure}[t]{0.19\textwidth}
        \centering
        \resizebox{\textwidth}{!}{\input{\fighome/curve_atla/walker_sa.tex}}
        \vspace{-1.5em}
    \end{subfigure}
    \begin{subfigure}[t]{0.19\textwidth}
        \centering
        \resizebox{\textwidth}{!}{\input{\fighome/curve_atla/walker_pa.tex}}
        \vspace{-1.5em}
    \end{subfigure}
    \begin{subfigure}[t]{0.19\textwidth}
        \centering
        \resizebox{\textwidth}{!}{
\begin{tikzpicture}

\definecolor{color0}{rgb}{0.917647058823529,0.917647058823529,0.949019607843137}
\definecolor{color1}{rgb}{0.505434799735519,0.504732659736887,0.601761241571282}
\definecolor{color2}{rgb}{0.133333333333333,0.545098039215686,0.133333333333333}
\definecolor{color3}{rgb}{0.576470588235294,0.43921568627451,0.858823529411765}
\definecolor{color4}{rgb}{0.392156862745098,0.584313725490196,0.929411764705882}
\definecolor{color5}{rgb}{0.862745098039216,0.0784313725490196,0.235294117647059}

\begin{axis}[
axis background/.style={fill=white},
axis line style={black},
legend cell align={left},
legend style={
  fill opacity=0.8,
  draw opacity=1,
  text opacity=1,
  at={(0.03,0.97)},
  anchor=north west,
  draw=none,
  fill=none
},
tick align=outside,
x grid style={white},
xlabel={Natural Rewards (Performance)},
xmajorgrids,
xmin=2000, xmax=5500,
xtick style={color=white!15!black},
y grid style={white},
ylabel={Worst-case Rewards (Robustness)},
ymajorgrids,
ymin=500, ymax=3000,
ytick style={color=white!15!black}
]
\path [draw=color1, draw opacity=0.8, thick]
(axis cs:4406,1042)
--(axis cs:4568,1042);

\path [draw=color1, draw opacity=0.8, thick]
(axis cs:4487,889)
--(axis cs:4487,1195);

\path [draw=color2, draw opacity=0.8, thick]
(axis cs:3563,1224)
--(axis cs:4121,1224);

\path [draw=color2, draw opacity=0.8, thick]
(axis cs:3842,1001)
--(axis cs:3842,1447);

\path [draw=color3, draw opacity=0.8, thick]
(axis cs:3858,2248)
--(axis cs:4498,2248);

\path [draw=color3, draw opacity=0.8, thick]
(axis cs:4178,2086)
--(axis cs:4178,2410);

\path [draw=color4, draw opacity=0.8, thick]
(axis cs:5181,1395)
--(axis cs:5321,1395);

\path [draw=color4, draw opacity=0.8, thick]
(axis cs:5251,1201)
--(axis cs:5251,1589);

\path [draw=color5, draw opacity=0.8, thick]
(axis cs:3961,2722)
--(axis cs:4351,2722);

\path [draw=color5, draw opacity=0.8, thick]
(axis cs:4156,2580)
--(axis cs:4156,2864);

\addplot [line width=1pt, color1, mark=pentagon*, mark size=3.2, mark options={solid}]
table{%
x  y
4487 1042
4487 1042
};
\addlegendentry{SA-PPO}

\addplot [line width=1pt, color2, mark=diamond*, mark size=3.5, mark options={solid}]
table{%
x  y
3842 1224
3842 1224
};
\addlegendentry{ATLA-PPO}
\addplot [line width=1pt, color3, mark=*, mark size=2.9, mark options={solid}]
table{%
x  y
4178 2248
4178 2248
};
\addlegendentry{PA-ATLA-PPO}
\addplot [line width=1pt, color4, mark=triangle*, mark size=3.3]
table{%
x  y
5251 1395
5251 1395
};
\addlegendentry{RADIAL-PPO}
\addplot [line width=1pt, color5, mark=asterisk, mark size=3.2]
table{%
x  y
4156 2722
4156 2722
};
\addlegendentry{\ourppo (Ours)}
\end{axis}

\end{tikzpicture}}
        \vspace{-1.5em}
    \end{subfigure}\\
    \rotatebox{90}{\scriptsize{\qquad \quad \ Hopper}}
    \begin{subfigure}[t]{0.19\textwidth}
        \centering
        \resizebox{\textwidth}{!}{\input{\fighome/curve_atla/hopper_mad.tex}}
        \vspace{-1.5em}
    \end{subfigure}
    \begin{subfigure}[t]{0.19\textwidth}
        \centering
        \resizebox{\textwidth}{!}{\input{\fighome/curve_atla/hopper_rs.tex}}
        \vspace{-1.5em}
    \end{subfigure}
    \begin{subfigure}[t]{0.19\textwidth}
        \centering
        \resizebox{\textwidth}{!}{\input{\fighome/curve_atla/hopper_sa.tex}}
        \vspace{-1.5em}
    \end{subfigure}
    \begin{subfigure}[t]{0.19\textwidth}
        \centering
        \resizebox{\textwidth}{!}{\input{\fighome/curve_atla/hopper_pa.tex}}
        \vspace{-1.5em}
    \end{subfigure}
    \begin{subfigure}[t]{0.19\textwidth}
        \centering
        \resizebox{\textwidth}{!}{
\begin{tikzpicture}

\definecolor{color0}{rgb}{0.917647058823529,0.917647058823529,0.949019607843137}
\definecolor{color1}{rgb}{0.505434799735519,0.504732659736887,0.601761241571282}
\definecolor{color2}{rgb}{0.133333333333333,0.545098039215686,0.133333333333333}
\definecolor{color3}{rgb}{0.576470588235294,0.43921568627451,0.858823529411765}
\definecolor{color4}{rgb}{0.392156862745098,0.584313725490196,0.929411764705882}
\definecolor{color5}{rgb}{0.862745098039216,0.0784313725490196,0.235294117647059}

\begin{axis}[
axis background/.style={fill=white},
axis line style={black},
legend cell align={left},
legend style={
  fill opacity=0.8,
  draw opacity=1,
  text opacity=1,
  at={(0.03,0.97)},
  anchor=north west,
  draw=none,
  fill=none
},
tick align=outside,
x grid style={white},
xlabel={Natural Rewards (Performance)},
xmajorgrids,
xmin=1500, xmax=4500,
xtick style={color=white!15!black},
y grid style={white},
ylabel={Worst-case Rewards (Robustness)},
ymajorgrids,
ymin=500, ymax=3000,
ytick style={color=white!15!black}
]
\path [draw=color1, draw opacity=0.8, thick]
(axis cs:3700,856)
--(axis cs:3710,856);

\path [draw=color1, draw opacity=0.8, thick]
(axis cs:3705,835)
--(axis cs:3705,877);

\path [draw=color2, draw opacity=0.8, thick]
(axis cs:3077,1232)
--(axis cs:3505,1232);

\path [draw=color2, draw opacity=0.8, thick]
(axis cs:3291,1093)
--(axis cs:3291,1371);

\path [draw=color3, draw opacity=0.8, thick]
(axis cs:3229,2521)
--(axis cs:3669,2521);

\path [draw=color3, draw opacity=0.8, thick]
(axis cs:3449,2363)
--(axis cs:3449,2679);

\path [draw=color4, draw opacity=0.8, thick]
(axis cs:3607,1036)
--(axis cs:3767,1036);

\path [draw=color4, draw opacity=0.8, thick]
(axis cs:3687,894)
--(axis cs:3687,1178);

\path [draw=color5, draw opacity=0.8, thick]
(axis cs:3517,2579)
--(axis cs:3715,2579);

\path [draw=color5, draw opacity=0.8, thick]
(axis cs:3616,2429)
--(axis cs:3616,2729);

\addplot [line width=1pt, color1, mark=pentagon*, mark size=3.2, mark options={solid}]
table{%
x  y
3705 856
3705 856
};
\addlegendentry{SA-PPO}

\addplot [line width=1pt, color2, mark=diamond*, mark size=3.5, mark options={solid}]
table{%
x  y
3291 1232
3291 1232
};
\addlegendentry{ATLA-PPO}
\addplot [line width=1pt, color3, mark=*, mark size=2.9, mark options={solid}]
table{%
x  y
3449 2521
3449 2521
};
\addlegendentry{PA-ATLA-PPO}
\addplot [line width=1pt, color4, mark=triangle*, mark size=3.3]
table{%
x  y
3687 1036
3687 1036
};
\addlegendentry{RADIAL-PPO}
\addplot [line width=1pt, color5, mark=asterisk, mark size=3.2]
table{%
x  y
3616 2579
3616 2579
};
\addlegendentry{\ourppo (Ours)}
\end{axis}

\end{tikzpicture}}
        \vspace{-1.5em}
    \end{subfigure}\\
    \rotatebox{90}{\scriptsize{\quad \quad \ \ Halfcheetah}}
    \begin{subfigure}[t]{0.19\textwidth}
        \centering
        \resizebox{\textwidth}{!}{\input{\fighome/curve_atla/halfcheetah/halfcheetah_mad.tex}}
        \vspace{-1.5em}
    \end{subfigure}
    \begin{subfigure}[t]{0.19\textwidth}
        \centering
        \resizebox{\textwidth}{!}{\input{\fighome/curve_atla/halfcheetah/halfcheetah_rs.tex}}
        \vspace{-1.5em}
    \end{subfigure}
    \begin{subfigure}[t]{0.19\textwidth}
        \centering
        \resizebox{\textwidth}{!}{\input{\fighome/curve_atla/halfcheetah/halfcheetah_sa.tex}}
        \vspace{-1.5em}
    \end{subfigure}
    \begin{subfigure}[t]{0.19\textwidth}
        \centering
        \resizebox{\textwidth}{!}{\input{\fighome/curve_atla/halfcheetah/halfcheetah_pa.tex}}
        \vspace{-1.5em}
    \end{subfigure}
    \begin{subfigure}[t]{0.19\textwidth}
        \centering
        \resizebox{\textwidth}{!}{
\begin{tikzpicture}

\definecolor{color0}{rgb}{0.917647058823529,0.917647058823529,0.949019607843137}
\definecolor{color1}{rgb}{0.505434799735519,0.504732659736887,0.601761241571282}
\definecolor{color2}{rgb}{0.133333333333333,0.545098039215686,0.133333333333333}
\definecolor{color3}{rgb}{0.576470588235294,0.43921568627451,0.858823529411765}
\definecolor{color4}{rgb}{0.392156862745098,0.584313725490196,0.929411764705882}
\definecolor{color5}{rgb}{0.862745098039216,0.0784313725490196,0.235294117647059}
\begin{axis}[
axis background/.style={fill=white},
axis line style={black},
legend cell align={left},
legend style={
  fill opacity=0.8,
  draw opacity=1,
  text opacity=1,
  at={(0.03,0.97)},
  anchor=north west,
  draw=none,
  fill=none
},
tick align=outside,
x grid style={white},
xlabel={Natural Rewards (Performance)},
xmajorgrids,
xmin=2500, xmax=7000,
xtick style={color=white!15!black},
y grid style={white},
ylabel={Worst-case Rewards (Robustness)},
ymajorgrids,
ymin=1500, ymax=5000,
ytick style={color=white!15!black}
]
\path [draw=color1, draw opacity=0.8, thick]
(axis cs:3612, 2512)
--(axis cs:3652, 2512);

\path [draw=color1, draw opacity=0.8, thick]
(axis cs:3632, 2496)
--(axis cs:3632, 2528);

\path [draw=color2, draw opacity=0.8, thick]
(axis cs:6005, 2576)
--(axis cs:6309, 2576);

\path [draw=color2, draw opacity=0.8, thick]
(axis cs:6157, 2338)
--(axis cs:6157, 2814);

\path [draw=color3, draw opacity=0.8, thick]
(axis cs:6147, 3840)
--(axis cs:6431, 3840);

\path [draw=color3, draw opacity=0.8, thick]
(axis cs:6289, 3698)
--(axis cs:6289, 3982);

\path [draw=color4, draw opacity=0.8, thick]
(axis cs:4710, 2674)
--(axis cs:4738, 2674);

\path [draw=color4, draw opacity=0.8, thick]
(axis cs:4724, 2506)
--(axis cs:4724, 2842);

\path [draw=color5, draw opacity=0.8, thick]
(axis cs:5964, 4258)
--(axis cs:6100, 4258);

\path [draw=color5, draw opacity=0.8, thick]
(axis cs:6032, 4086)
--(axis cs:6032, 4430);

\addplot [line width=1pt, color1, mark=pentagon*, mark size=3.2, mark options={solid}]
table{%
x  y
3632 2512
3632 2512
};
\addlegendentry{SA-PPO}

\addplot [line width=1pt, color2, mark=diamond*, mark size=3.5, mark options={solid}]
table{%
x  y
6157 2576
6157 2576
};
\addlegendentry{ATLA-PPO}
\addplot [line width=1pt, color3, mark=*, mark size=2.9, mark options={solid}]
table{%
x  y
6289 3840
6289 3840
};
\addlegendentry{PA-ATLA-PPO}
\addplot [line width=1pt, color4, mark=triangle*, mark size=3.3]
table{%
x  y
4724 2674
4724 2674
};
\addlegendentry{RADIAL-PPO}
\addplot [line width=1pt, color5, mark=asterisk, mark size=3.2]
table{%
x  y
6032 4258
6032 4258
};
\addlegendentry{\ourppo (Ours)}
\end{axis}

\end{tikzpicture}}
        \vspace{-1.5em}
    \end{subfigure}\\
    \rotatebox{90}{\scriptsize{\qquad \qquad Ant}}
    \begin{subfigure}[t]{0.19\textwidth}
        \centering
        \resizebox{\textwidth}{!}{\input{\fighome/curve_atla/ant/ant_mad.tex}}
        \vspace{-1.5em}
    \end{subfigure}
    \begin{subfigure}[t]{0.19\textwidth}
        \centering
        \resizebox{\textwidth}{!}{\input{\fighome/curve_atla/ant/ant_rs.tex}}
        \vspace{-1.5em}
    \end{subfigure}
    \begin{subfigure}[t]{0.19\textwidth}
        \centering
        \resizebox{\textwidth}{!}{\input{\fighome/curve_atla/ant/ant_sa.tex}}
        \vspace{-1.5em}
    \end{subfigure}
    \begin{subfigure}[t]{0.19\textwidth}
        \centering
        \resizebox{\textwidth}{!}{\input{\fighome/curve_atla/ant/ant_pa.tex}}
        \vspace{-1.5em}
    \end{subfigure}
    \begin{subfigure}[t]{0.19\textwidth}
        \centering
        \resizebox{\textwidth}{!}{
\begin{tikzpicture}

\definecolor{color0}{rgb}{0.917647058823529,0.917647058823529,0.949019607843137}
\definecolor{color1}{rgb}{0.505434799735519,0.504732659736887,0.601761241571282}
\definecolor{color2}{rgb}{0.133333333333333,0.545098039215686,0.133333333333333}
\definecolor{color3}{rgb}{0.576470588235294,0.43921568627451,0.858823529411765}
\definecolor{color4}{rgb}{0.392156862745098,0.584313725490196,0.929411764705882}
\definecolor{color5}{rgb}{0.862745098039216,0.0784313725490196,0.235294117647059}

\begin{axis}[
axis background/.style={fill=white},
axis line style={black},
legend cell align={left},
legend style={
  fill opacity=0.8,
  draw opacity=1,
  text opacity=1,
  at={(0.03,0.97)},
  anchor=north west,
  draw=none,
  fill=none
},
tick align=outside,
x grid style={white},
xlabel={Natural Rewards (Performance)},
xmajorgrids,
xmin=2000, xmax=6000,
xtick style={color=white!15!black},
y grid style={white},
ylabel={Worst-case Rewards (Robustness)},
ymajorgrids,
ymin=-1500, ymax=4000,
ytick style={color=white!15!black}
]
\path [draw=color1, draw opacity=0.8, thick]
(axis cs:4108,-1296)
--(axis cs:4476,-1296);

\path [draw=color1, draw opacity=0.8, thick]
(axis cs:4292,-1519)
--(axis cs:4292,-1073);

\path [draw=color2, draw opacity=0.8, thick]
(axis cs:5206,219)
--(axis cs:5512,219);

\path [draw=color2, draw opacity=0.8, thick]
(axis cs:5359,52)
--(axis cs:5359,386);

\path [draw=color3, draw opacity=0.8, thick]
(axis cs:5363,2986)
--(axis cs:5575,2986);

\path [draw=color3, draw opacity=0.8, thick]
(axis cs:5469,2726)
--(axis cs:5469,3246);

\path [draw=color4, draw opacity=0.8, thick]
(axis cs:4822,1549)
--(axis cs:5330,1549);

\path [draw=color4, draw opacity=0.8, thick]
(axis cs:5076,1346)
--(axis cs:5076,1752);

\path [draw=color5, draw opacity=0.8, thick]
(axis cs:5371,3164)
--(axis cs:5821,3164);

\path [draw=color5, draw opacity=0.8, thick]
(axis cs:5596,3001)
--(axis cs:5596,3328);

\addplot [line width=1pt, color1, mark=pentagon*, mark size=3.2, mark options={solid}]
table{%
x  y
4292 -1296
4292 -1296
};
\addlegendentry{SA-PPO}

\addplot [line width=1pt, color2, mark=diamond*, mark size=3.5, mark options={solid}]
table{%
x  y
5359 219 
5359 219 
};
\addlegendentry{ATLA-PPO}
\addplot [line width=1pt, color3, mark=*, mark size=2.9, mark options={solid}]
table{%
x  y
5469 2986
5469 2986
};
\addlegendentry{PA-ATLA-PPO}
\addplot [line width=1pt, color4, mark=triangle*, mark size=3.3]
table{%
x  y
5076 1549
5076 1549
};
\addlegendentry{RADIAL-PPO}
\addplot [line width=1pt, color5, mark=asterisk, mark size=3.2]
table{%
x  y
5596 3164
5596 3164
};
\addlegendentry{\ourppo (Ours)}
\end{axis}

\end{tikzpicture}}
        \vspace{-1.5em}
    \end{subfigure}
    \vspace{-0.5em}
    \caption{\small{ \textbf{Robustness, Efficiency and High Natural Performance of WocaR-PPO.}
    \textbf{(Left four columns)} Learning curves
    of rewards under MaxDiff, Robust Sarsa, SA-RL and PA-AD \textit{(the strongest)} attacks during training on four environments. 
    \textbf{(Rightmost column)} Average episode natural rewards v.s. average worst rewards under attacks.
    Each row shows the performance of baselines and \ourppo on one environment. Shaded regions are computed over 20 random seeds. Results under more attack radius $\epsilon$'s are in Appendix~\ref{app:exp:eps}.}}
    \label{fig:mujoco_curve}
\vspace{-1em}
\end{figure}

\textbf{Efficiency of Training \ourppo.}\quad
The learning curves in Figure~\ref{fig:mujoco_curve} (left) directly show the sample efficiency of \ourppo. Following the optimal settings provided in \citep{zhang2020robust, zhang2021robust, oikarinen2020robust}, our method takes \emph{50\% training steps} required by RADIAL-PPO and ATLA methods on Hopper, Walker2d, and Halfcheetah because RADIAL-PPO needs more steps to ensure convergence and ATLA methods require additional adversary training steps.  When solving high dimensional environments like Ant, \ourppo only requires \emph{75\% steps} compared with all other baselines to converge. We also provide additional results of baselines using the same training steps as \ourppo in Appendix~\ref{app:exp:add}.\\
In terms of time efficiency, \ourppo saves \emph{50\% training time} for convergence on Hopper, Walker2d, and Halfcheetah, and \emph{32\% time} on Ant compared with the SOTA method. Therefore, \textit{\ourppo achieves both higher computational efficiency and higher sample efficiency than SOTA baselines.} Detailed costs in time and sampling are in Appendix~\ref{app:exp:eff}. 


\textbf{Case II: Robust DQN for Atari Video Games}\quad

\textbf{Evaluation Metrics.}\quad
Since Atari games have pixel state spaces and discrete action spaces, the applicable attacking algorithms also differ from those in MuJoCo tasks.
We include the following common attacks: (1) 10-step untargeted \textit{PGD} (projected gradient descent) attack, (2) \textit{MinBest}~\citep{huang2017adversarial}, which minimizes the probability of choosing the “best” action, (3) \textit{PA-AD}~\citep{sun2021strongest}, as the state-of-the-art RL-based adversarial attack algorithm. 

\textbf{Performance and Robustness of \ourdqn.}\quad
Table~\ref{tab:atari} presents the results on four Atari games under attack radius $\epsilon=3/255$, while results and analysis under smaller attack radius $1/255$ are in Appendix~\ref{app:exp:res}. 
We can see that \textit{our \ourdqn consistently outperforms baselines under MinBest and PA-AD attacks in all environments, with a significant advance under the strongest (worst-case) PA-AD attacks compared with other robust agents.} Under PGD attacks, \ourdqn performs comparably with the state-of-the-art in Freeway and Pong (which are simpler games) and gains higher rewards than other agents in BankHeist and Roadrunner. Since SA-DQN and RADIAL-DQN focus on bounding and smoothing the policy network and do not consider the policy's intrinsic vulnerability, they are robust under the PGD attack but still vulnerable against the stronger PA-AD attack. 

\textbf{Efficiency of Training \ourdqn.}\quad The total training time for SA-DQN, RADIAL-DQN, and our \ourdqn are roughly 35, 17, and 18 hours, respectively.  All baselines are trained for 6 million frames on the same hardware. Therefore, \ourdqn is 49\% faster (and is more robust) than SA-DQN. Compared to the more advanced baseline RADIAL-DQN, although \ourdqn is 5\% slower, it achieves better robustness (539\% higher reward than RADIAL-DQN in RoadRunner).
\begin{table}[!t]
\centering
\renewcommand{\arraystretch}{1.2}
\resizebox{0.95\textwidth}{!}{%
\setlength{\tabcolsep}{4pt}
\begin{tabular}{p{3.3cm}<{\centering}| p{2.02cm}<{\centering} | p{2.02cm}<{\centering} | p{2.02cm}<{\centering} | p{2.02cm}<{\centering} | p{2.02cm}<{\centering} | p{2.02cm}<{\centering} | p{2.02cm}<{\centering} | p{2.02cm}<{\centering}}
\toprule
\multirow{3}{*}{\textbf{Model}} & \multicolumn{4}{c|}{\textbf{Pong} } & \multicolumn{4}{c}{\textbf{BankHeist}} \\
\cline{2-9}
& \multirow{2}{*}{\begin{tabular}[c]{@{}c@{}}\textbf{Natural}\\\textbf{Reward}\end{tabular}} & \textbf{PGD} & \textbf{MinBest} & \textbf{PA-AD} & \multirow{2}{*}{\begin{tabular}[c]{@{}c@{}}\textbf{Natural}\\\textbf{Reward}\end{tabular}} & \textbf{PGD} & \textbf{MinBest} & \textbf{PA-AD} \\
\cline{3-5}\cline{7-9}
& & \multicolumn{3}{c|}{$\epsilon$= 3/255} & & \multicolumn{3}{c}{$\epsilon$= 3/255} \\
\hline
DQN & 21.0 $\pm$ 0.0 & -21.0 $\pm$ 0.0 & -9.7 $\pm$ 4.0 & -19.0 $\pm$ 2.2 & \textbf{1308 $\pm$ 24} &  0 $\pm$ 0 &  119 $\pm$ 65 & 102 $\pm$ 92 \\
SA-DQN & 21.0 $\pm$ 0.0 & 21.0 $\pm$ 0.0 & 20.6 $\pm$ 3.5 & 18.7 $\pm$ 2.6& 1245 $\pm$ 14  & 1176 $\pm$ 63 &  1024 $\pm$ 31 &  489 $\pm$ 106 \\
RADIAL-DQN & 21.0 $\pm$ 0.0 & 21.0 $\pm$ 0.0 & 19.5 $\pm$ 2.1 & 13.2 $\pm$ 1.8 & 1178 $\pm$ 4 &  1176 $\pm$ 63 & 928 $\pm$ 113 & 508 $\pm$ 85  \\
\cellcolor{lightgray}{\textbf{\ourdqn (Ours)}} & 
\cellcolor{lightgray}{\textbf{21.0 $\pm$ 0.0}} &
\cellcolor{lightgray}{\textbf{21.0 $\pm$ 0.0}} &
\cellcolor{lightgray}{\textbf{20.8 $\pm$ 3.3}} &
\cellcolor{lightgray}{\textbf{19.7 $\pm$ 2.4}} & \cellcolor{lightgray}{1220 $\pm$ 12} & \cellcolor{lightgray}{\textbf{1214 $\pm$ 7}} &  \cellcolor{lightgray}{\textbf{1045 $\pm$ 20}} & \cellcolor{lightgray}{\textbf{754 $\pm$ 102}}\\
\midrule
\multirow{3}{*}{\textbf{Model}} & \multicolumn{4}{c|}{\textbf{Freeway} } & \multicolumn{4}{c}{\textbf{RoadRunner}} \\
\cline{2-9}
& \multirow{2}{*}{\begin{tabular}[c]{@{}c@{}}\textbf{Natural}\\\textbf{Reward}\end{tabular}} & \textbf{PGD} & \textbf{MinBest} & \textbf{PA-AD} & \multirow{2}{*}{\begin{tabular}[c]{@{}c@{}}\textbf{Natural}\\\textbf{Reward}\end{tabular}} & \textbf{PGD} & \textbf{MinBest} & \textbf{PA-AD} \\
\cline{3-5}\cline{7-9}
& & \multicolumn{3}{c|}{$\epsilon$= 3/255} & & \multicolumn{3}{c}{$\epsilon$= 3/255} \\
\hline
DQN & \textbf{34.0 $\pm$ 0.1} & 0.0 $\pm$ 0.0 & 5.5 $\pm$ 1.8 &  4.7 $\pm$ 2.9 & \textbf{45527 $\pm$ 4894} &  0 $\pm$ 0 &  2985 $\pm$ 1440 & 203 $\pm$ 65 \\
SA-DQN & 30.0 $\pm$ 0.0 & 30.0 $\pm$ 0.0 & 18.3 $\pm$ 3.0 & 9.5 $\pm$ 3.8 & 44638 $\pm$ 2367 &  20678 $\pm$ 1563 &  4214 $\pm$ 2587 &  5516 $\pm$ 4684 \\
RADIAL-DQN & 33.1 $\pm$ 0.2 & \textbf{33.2 $\pm$ 0.2} & 16.4 $\pm$ 2.3 & 10.8 $\pm$ 3.6 & 44675 $\pm$ 5854  & 38576 $\pm$ 1960 & 8476 $\pm$ 3964 & 1290 $\pm$ 4015 \\
\cellcolor{lightgray}{\textbf{\ourdqn (Ours)}} & 
\cellcolor{lightgray}{31.2 $\pm$ 0.4} &
\cellcolor{lightgray}{31.4 $\pm$ 0.3} &
\cellcolor{lightgray}{\textbf{19.8 $\pm$ 3.8}} &
\cellcolor{lightgray}{\textbf{12.3 $\pm$ 3.2}} & \cellcolor{lightgray}{44156 $\pm$ 2279} & \cellcolor{lightgray}{\textbf{38720 $\pm$ 1765}} &  \cellcolor{lightgray}{\textbf{10545 $\pm$ 2984}} & \cellcolor{lightgray}{\textbf{8239 $\pm$ 2766}}\\
\bottomrule
\end{tabular}}
\vspace{0.5em}
\caption{\textbf{Robustness and High Natural Performance of WocaR-DQN.}
Average episode rewards $\pm$ standard deviation over 50 episodes on three baselines and \ourdqn on four Atari environments. Best results (natural reward of under attacks for each column) on each environment boldfaced. \ourdqn outperforms all the baselines in most cases or gains similar performance in the other metrics. We highlight the most robust agent as \colorbox{lightgray}{gray}. Each result is obtained with 10 random seeds.}
\vspace{-1em}
\label{tab:atari}
\end{table}

\begin{figure}[t]
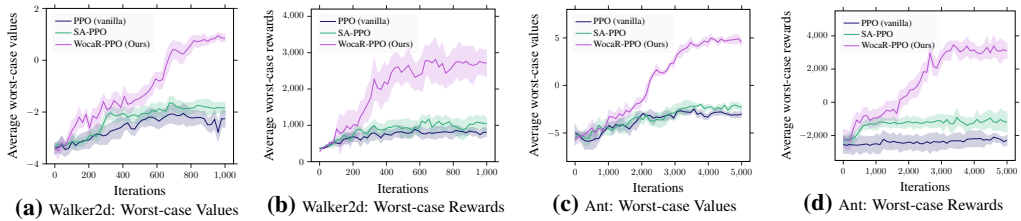

    \centering
    \begin{subfigure}[!t]{0.24\textwidth}
        \centering
        \resizebox{0.95\textwidth}{!}{\input{\fighome/curve/walker-value.tex}}
        \vspace{-0.7em}
        \caption{\fontsize{6.1pt}{\baselineskip}\selectfont Walker2d: Worst-case Values}
        \label{sfig:curves_walker_value_less}
    \end{subfigure}
    \begin{subfigure}[!t]{0.24\textwidth}
        \centering
        \resizebox{0.97\textwidth}{!}{\input{\fighome/curve/walker-worst.tex}}
        \vspace{-0.7em}
        \caption{\fontsize{6.1pt}{\baselineskip}\selectfont Walker2d: Worst-case Rewards}
        \label{sfig:curves_walker_reward_less}
    \end{subfigure}
    \begin{subfigure}[!t]{0.24\textwidth}
        \centering
        \resizebox{0.92\textwidth}{!}{\input{\fighome/curve/ant-value.tex}}
        \vspace{-0.7em}
        \caption{\fontsize{6.1pt}{\baselineskip}\selectfont Ant: Worst-case Values}
        \label{sfig:curves_ant_value_less}
    \end{subfigure}
    \begin{subfigure}[!t]{0.24\textwidth}
        \centering
        \resizebox{0.97\textwidth}{!}{\input{\fighome/curve/ant-worst.tex}}
        \vspace{-0.7em}
        \caption{\fontsize{6.1pt}{\baselineskip}\selectfont Ant: Worst-case Rewards}
        \label{sfig:curves_ant_reward_less}
    \end{subfigure}
    \vspace{-0.5em}
    \caption{\small{\textbf{(a)\&(b)} Comparison between estimated worst-attack action values {\scriptsize{$\worstcritic$}} and Actual worst-case rewards under the strongest attacksduring training on Walker2d; \textbf{(c)\&(d)} The comparison between worst-case values and rewards to verify worst-attack value estimation on Ant.
    }}
    \label{fig:curves_less}
\vspace{-1em}
\end{figure}
\begin{figure}[t]
    \centering
    \begin{subfigure}[!t]{0.45\textwidth}
        \centering
        \resizebox{\textwidth}{!}{
\begin{tikzpicture}[scale=0.6]

\definecolor{color0}{rgb}{0.168966293897257,0.475610408705207,0.783522853775186}
\definecolor{color1}{rgb}{0.691639317416794,0.806706471648465,0.92224956246861}
\definecolor{color2}{rgb}{0.95715994546835,0.680184738372337,0.77629698219994}
\definecolor{color3}{rgb}{0.885434799735519,0.144732659736887,0.401761241571282}

\begin{axis}[
height=3cm,
width=\textwidth,
scale only axis,
axis line style={white!15!black},
legend cell align={left},
legend columns=2,
legend style={
  fill opacity=0,
  draw opacity=1,
  text opacity=1,
  at={(0.53,1.01)},
  anchor=north,
  draw=none,
  font=\scriptsize
},
xtick pos = lower,
tick align=outside,
x grid style={white!80!black},
xmin=-0.636, xmax=9.726,
xtick style={color=white!15!black},
xtick={0.595,2.595,4.595,6.595,8.595},
xticklabels={\scriptsize{No Attack},\scriptsize{MaxDiff},\scriptsize{Robust Sarsa},\scriptsize{SA-RL},\scriptsize{PA-AD}},
y grid style={white!80!black},
ylabel={\scriptsize{Average episode rewards}},
ymin=2000, ymax=7800,
ytick style={color=white!15!black},
y grid style={white},
tick label style={font=\tiny},
ymajorgrids,
]
\draw[draw=white!93.3333333333333!black,fill=color0,opacity=0.7,very thin,postaction={pattern=crosshatch, pattern color=white!93.3333333333333!black, fill opacity=0.7}] (axis cs:-0.165,0) rectangle (axis cs:0.165,3632);
\addlegendimage{ybar,ybar legend,draw=white!93.3333333333333!black,fill=color0,opacity=0.7,very thin,postaction={pattern=crosshatch, pattern color=white!93.3333333333333!black, fill opacity=0.7}}
\addlegendentry{SA-PPO}

\draw[draw=white!93.3333333333333!black,fill=color0,opacity=0.7,very thin,postaction={pattern=crosshatch, pattern color=white!93.3333333333333!black, fill opacity=0.7}] (axis cs:1.835,0) rectangle (axis cs:2.165,3624);
\draw[draw=white!93.3333333333333!black,fill=color0,opacity=0.7,very thin,postaction={pattern=crosshatch, pattern color=white!93.3333333333333!black, fill opacity=0.7}] (axis cs:3.835,0) rectangle (axis cs:4.165,3283);
\draw[draw=white!93.3333333333333!black,fill=color0,opacity=0.7,very thin,postaction={pattern=crosshatch, pattern color=white!93.3333333333333!black, fill opacity=0.7}] (axis cs:5.835,0) rectangle (axis cs:6.165,3028);
\draw[draw=white!93.3333333333333!black,fill=color0,opacity=0.7,very thin,postaction={pattern=crosshatch, pattern color=white!93.3333333333333!black, fill opacity=0.7}] (axis cs:7.835,0) rectangle (axis cs:8.165,2512);
\draw[draw=white!93.3333333333333!black,fill=color1,opacity=0.8,very thin,postaction={pattern=north west lines, pattern color=white!93.3333333333333!black, fill opacity=0.8}] (axis cs:0.165,0) rectangle (axis cs:0.495,3854);
\addlegendimage{ybar,ybar legend,draw=white!93.3333333333333!black,fill=color1,opacity=0.8,very thin,postaction={pattern=north west lines, pattern color=white!93.3333333333333!black, fill opacity=0.8}}
\addlegendentry{SA-PPO + $w(s)$}

\draw[draw=white!93.3333333333333!black,fill=color1,opacity=0.8,very thin,postaction={pattern=north west lines, pattern color=white!93.3333333333333!black, fill opacity=0.8}] (axis cs:2.165,0) rectangle (axis cs:2.495,3802);
\draw[draw=white!93.3333333333333!black,fill=color1,opacity=0.8,very thin,postaction={pattern=north west lines, pattern color=white!93.3333333333333!black, fill opacity=0.8}] (axis cs:4.165,0) rectangle (axis cs:4.495,3416);
\draw[draw=white!93.3333333333333!black,fill=color1,opacity=0.8,very thin,postaction={pattern=north west lines, pattern color=white!93.3333333333333!black, fill opacity=0.8}] (axis cs:6.165,0) rectangle (axis cs:6.495,3145);
\draw[draw=white!93.3333333333333!black,fill=color1,opacity=0.8,very thin,postaction={pattern=north west lines, pattern color=white!93.3333333333333!black, fill opacity=0.8}] (axis cs:8.165,0) rectangle (axis cs:8.495,2782);
\draw[draw=white!93.3333333333333!black,fill=color2,opacity=0.8,very thin,postaction={pattern=north east lines, pattern color=white!93.3333333333333!black, fill opacity=0.8}] (axis cs:0.595,0) rectangle (axis cs:0.925,5426);
\addlegendimage{ybar,ybar legend,draw=white!93.3333333333333!black,fill=color2,opacity=0.8,very thin,postaction={pattern=north east lines, pattern color=white!93.3333333333333!black, fill opacity=0.8}}
\addlegendentry{\ourppo - $w(s)$}

\draw[draw=white!93.3333333333333!black,fill=color2,opacity=0.8,very thin,postaction={pattern=north east lines, pattern color=white!93.3333333333333!black, fill opacity=0.8}] (axis cs:2.595,0) rectangle (axis cs:2.925,5409);
\draw[draw=white!93.3333333333333!black,fill=color2,opacity=0.8,very thin,postaction={pattern=north east lines, pattern color=white!93.3333333333333!black, fill opacity=0.8}] (axis cs:4.595,0) rectangle (axis cs:4.925,5193);
\draw[draw=white!93.3333333333333!black,fill=color2,opacity=0.8,very thin,postaction={pattern=north east lines, pattern color=white!93.3333333333333!black, fill opacity=0.8}] (axis cs:6.595,0) rectangle (axis cs:6.925,5178);
\draw[draw=white!93.3333333333333!black,fill=color2,opacity=0.8,very thin,postaction={pattern=north east lines, pattern color=white!93.3333333333333!black, fill opacity=0.8}] (axis cs:8.595,0) rectangle (axis cs:8.925,4098);
\draw[draw=white!93.3333333333333!black,fill=color3,opacity=0.8,very thin] (axis cs:0.925,0) rectangle (axis cs:1.255,6032);
\addlegendimage{ybar,ybar legend,draw=white!93.3333333333333!black,fill=color3,opacity=0.8,very thin}
\addlegendentry{\ourppo (Ours)}

\draw[draw=white!93.3333333333333!black,fill=color3,opacity=0.8,very thin] (axis cs:2.925,0) rectangle (axis cs:3.255,5850);
\draw[draw=white!93.3333333333333!black,fill=color3,opacity=0.8,very thin] (axis cs:4.925,0) rectangle (axis cs:5.255,5319);
\draw[draw=white!93.3333333333333!black,fill=color3,opacity=0.8,very thin] (axis cs:6.925,0) rectangle (axis cs:7.255,5365);
\draw[draw=white!93.3333333333333!black,fill=color3,opacity=0.8,very thin] (axis cs:8.925,0) rectangle (axis cs:9.255,4269);
\path [draw=white!41.1764705882353!black, line width=0.48pt]
(axis cs:0,3612)
--(axis cs:0,3652);

\path [draw=white!41.1764705882353!black, line width=0.48pt]
(axis cs:2,3601)
--(axis cs:2,3647);

\path [draw=white!41.1764705882353!black, line width=0.48pt]
(axis cs:4,3263)
--(axis cs:4,3303);

\path [draw=white!41.1764705882353!black, line width=0.48pt]
(axis cs:6,3005)
--(axis cs:6,3051);

\path [draw=white!41.1764705882353!black, line width=0.48pt]
(axis cs:8,2496)
--(axis cs:8,2528);

\path [draw=white!41.1764705882353!black, line width=0.48pt]
(axis cs:0.33,3776)
--(axis cs:0.33,3932);

\path [draw=white!41.1764705882353!black, line width=0.48pt]
(axis cs:2.33,3744)
--(axis cs:2.33,3860);

\path [draw=white!41.1764705882353!black, line width=0.48pt]
(axis cs:4.33,3382)
--(axis cs:4.33,3450);

\path [draw=white!41.1764705882353!black, line width=0.48pt]
(axis cs:6.33,3108)
--(axis cs:6.33,3182);

\path [draw=white!41.1764705882353!black, line width=0.48pt]
(axis cs:8.33,2688)
--(axis cs:8.33,2876);

\path [draw=white!41.1764705882353!black, line width=0.48pt]
(axis cs:0.76,5378)
--(axis cs:0.76,5474);

\path [draw=white!41.1764705882353!black, line width=0.48pt]
(axis cs:2.76,5193)
--(axis cs:2.76,5625);

\path [draw=white!41.1764705882353!black, line width=0.48pt]
(axis cs:4.76,5016)
--(axis cs:4.76,5370);

\path [draw=white!41.1764705882353!black, line width=0.48pt]
(axis cs:6.76,5110)
--(axis cs:6.76,5246);

\path [draw=white!41.1764705882353!black, line width=0.48pt]
(axis cs:8.76,3944)
--(axis cs:8.76,4252);

\path [draw=white!41.1764705882353!black, line width=0.48pt]
(axis cs:1.09,5964)
--(axis cs:1.09,6100);

\path [draw=white!41.1764705882353!black, line width=0.48pt]
(axis cs:3.09,5622)
--(axis cs:3.09,6078);

\path [draw=white!41.1764705882353!black, line width=0.48pt]
(axis cs:5.09,5099)
--(axis cs:5.09,5539);

\path [draw=white!41.1764705882353!black, line width=0.48pt]
(axis cs:7.09,5311)
--(axis cs:7.09,5419);

\path [draw=white!41.1764705882353!black, line width=0.48pt]
(axis cs:9.09,4097)
--(axis cs:9.09,4441);

\addplot [line width=0.48pt, white!41.1764705882353!black, opacity=1, mark=-, mark size=1.3, mark options={solid}, only marks]
table {%
0 3612
2 3601
4 3263
6 3005
8 2496
};
\addplot [line width=0.48pt, white!41.1764705882353!black, opacity=1, mark=-, mark size=1.3, mark options={solid}, only marks]
table {%
0 3652
2 3647
4 3303
6 3051
8 2528
};

\addplot [line width=0.48pt, white!41.1764705882353!black, opacity=1, mark=-, mark size=1.3, mark options={solid}, only marks]
table {%
0.33 3776
2.33 3744
4.33 3382
6.33 3108
8.33 2688
};

\addplot [line width=0.48pt, white!41.1764705882353!black, opacity=1, mark=-, mark size=1.3, mark options={solid}, only marks]
table {%
0.33 3932
2.33 3860
4.33 3450
6.33 3182
8.33 2876
};

\addplot [line width=0.48pt, white!41.1764705882353!black, opacity=1, mark=-, mark size=1.3, mark options={solid}, only marks]
table {%
0.76 5378
2.76 5193
4.76 5016
6.76 5110
8.76 3944
};

\addplot [line width=0.48pt, white!41.1764705882353!black, opacity=1, mark=-, mark size=1.3, mark options={solid}, only marks]
table {%
0.76 5474
2.76 5625
4.76 5370
6.76 5246
8.76 4252
};

\addplot [line width=0.48pt, white!41.1764705882353!black, opacity=1, mark=-, mark size=1.3, mark options={solid}, only marks]
table {%
1.09 5964
3.09 5622
5.09 5099
7.09 5311
9.09 4097
};

\addplot [line width=0.48pt, white!41.1764705882353!black, opacity=1, mark=-, mark size=1.3, mark options={solid}, only marks]
table {%
1.09 6100
3.09 6078
5.09 5539
7.09 5419
9.09 4441
};

\end{axis}

\end{tikzpicture}}
        \vspace{-1.8em}
        \caption{\scriptsize{Halfcheetah: Ablation studies for $w(s)$}}
        \label{sfig:abl_h1}
    \end{subfigure}
    \begin{subfigure}[!t]{0.45\textwidth}
        \centering
        \resizebox{\textwidth}{!}{
\begin{tikzpicture}

\definecolor{color0}{rgb}{0.552941176470588,0.707450980392157,0.896078431372549}
\definecolor{color1}{rgb}{0.703057937269964,0.856470725837709,0.708208636049633}
\definecolor{color2}{rgb}{0.744102947491485,0.669256898711933,0.879184450886271}
\definecolor{color3}{rgb}{0.905434799735519,0.144732659736887,0.401761241571282}

\begin{axis}[
height=3cm,
width=\textwidth,
scale only axis,
axis line style={white!15!black},
legend cell align={left},
legend columns=2,
legend style={
  fill opacity=0,
  draw opacity=1,
  text opacity=1,
  at={(0.53,1.01)},
  anchor=north,
  draw=none,
  font=\scriptsize
},
xtick pos = lower,
tick align=outside,
x grid style={white!80!black},
xmin=-0.636, xmax=9.726,
xtick style={color=white!15!black},
xtick={0.595,2.595,4.595,6.595,8.595},
xticklabels={\scriptsize{No Attack},\scriptsize{MaxDiff},\scriptsize{Robust Sarsa},\scriptsize{SA-RL},\scriptsize{PA-AD}},
y grid style={white!80!black},
ylabel={\scriptsize{Average episode rewards}},
ymin=500, ymax=5500,
ytick style={color=white!15!black},
y grid style={white},
tick label style={font=\tiny},
ymajorgrids,
]
\draw[draw=white!93.3333333333333!black,fill=color0,opacity=0.8,very thin,postaction={pattern=crosshatch, pattern color=white!93.3333333333333!black, fill opacity=0.8}] (axis cs:-0.165,0) rectangle (axis cs:0.165,3487);
\addlegendimage{ybar,ybar legend,draw=white!93.3333333333333!black,fill=color0,opacity=0.8,very thin,postaction={pattern=crosshatch, pattern color=white!93.3333333333333!black, fill opacity=0.8}}
\addlegendentry{ATLA - $\lossreg$}

\draw[draw=white!93.3333333333333!black,fill=color0,opacity=0.8,very thin,postaction={pattern=crosshatch, pattern color=white!93.3333333333333!black, fill opacity=0.8}] (axis cs:1.835,0) rectangle (axis cs:2.165,3081);
\draw[draw=white!93.3333333333333!black,fill=color0,opacity=0.8,very thin,postaction={pattern=crosshatch, pattern color=white!93.3333333333333!black, fill opacity=0.8}] (axis cs:3.835,0) rectangle (axis cs:4.165,1567);
\draw[draw=white!93.3333333333333!black,fill=color0,opacity=0.8,very thin,postaction={pattern=crosshatch, pattern color=white!93.3333333333333!black, fill opacity=0.8}] (axis cs:5.835,0) rectangle (axis cs:6.165,1224);
\draw[draw=white!93.3333333333333!black,fill=color0,opacity=0.8,very thin,postaction={pattern=crosshatch, pattern color=white!93.3333333333333!black, fill opacity=0.8}] (axis cs:7.835,0) rectangle (axis cs:8.165,987);
\draw[draw=white!93.3333333333333!black,fill=color1,opacity=0.8,very thin,postaction={pattern=north west lines, pattern color=white!93.3333333333333!black, fill opacity=0.8}] (axis cs:0.165,0) rectangle (axis cs:0.495,3512);
\addlegendimage{ybar,ybar legend,draw=white!93.3333333333333!black,fill=color1,opacity=0.8,very thin,postaction={pattern=north west lines, pattern color=white!93.3333333333333!black, fill opacity=0.8}}
\addlegendentry{PA-ATLA-PPO - $\lossreg$}

\draw[draw=white!93.3333333333333!black,fill=color1,opacity=0.8,very thin,postaction={pattern=north west lines, pattern color=white!93.3333333333333!black, fill opacity=0.8}] (axis cs:2.165,0) rectangle (axis cs:2.495,3024);
\draw[draw=white!93.3333333333333!black,fill=color1,opacity=0.8,very thin,postaction={pattern=north west lines, pattern color=white!93.3333333333333!black, fill opacity=0.8}] (axis cs:4.165,0) rectangle (axis cs:4.495,2032);
\draw[draw=white!93.3333333333333!black,fill=color1,opacity=0.8,very thin,postaction={pattern=north west lines, pattern color=white!93.3333333333333!black, fill opacity=0.8}] (axis cs:6.165,0) rectangle (axis cs:6.495,1142);
\draw[draw=white!93.3333333333333!black,fill=color1,opacity=0.8,very thin,postaction={pattern=north west lines, pattern color=white!93.3333333333333!black, fill opacity=0.8}] (axis cs:8.165,0) rectangle (axis cs:8.495,1679);
\draw[draw=white!93.3333333333333!black,fill=color2,opacity=0.8,very thin,postaction={pattern=north east lines, pattern color=white!93.3333333333333!black, fill opacity=0.8}] (axis cs:0.595,0) rectangle (axis cs:0.925,3568);
\addlegendimage{ybar,ybar legend,draw=white!93.3333333333333!black,fill=color2,opacity=0.8,very thin,postaction={pattern=north east lines, pattern color=white!93.3333333333333!black, fill opacity=0.8}}
\addlegendentry{\ourppo - $\lossreg$}

\draw[draw=white!93.3333333333333!black,fill=color2,opacity=0.8,very thin,postaction={pattern=north east lines, pattern color=white!93.3333333333333!black, fill opacity=0.8}] (axis cs:2.595,0) rectangle (axis cs:2.925,3114);
\draw[draw=white!93.3333333333333!black,fill=color2,opacity=0.8,very thin,postaction={pattern=north east lines, pattern color=white!93.3333333333333!black, fill opacity=0.8}] (axis cs:4.595,0) rectangle (axis cs:4.925,2729);
\draw[draw=white!93.3333333333333!black,fill=color2,opacity=0.8,very thin,postaction={pattern=north east lines, pattern color=white!93.3333333333333!black, fill opacity=0.8}] (axis cs:6.595,0) rectangle (axis cs:6.925,1615);
\draw[draw=white!93.3333333333333!black,fill=color2,opacity=0.8,very thin,postaction={pattern=north east lines, pattern color=white!93.3333333333333!black, fill opacity=0.8}] (axis cs:8.595,0) rectangle (axis cs:8.925,2035);
\draw[draw=white!93.3333333333333!black,fill=color3,opacity=0.8,very thin] (axis cs:0.925,0) rectangle (axis cs:1.255,3616);
\addlegendimage{ybar,ybar legend,draw=white!93.3333333333333!black,fill=color3,opacity=0.8,very thin}
\addlegendentry{\ourppo (Ours)}

\draw[draw=white!93.3333333333333!black,fill=color3,opacity=0.8,very thin] (axis cs:2.925,0) rectangle (axis cs:3.255,3541);
\draw[draw=white!93.3333333333333!black,fill=color3,opacity=0.8,very thin] (axis cs:4.925,0) rectangle (axis cs:5.255,3277);
\draw[draw=white!93.3333333333333!black,fill=color3,opacity=0.8,very thin] (axis cs:6.925,0) rectangle (axis cs:7.255,2390);
\draw[draw=white!93.3333333333333!black,fill=color3,opacity=0.8,very thin] (axis cs:8.925,0) rectangle (axis cs:9.255,2579);
\path [draw=white!41.1764705882353!black, line width=0.48pt]
(axis cs:0,3035)
--(axis cs:0,3939);

\path [draw=white!41.1764705882353!black, line width=0.48pt]
(axis cs:2,2827)
--(axis cs:2,3335);

\path [draw=white!41.1764705882353!black, line width=0.48pt]
(axis cs:4,1220)
--(axis cs:4,1914);

\path [draw=white!41.1764705882353!black, line width=0.48pt]
(axis cs:6,1033)
--(axis cs:6,1415);

\path [draw=white!41.1764705882353!black, line width=0.48pt]
(axis cs:8,863)
--(axis cs:8,1111);

\path [draw=white!41.1764705882353!black, line width=0.48pt]
(axis cs:0.33,3225)
--(axis cs:0.33,3799);

\path [draw=white!41.1764705882353!black, line width=0.48pt]
(axis cs:2.33,2865)
--(axis cs:2.33,3183);

\path [draw=white!41.1764705882353!black, line width=0.48pt]
(axis cs:4.33,1829)
--(axis cs:4.33,2235);

\path [draw=white!41.1764705882353!black, line width=0.48pt]
(axis cs:6.33,944)
--(axis cs:6.33,1340);

\path [draw=white!41.1764705882353!black, line width=0.48pt]
(axis cs:8.33,1492)
--(axis cs:8.33,1866);

\path [draw=white!41.1764705882353!black, line width=0.48pt]
(axis cs:0.76,3372)
--(axis cs:0.76,3764);

\path [draw=white!41.1764705882353!black, line width=0.48pt]
(axis cs:2.76,2885)
--(axis cs:2.76,3343);

\path [draw=white!41.1764705882353!black, line width=0.48pt]
(axis cs:4.76,2365)
--(axis cs:4.76,3093);

\path [draw=white!41.1764705882353!black, line width=0.48pt]
(axis cs:6.76,1326)
--(axis cs:6.76,1904);

\path [draw=white!41.1764705882353!black, line width=0.48pt]
(axis cs:8.76,1844)
--(axis cs:8.76,2226);

\path [draw=white!41.1764705882353!black, line width=0.48pt]
(axis cs:1.09,3517)
--(axis cs:1.09,3715);

\path [draw=white!41.1764705882353!black, line width=0.48pt]
(axis cs:3.09,3334)
--(axis cs:3.09,3748);

\path [draw=white!41.1764705882353!black, line width=0.48pt]
(axis cs:5.09,3118)
--(axis cs:5.09,3436);

\path [draw=white!41.1764705882353!black, line width=0.48pt]
(axis cs:7.09,2245)
--(axis cs:7.09,2535);

\path [draw=white!41.1764705882353!black, line width=0.48pt]
(axis cs:9.09,2350)
--(axis cs:9.09,2808);

\addplot [line width=0.48pt, white!41.1764705882353!black, opacity=1, mark=-, mark size=1.3, mark options={solid}, only marks]
table {%
0 3035
2 2827
4 1220
6 1033
8 863
};

\addplot [line width=0.48pt, white!41.1764705882353!black, opacity=1, mark=-, mark size=1.3, mark options={solid}, only marks]
table {%
0 3939
2 3335
4 1914
6 1415
8 1111
};

\addplot [line width=0.48pt, white!41.1764705882353!black, opacity=1, mark=-, mark size=1.3, mark options={solid}, only marks]
table {%
0.33 3225
2.33 2865
4.33 1829
6.33 944
8.33 1492
};

\addplot [line width=0.48pt, white!41.1764705882353!black, opacity=1, mark=-, mark size=1.3, mark options={solid}, only marks]
table {%
0.33 3799
2.33 3183
4.33 2235
6.33 1340
8.33 1866
};

\addplot [line width=0.48pt, white!41.1764705882353!black, opacity=1, mark=-, mark size=1.3, mark options={solid}, only marks]
table {%
0.76 3372
2.76 2885
4.76 2365
6.76 1326
8.76 1844
};

\addplot [line width=0.48pt, white!41.1764705882353!black, opacity=1, mark=-, mark size=1.3, mark options={solid}, only marks]
table {%
0.76 3764
2.76 3343
4.76 3093
6.76 1904
8.76 2226
};

\addplot [line width=0.48pt, white!41.1764705882353!black, opacity=1, mark=-, mark size=1.3, mark options={solid}, only marks]
table {%
1.09 3517
3.09 3334
5.09 3118
7.09 2245
9.09 2350
};

\addplot [line width=0.48pt, white!41.1764705882353!black, opacity=1, mark=-, mark size=1.3, mark options={solid}, only marks]
table {%
1.09 3715
3.09 3748
5.09 3436
7.09 2535
9.09 2808
};

\end{axis}
\end{tikzpicture}}
        \vspace{-1.8em}
        \caption{\scriptsize{Hopper: Ablation studies for $\lossreg$}}
        \label{sfig:reg_h}
    \end{subfigure}
    \vspace{-0.5em}
    \caption{\small{\textbf{(a)} Ablation evaluations for state importance weight $w(s)$ under no attack and four types of attacks on Halfcheetah; \textbf{(b)} Ablation studies for state regularization $\lossreg$ under different evaluation metrics on Hopper. Ablated results on other environments are in Appendix~\ref{app:exp:reg}.
    }}
    \label{fig:ablation_less}
\vspace{-1em}
\end{figure}
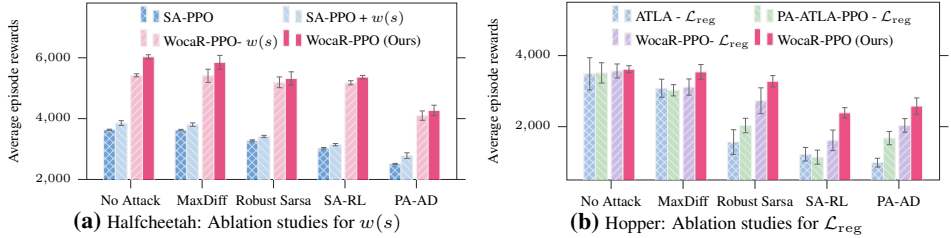

\subsection{Verifying Effectiveness of \ours}
\label{sec:result_effect}

Now we dive deeper into the algorithmic design and verify the effectiveness of \ours by ablation studies on \ourppo.

\textbf{(1) Worst-attack value estimation.}
We show the learned worst-attack value estimation, {\small{$\worstcritic$}}, during the training process in Figure~\ref{sfig:curves_walker_value_less} and \ref{sfig:curves_ant_value_less}, in comparison with the actual reward under the strongest attack (PA-AD~\citep{sun2021strongest}) in Figure~\ref{sfig:curves_walker_reward_less} and \ref{sfig:curves_ant_reward_less}. The pink curves in both plots suggest that \emph{our worst-attack value estimation matches the trend of actual worst-case reward under attacks,} although the network estimated value and the real reward have different scales due to the commonly-used reward normalization for learning stability. Therefore, the effectiveness of our proposed worst-attack value estimation ($\lossest$) is verified. 

\textbf{(2) Worst-case-aware policy optimization.}
Compared to vanilla PPO and SA-PPO, we can see that \emph{\ourppo improves the worst-attack value and the worst-case reward during training}, suggesting the effectiveness of our worst-attack value improvement ($\lossworst$).
The comparison of natural rewards, as well as curves in other environments, are provided in Appendix~\ref{app:exp:curve}. 
Moreover, the adjustable weight $\weightworst$ in Equation~\eqref{loss:policy} controls the trade-off between natural value and worst-attack value in policy optimization. When $\weightworst$ is high, the policy pays more attention to its worst-attack value. Appendix~\ref{app:exp:prefer} verifies that \emph{\ours, with different values of weight $\weightworst$, produces different robustness and natural performance while consistently dominating other robust agents.}

\textbf{(3) Value-enhanced state regularization.}
We conduct ablation experiments to analyze the effect of two techniques: our proposed state importance weight $w(s)$ and the state regularization loss $\lossreg$ \citep{zhang2020robust}. In Figure~\ref{sfig:abl_h1}, we compare the performance of the original \ourppo to a variant of \ourppo without the state importance weight $w(s)$ on Halfcheetah, which visually indicates that $w(s)$ can help agents boost the robustness. Since \sappo~\citep{zhang2020robust} also uses a state regularization technique, the improvement of \sappo added with $w(s)$ also show the universal effectiveness of our state importance. Without $w(s)$, our algorithm also achieves similar or better performance than baselines, but including this inexpensive technique $w(s)$ gives \ours a greater advantage, especially under learned strong attacks SA-RL and PA-AD.
Figure~\ref{sfig:reg_h} presents the performance of ATLA methods and our algorithm without $\lossreg$ on Hopper, which verifies that \ourppo also yields the superior performance when removing the regularization technique. And the comparison between \ourppo and \ourppo without $\lossreg$ demonstrates that the weighted state regularization is beneficial to enhancing the robustness in our algorithm. Detailed ablation studies for $w(s)$ and $\lossreg$ on four MuJoCo environments are shown in Appendix~\ref{app:exp:reg}.

\section{Conclusion and Discussion}
\label{sec:conclusion}

This paper proposes a robust RL training framework, \ours, that evaluates and improves the long-term robustness of a policy via \estname, \worstname, and \regname. 
Different from recent state-of-the-art adversarial training methods~\citep{sun2021strongest,zhang2021robust} which train an extra adversary to improve the robustness of an agent, we directly estimate and improve the lower bound of the agent's cumulative reward. As a result, \ours not only achieves better robustness than state-of-the-art robust RL approaches, but also halves the total sample complexity and computation complexity, in a wide range of Atari and MuJoCo tasks.


There are several aspects to improve or extend the current approach. 
First, the proposed \bellmanname in theory gives the exact worst-case value of a policy under $\ell_p$ bounded attacks. But in practice, it is hard to compute the set $\advaction$ directly, so we use convex relaxation to obtain a superset of it, $\aadvaction$. As a result, the fixed point of \bellmanname with $\advaction$ being replaced by $\aadvaction$ is a lower bound of the worst-case value. Then, our algorithm increases the worst-case value by improving its lower bound, as visualized and explained in Figure~\ref{fig:training} in Appendix~\ref{app:understand}. Therefore, one potential way of further improving the robustness is using a tighter relaxation.
In addition, this paper only considers the $\ell_p$ threat model as is common in most related works. But in real-world applications, other attack models could exist (e.g. patch attacks~\citep{brown2017adversarial}), and improving the robustness of RL agents in these scenarios is another important research direction.

\section*{Acknowledgments}
This work is supported by DOD-ONR-Office of Naval Research,
DOD-DARPA-Defense Advanced Research Projects Agency Guaranteeing AI Robustness against Deception (GARD), and Adobe, Capital One and JP Morgan faculty fellowships.

\bibliographystyle{plain}


\newpage
\appendix
\onecolumn
\begin{center}{\centering \bf \Large
    Supplementary Material:
}\end{center}

\begin{center}{\centering \Large
     \mytitle
}\end{center}

\section{Theoretical Analysis}
\label{app:theory}

Similar to the \worstqname, we can define the \worstvname as below:
\begin{definition}[Worst-attack Value]
\label{def:worstv}
For a given policy $\pi$, define the \worstvname of $\pi$ as
\begin{equation}
    \worstvpi(s) := \mathbb{E}_{P} [\sum_{t=0}^{\infty} \gamma^{t} R\left(s_t, \pi(h^*(s_t))\right) \mid s_0=s],
\end{equation}
where $h^*$ is the optimal attacker which minimizes the victim's cumulative reward under the $\epsilon$ constraint. 
\end{definition}

\begin{proof}[Proof of Theorem~\ref{thm:main}]


First, we show that $\underline{\mathcal{T}}^\pi$ is a contraction. 

For any two Q functions $Q_1: \states\times\actions \to \mathbb{R}$ and $Q_2: \states\times\actions \to \mathbb{R}$, we have
\begin{small}
\begin{equation*}
\begin{split}
&\left\|\underline{\mathcal{T}}^\pi Q_1 - \underline{\mathcal{T}}^\pi Q_2\right\|_{\infty} \\
&=\max _{s,a} \left|\sum_{s^{\prime} \in \mathcal{S}} P\left(s^{\prime} \mid s, a\right)\left[R(s, a)+\gamma \min _{a^{\prime} \in \advaction(s^{\prime}, \pi)} Q_1\left(s^{\prime}, a^{\prime}\right) - R(s, a)+\gamma \min _{a^{\prime} \in \advaction(s^{\prime}, \pi)} Q_2\left(s^{\prime}, a^{\prime}\right)\right] \right|\\
&=\gamma \max _{s,a}\left|\sum_{s^{\prime} \in \mathcal{S}} P\left(s^{\prime} \mid s, a\right) \left[\min _{a^{\prime} \in \advaction(s^{\prime}, \pi)} Q_1\left(s^{\prime}, a^{\prime}\right) - \min _{a^{\prime} \in \advaction(s^{\prime}, \pi)} Q_2\left(s^{\prime}, a^{\prime}\right)\right]\right|\\
&\leq \gamma \max _{s,a}\sum_{s^{\prime} \in \mathcal{S}} P\left(s^{\prime} \mid s, a\right) \left|\min _{a^{\prime} \in \advaction(s^{\prime}, \pi)} Q_1\left(s^{\prime}, a^{\prime}\right) - \min _{a^{\prime} \in \advaction(s^{\prime}, \pi)} Q_2\left(s^{\prime}, a^{\prime}\right)\right|\\
&\leq \gamma \max _{s,a}\sum_{s^{\prime} \in \mathcal{S}} P\left(s^{\prime} \mid s, a\right) \max _{a^{\prime} \in \advaction(s^{\prime}, \pi)}\left|Q_1\left(s^{\prime}, a^{\prime}\right) - Q_2\left(s^{\prime}, a^{\prime}\right)\right|\\
&=\gamma \max _{s,a}\sum_{s^{\prime} \in \mathcal{S}} P\left(s^{\prime} \mid s, a\right)\left\|Q_1 - Q_2\right\|_{\infty}\\
&=\gamma\left\|Q_1 - Q_2\right\|_{\infty}\\
\end{split}
\end{equation*}
\end{small}

The second inequality comes from the fact that,
\begin{equation*}
\left|\min _{x_1}f(x_1) - \min _{x_2}g(x_2)\right| \leq \max _{x}\left|f(x)-g(x)\right|
\end{equation*}
The operator $\underline{\mathcal{T}}^\pi$ satisfies,
\begin{equation*}
\left\|\underline{\mathcal{T}}^\pi Q_1 - \underline{\mathcal{T}}^\pi Q_2\right\|_{\infty} \leq \gamma\left\|Q_1 - Q_2\right\|_{\infty}
\end{equation*}
so it is a contraction in the sup-norm.

Recall the definition of \worstqname:
\begin{equation}
    \worstqpi(s,a) := \mathbb{E}_{P} [\sum_{t=0}^{\infty} \gamma^{t} R\left(s_t, \pi(h^*(s_t))\right) \mid s_0=s, a_0=a ],
\end{equation}
where $h^*$ is the optimal attacker which minimizes the victim's cumulative reward under the $\epsilon$ constraint. 
That is, the optimal attacker $h^*$ lets the agent select the worst possible action among all achievable actions in $\advaction$.
Hence, we have $\underline{Q}^{\pi}(s, a)= \underline{\mathcal{T}}^\pi \underline{Q}^{\pi}(s, a)$. Therefore, $\underline{Q}^{\pi}(s, a)$ is the fixed point of the Bellman operator $\underline{\mathcal{T}}^\pi$.

\end{proof}

\section{Geometric Understanding of \ours}
\label{app:understand}

\subsection{A Closer Look at Robust RL}

\begin{figure}[!htbp]
    \centering
    \begin{subfigure}[t]{0.22\columnwidth}
        \centering
        \includegraphics[width=\columnwidth]{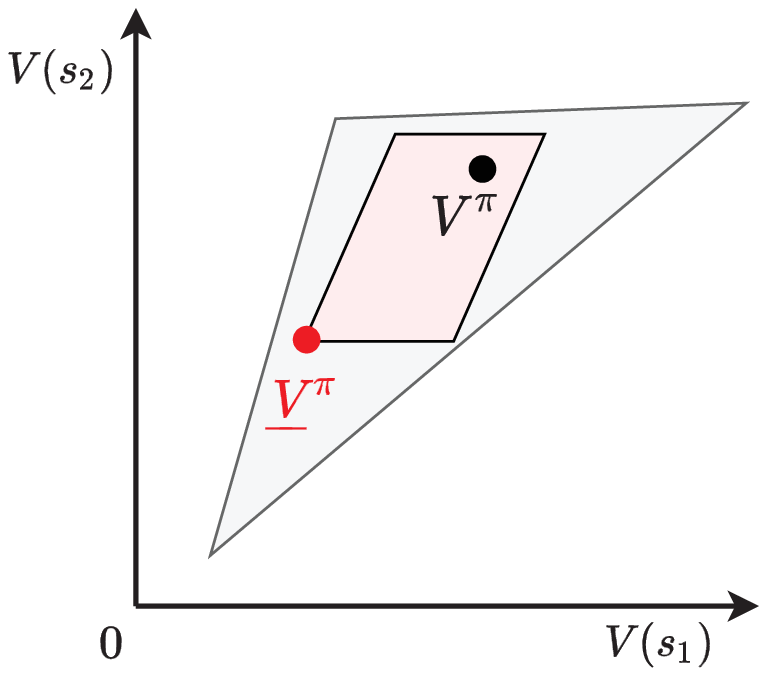}
        \vspace{-2em}
        \caption{\small{Vanilla Training}}
        \label{sfig:train_vanilla}
    \end{subfigure}
    \hfill
    \begin{subfigure}[t]{0.22\columnwidth}
        \centering
        \includegraphics[width=\columnwidth]{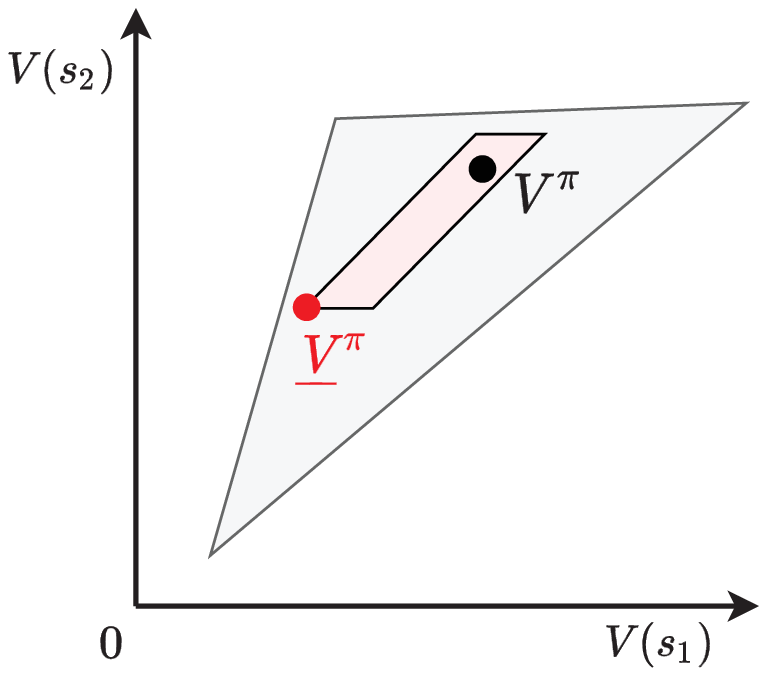}
        \vspace{-2em}
        \caption{\small{Lipschitz-driven}}
        \label{sfig:train_consistent}
    \end{subfigure}
    \hfill
    \begin{subfigure}[t]{0.22\columnwidth}
        \centering
        \includegraphics[width=\columnwidth]{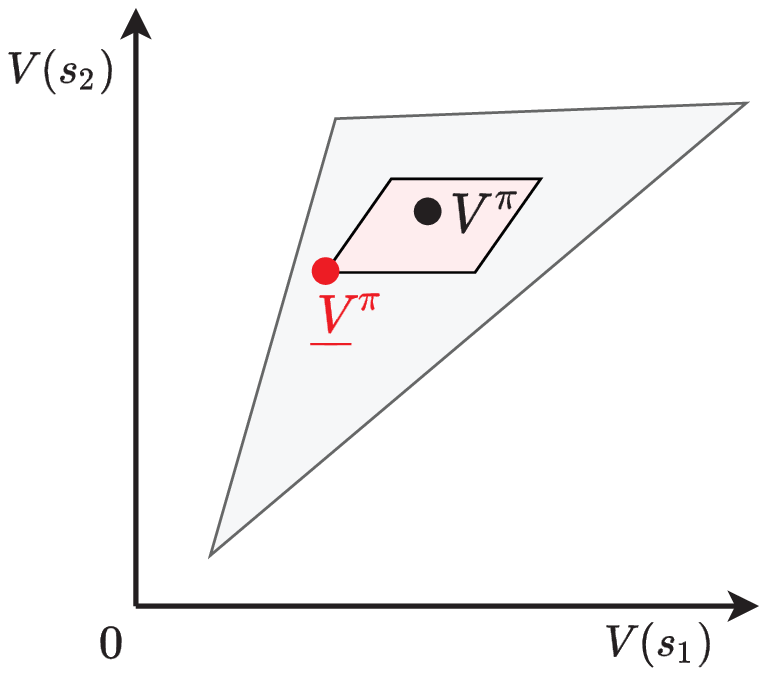}
        \vspace{-2em}
        \caption{\small{Attack-driven}}
        \label{sfig:train_worst}
    \end{subfigure}
    \hfill
    \begin{subfigure}[t]{0.22\columnwidth}
        \centering
        \includegraphics[width=\columnwidth]{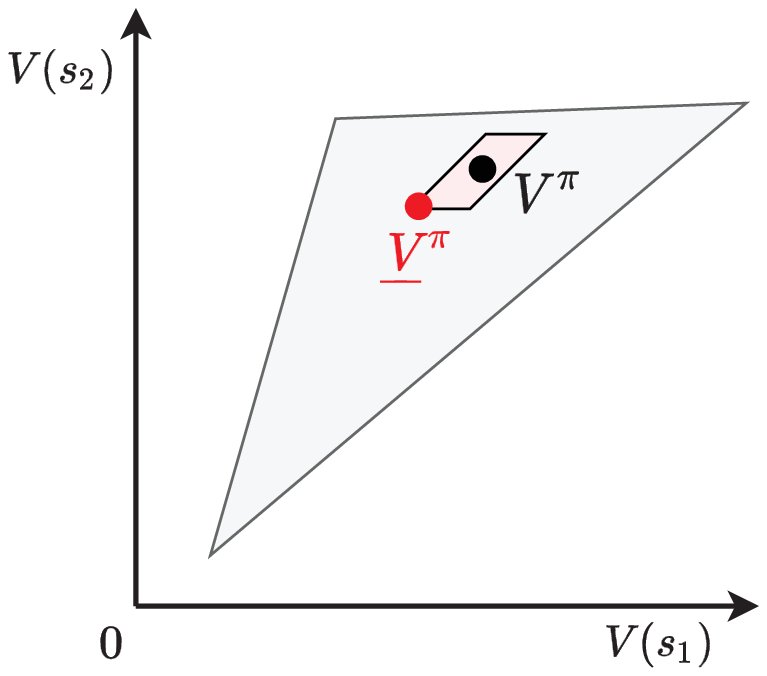}
        \vspace{-2em}
        \caption{\small{Our \ours}}
        \label{sfig:train_ours}
    \end{subfigure}
    \caption{Geometric understanding of different training methods following the polytope theory by~\cite{dadashi2019value} and~\cite{sun2021strongest}. 
    $x,y$ axes represent the policy value for $s_1\in\states$ and $s_2\in\states$.
    The grey polytope depicts the value space of all policies, while the pink polytope (referred to as value perturbation polytope) contains the values of policy $\pi$ under all attacks with given constraint ($\epsilon$-radius $\ell_p$ perturbations on the state input to the policy). 
    $V^\pi$ denotes the value of a learned policy, and $\underline{V}^\pi$ stands for the \worstvname of this policy $\pi$ (located at the bottom leftmost vertex of the value perturbation polytope). \\
    \textbf{Two relations between the value perturbation polytope and policy robustness:}
    The more distant the pink value perturbation polytope's bottom leftmost vertex is from the origin, the higher \worstvname $\pi$ has.
    The smaller the pink value perturbation polytope is, the less vulnerable the policy is (i.e., an $\epsilon$-bounded state perturbation can not lead to a drastic change of the policy value). \\
    \textbf{Our method:} \ours makes a policy more robust via \estname, \worstname and \regname, which shrink the value perturbation polytope and move the value perturbation polytope's bottom leftmost vertex away from the origin.}
    \label{fig:training}
\end{figure}

In real-world applications where observations may be noisy or perturbed, it is important to ensure that the agent not only makes good decisions, but also makes safe decisions. 



\textbf{Existing Robust RL Approaches.}\quad
There are many existing robust training methods for RL, and we summarize the common ideas as the following two categories.\\
\textit{\textbf{(1)} Lipschitz-driven methods:} 
encourage the policy to output similar actions for any pair of clean state and perturbed state, i.e., $\min_{\theta} \max_{s\in\states, \tilde{s}\in\attball(s)} \mathsf{Dist} (\pi_\theta(s), \pi_\theta(\tilde{s}))$, where $\mathsf{Dist}$ can be any distance metric. Therefore, the policy function (network) has small local Lipschitz constant at each clean state. 
Note that this idea is similar to many certifiable robust training methods~\citep{gowal2018effectiveness} in supervised learning. 
For example, Fischer et al.~\cite{fischer2019online} achieve provable robustness for DQN by applying the DiffAI~\citep{mirman2018differentiable} approach, so that the DQN agent selects the same action for any element inside $\attball(s)$.
Zhang et al.~\cite{zhang2020robust} propose to minimize the total variance between $\pi(s)$ and $\pi(\tilde{s})$ using convex
relaxations of NNs. 
Although Lipschitz-driven methods are relatively efficient in training, 
they usually treat all states equally, and do not explicitly consider long-term rewards. 
Therefore, it is hard to obtain a non-vacuous reward certification, especially in continuous-action environments. 
\\
\textit{\textbf{(2)} Attack-driven methods:} 
train the agent under adversarial attacks, which is analogous to Adversarial Training (AT)~\citep{madry2018towards}. 
However, different from AT, a PGD attacker may not induce a robust policy in an RL problem due to the uncertainty and complexity of the environment.
Zhang et al.~\cite{zhang2021robust} propose to alternately train an agent and an RL-based ``optimal'' adversary, so that the agent can adapt to the worst-case input perturbation. Therefore, attack-driven method can be formulated as $\max_\theta \underline{V}^{\pi_\theta}$. Zhang et al.~\cite{zhang2021robust} and a follow-up work by Sun et al.~\cite{sun2021strongest} apply the alternate training approach and obtain state-of-the-art robust performance. 
However, learning the optimal attacker using RL algorithms doubles the learning complexity and the required samples, making it hard to apply these methods to large-scale problems. Moreover, although these attack-driven methods improve the worst-case performance of an agent, the natural reward can be sacrificed.\\
Note that we discuss methods that improve the robustness of deep policies during training. Therefore, the focus is different from some important works~\citep{lutjens2020certified,wu2021crop,kumar2021policy} that directly use non-robust policies and execute them in a robust way.

\textbf{Our Motivation: Geometric Understanding of Robust RL.}\quad
The robustness of a learned RL policy can be understood from a geometric perspective. 
Dadashi et al.~\cite{dadashi2019value} point out that the value functions of all policies in a finite MDP form a polytope, as shown by the grey area in Figure~\ref{fig:training}. Sun et al.~\cite{sun2021strongest} further find that $V^{\tilde{\pi}}$, possible values of a policy $\pi$ under all $\epsilon$-constrained $\ell_p$ perturbations, also form a polytope (pink area in Figure~\ref{fig:training}), which we refer to as the \textit{value perturbation polytope}.
Recall that in robust RL, we pursue a high natural value $V^\pi$, and a high worst-case value $\worstvpi$ which is the lower leftmost vertex of the value perturbation polytope. A vulnerable policy that outputs a different action for a perturbed state as a larger value perturbation polytope.
Lipschitz-driven methods, as Figure~\ref{fig:training}(a) shows, attempts to shrink the size of the value perturbation polytope, but does not necessarily result in a high $\worstvpi$. Attack-driven methods, as Figure~\ref{fig:training} shows, improves $\worstvpi$, but have no control over the size of the value perturbation polytope, and may not obtain a high natural value $V^\pi$.

\textbf{Our Proposed Robust RL Principle.}\quad
In contrast to prior Lipschitz-driven methods and Attack-driven methods, we propose to both ``lift the position'' and ``shrink the size'' of the value perturbation polytope. 
To achieve the above principle in an efficient way, we propose to 
(1) directly estimate and optimize the worst-case value of a policy without training the optimal attacker (\estname and \worstname mechanisms of \ours), and
(2) regularize the local Lipschitz constants of the policy with value-enhanced weights (\regname mechanism of \ours). 
See Section~\ref{sec:alg} for more details of the proposed algorithm.

\section{Algorithm Details}
\label{app:algo}

\subsection{Computing $\advaction$ by Network Bounding Techniques}
\label{app:ibp}
Recall that $\mathcal{A}_{\mathrm{adv}}(s, \pi) = \{a\in\mathcal{A}: \exists \tilde{s}\in\mathcal{B}_\epsilon(s) \text{ s.t. } \pi(\tilde{s}) = a \}$ is the set of actions that $\pi$ may be misled to select in state $s$. Computing the exact $\advaction$ is difficult due to the complexity of neural networks, so we use relaxations of network such as Interval Bound Propagation (IBP)~\cite{wong2018provable, gowal2019scalable} to approximately calculate $\advaction$. 

\textbf{A Brief Introduction to Convex Relaxation Methods.}\quad
Convex relaxation methods are techniques to bound a neural network that provide the upper and lower bound of the neural network output given a bounded $l_p$ perturbation to the input. In particular, we take $l_\infty$ as an example, which has been studied extensively in prior works. Formally, let $f_\theta$ be a real-valued function parameterized by a neural network $\theta$, and let $f_\theta(s)$ denote the output of the neural network with the input $s$. Given an $l_\infty$ perturbation budget $\epsilon$, convex relaxation method outputs $(\underline{f_\theta(s)}, \overline{f_\theta(s)})$ such that 
\begin{equation*}
    \underline{f_\theta(s)}\leq \min_{\|s^\prime-s\|_\infty\leq\epsilon}f_\theta(s')\leq
    \max_{\|s^\prime-s\|_\infty\leq\epsilon}f_\theta(s')
    \leq\overline{f_\theta(s)}
\end{equation*}

Recall that we use $\pi_\theta$ to denote the parameterized policy being trained that maps a state observation to a distribution over the action space, and $\pi$ denotes the deterministic policy refined from $\pi_\theta$ with $\pi(s)=\mathrm{argmax}_{a\in\actions} \pi_\theta(a|s)$. 
$\advaction(s,\pi)$ contains actions that could be selected by $\pi$ (with the highest probability in $\pi_\theta$'s output) when $s$ is perturbed within a $\epsilon$-radius ball. 
Our goal is to approximately identify a superset of $\advaction(s,\pi)$, i.e., $\aadvaction(s,\pi)$, via the convex relaxation of networks introduced above.\\
\textbf{Computing $\advaction$ in Continuous Action Space.}\quad
The most common policy parameterization in a continuous action space is through a Gaussian distribution. Let $\mu_\theta(s)$ be the mean of Gaussian computed by $\pi_\theta(s)$, then $\pi=\mu(s)$. Therefore, we can use network relaxation to compute an upper bound and a lower bound of $\mu_\theta$ with input $\attball(s)$. Then, $\aadvaction(s,\pi)=[\underline{\mu_\theta(s)}, \overline{\mu_\theta(s)}]$, i.e., a set of actions that are coordinate-wise bounded by $\underline{\mu_\theta(s)}$ and $\overline{\mu_\theta(s)}$. 
For other continuous distributions, e.g., Beta distribution, the computation is similar, as we only need to find the largest and smallest actions. 
In summary, we can compute $\aadvaction(s,\pi)=[\underline{\pi_\theta(s)}, \overline{\pi_\theta(s)}]$. 


\textbf{Computing $\advaction$ in Discrete Action Space.}\quad
For a discrete action space, the output of $\pi_\theta$ is a categorical distribution, and $\pi$ selects the action with the highest probability. Or equivalently, in value-based algorithms like DQN, the Q network (can be regarded as $\pi_\theta$) outputs the Q estimates for each action, and $\pi$ selects the action with the highest Q value. 
In this case, we can compute the upper and lower bound of $\pi_\theta$ in every dimension (corresponding to an action), denoted as $\overline{a}_i,\underline{a}_i$, $\forall 1 \leq i \leq |\actions|$. 
Then, an action $a_i\in\actions$ is in $\aadvaction$ if for all $1 \leq j \leq |\actions|, j\neq i$, we have $\overline{a}_i > \underline{a}_j$.

\textbf{Implementation details of $\advaction$}
For a continuous action space, interval bound propagation (IBP) is the cheapest method to implement convex relaxation. We use IBP+Backward relaxation provided by \textit{auto\_LiRPA} library, following \citep{zhang2020robust} to efficiently produce tighter bounds $\advaction$ for the policy networks $\pi_{theta}$. For a discrete action space, we compute the layer-wise output bounds for the Q-network by applying robustness verification algorithms from \citep{oikarinen2020robust}.

\subsection{Worst-case-aware Robust PPO (\ourppo)}
\label{app:ppo}

In policy-based DRL methods~\citep{schulman2015trust,lillicrap2015continuous,schulman2017proximal} such as PPO, the actor policy $\pi_\theta$ is optimized so that it increases the probability of selecting actions with higher critic values. Therefore, we combine our \worstcriticname and the original critic function, and optimize $\pi_\theta$ such that both the natural value ($\lossrl$) and the \worstqname $\worstcritic$ ($\lossworst$) can be increased ($\lossrl$ and $\lossworst$). At the same time, $\pi_\theta$ is also regularized by $\lossreg$.  

We provide the full algorithm of \ourppo in Algorithm \ref{alg:ppo} and highlight the differences with the prior method SA-PPO. \ourppo needs to train an additional \worstcriticname $\worstcritic$ to provide the robust-PPO-clip objective. The perturbation budget $\epsilon_t$ increases slowly during training. 
The implementation of $\lossreg$ is the same as the SA-regularizer~\cite{zhang2020robust}. For computing the state importance weight $w_{s_t}$, because there is no Q-value network in PPO, we provide a different formula to measure the state importance without extra calculation (Line 11 in Algorithm~\ref{alg:ppo}).
\begin{algorithm}[!h]
	\renewcommand{\algorithmicrequire}{\textbf{Input:}}
	\renewcommand{\algorithmicensure}{\textbf{Output:}}
	\caption{Worst-case-aware Robust PPO (\ourppo). We highlight the difference compares with SA-PPO \cite{zhang2020robust} in \textcolor{teal}{blue}.}
	\begin{algorithmic}[1]
	    \REQUIRE Number of iterations $T$, a schedule $\epsilon_t$ for the perturbation radius $\epsilon$, weights $\weightworst,\weightreg$
		\STATE Initialize policy network $\pi_{\theta_\pi}(a\mid s)$ , value network $V_{\theta_V}(s)$ and \worstcriticname network $\worstcritic(s, a)$ with parameters $\theta_{\pi}$, $\theta_V$ and $\phi$
		\FOR {$k=0, 1, ..., T$}
		\STATE Collect a set of trajectories $\mathcal{D}=\{\tau_k\}$ by running $\pi_{\theta_\pi}$ in the environment, each trajectory $\tau_k$ contains $\tau_{k}:=\left\{\left(s_t, a_t, r_t, s_{t+1}\right)\right\}, t \in\left[\left|\tau_{k}\right|\right]$
		\STATE Compute rewards-to-go $\hat{R}_{t}$ for each step $t$ in every trajectory $k$ with discount factor $\gamma$
		\STATE Compute advantage estimation $\hat{A}_t$ based on the current value function $V_{\theta_V}(s_t)$ and cumulative reward  $\hat{R}_{t}$ for each step $t$
		\STATE Update parameters of value function $\theta_V$ by regression on mean-squared error:
		$$\theta_V \leftarrow \mathop{\arg\min}\limits_{\theta_V} \frac{1}{\left|\mathcal{D}\right||\tau_{k}|} \sum_{\tau_k \in \mathcal{D}} \sum_{t=0}^{|\tau_{k}|}\left(V_{\theta_V}\left(s_{t}\right)-\hat{R}_{t}\right)^{2}$$
		\STATE \textcolor{teal}{Use IBP to compute bounds of current policy network $\pi$:}\\
		   Find the upper bound $\overline{\pi}\left(s_{t+1}, \epsilon ; \theta\right)$ and lower bound $\underline{\pi}\left(s_{t+1}, \epsilon ; \theta\right)$ of the policy network $\pi_{\theta_\pi}$
		\STATE \textcolor{teal}{Select the worst action for next states:}\\
		Calculate the action satisfied $\hat{a}_{t+1} = \mathop{\arg\min}\limits_{a\in [\underline{\pi}, \overline{\pi}]} \worstcritic(s_{t+1}, a)$ with the \worstcriticname network $\worstcritic$ using gradient descent.
		\STATE \textcolor{teal}{Compute next worst-case value:}\\
		Set $\underline{y}_{t}=\left\{\begin{array}{ll}
        r_{t} & \mbox { for terminal } s_{t+1} \\
        r_{t}+\gamma \worstcritic(s_{t+1}, \hat{a}_{t+1}) & \mbox { for non-terminal } s_{t+1}
        \end{array}\right.$
        \STATE \textcolor{teal}{Update parameters of \worstcriticname network $\phi$} by minimizing the TD-error ($\lossest$):
        $$\phi \leftarrow \mathop{\arg\min}\limits_{\phi} \frac{1}{\left|\mathcal{D}\right||\tau_{k}|} \sum_{\tau_k \in \mathcal{D}} \sum_{t=0}^{|\tau_{k}|}(\underline{y}_t -\worstcritic(s_t, a_t))^2$$
        \STATE For each state $s_t$, calculate a \textcolor{teal}{state importance weight $w_{s_t}$} by $V_{\theta_V}\left(s_{t}\right) - \min\limits_{a}\worstcritic(s_t, a)$ for $s_t$
        \STATE Solve the \textcolor{teal}{value-enhanced} state regularization loss by SGLD (Stochastic gradient Langevin dynamics~\citep{gelfand1991recursive}) (from SA-PPO \citep{zhang2020robust}):
        $$\lossreg(\pi_\theta) = \frac{1}{N} \sum_{t=1}^N w(s_t) \max_{\tilde{s}_t\in\attball(s_t)} \mathsf{Dist} (\pi_\theta(s_t), \pi_\theta(\tilde{s}_t))$$
        \STATE Update the policy network by minimizing the \textcolor{teal}{Robust-PPO-Clip objective} (via ADAM):
        \begin{equation*}
        \begin{array}{l}
        \resizebox{0.91\textwidth}{!}{$\theta_{\pi} \leftarrow
        \mathop{\arg\min}\limits_{\theta_{\pi}^{\prime}} \frac{1}{|\mathcal{D}||\tau_{k}|}
        \Big[ \sum_{\tau_k \in \mathcal{D}} \sum_{t=0}^{|\tau_{k}|} \min \Big(\rho_{\theta_{\pi}^{\prime}}(a_t \mid s_t) (\hat{A}_t+\weightworst\worstcritic(s_t, a_t)), g(\rho_{\theta_{\pi}^{\prime}}(a_t \mid s_t)\Big) (\hat{A}_t+\weightworst\worstcritic(s_t, a_t)))+\weightreg w(s_i)\lossreg(\pi_\theta)\Big]$} \\
        \mbox { where } \rho_{\theta_{\pi}^{\prime}}(a_t \mid s_t):=\frac{\pi_{\theta^{\prime}_{\pi}}\left(a_t \mid s_t\right)}{\pi_{\theta_{\pi}}\left(a_t \mid s_t\right)}, g(\rho):={\mathrm{clip}}\left(\rho_{\theta_{\pi}^{\prime}}\left(a_t \mid s_t\right), 1-\epsilon_{\mathrm{clip}}, 1+\epsilon_{\mathrm{clip}}\right)
        \end{array}
        \end{equation*}
		\ENDFOR
	\end{algorithmic}  
\label{alg:ppo}
\end{algorithm}

\subsection{Worst-case-aware Robust DQN (\ourdqn)}
\label{app:dqn}

For value-based DRL methods~\citep{mnih2013playing,guez2015deep,wang2016dueling} such as DQN, a Q network is learned to evaluate the natural action value. Although the policy is not directly modeled by a network, the Q network induces a greedy policy by $\pi(s)=\mathrm{argmax}_a Q(s,a)$. 
To distinguish the acting policy and the natural action value, we keep the original Q network, and learn a new Q network that serves as a robust policy. This new Q network is called a \textit{robust Q network}, denoted by $Q_r$, which is used to take greedy actions $a=\pi(s):=\mathrm{argmax}_a Q_r(s,a)$
In addition to the original vanilla Q network $Q_v$ and the robust Q network $Q_r$, we learn the \worstcriticname network $\worstcritic$, which evaluates the \worstqname of the greedy policy induced by $Q_r$. Then, we update $Q_r$ by assigning higher values for actions with both high natural Q value and high \worstqname ($\lossrl$ and $\lossworst$), while enforcing the network to output the same action under bounded state perturbations ($\lossreg$). 

\ourdqn is presented in Algorithm \ref{alg:dqn}. \ourdqn trains three Q-value functions including a vanilla Q network, a worst-case Q network, and a robust Q network. The worst-case Q $\worstcritic$ is learned to estimate the worst-case performance and the robust Q is updated using the vanilla value and worst-case value together. Moreover, a target Q network is used as the original DQN implementation, to compute the target value when updating the vanilla Q network (Line 8 to 10 in Algorithm~\ref{alg:dqn}). To learn the worst-case critic $\worstcritic$, we select the worst-attack action from the estimated possible perturbed action set $\aadvaction$ to compute the worst-case TD loss $\lossest$ (Line 11 to 15). The implementation of $\lossreg$ is the same as the SA-regularizer~\cite{zhang2020robust}, where the robust Q network is regularized. To update the robust Q, we use a special $y^r_i$ which combines the target Q $Q_{v^{\prime}}$ and $Q_r$ for the next state to compute the TD loss, and minimize the $\lossreg$ weighted by the state importance $w(s_i)$ (Line 16 to 17). In \ourdqn, we use an increasing $\epsilon_t$ schedule and a more slowly increasing worst-case schedule $\kappa_{wst}(t)$ for robust Q training. 
\begin{algorithm}[!h]
	\renewcommand{\algorithmicrequire}{\textbf{Input:}}
	\renewcommand{\algorithmicensure}{\textbf{Output:}}
	\caption{Worst-case-aware Robust DQN (\ourdqn). We highlight the difference compares with SA-DQN \cite{zhang2020robust} in \textcolor{teal}{blue}.}
	\begin{algorithmic}[1]
	    \REQUIRE Number of iterations $T$, target network update coefficient $\tau$, a schedule $\epsilon_t$ for the perturbation radius $\epsilon$, a worst-case schedule $\weightworst(t)$ for weight $\weightworst$, regularization weight $\weightreg$
		\STATE Initialize a vanilla Q network $Q_v(s, a)$, target Q network $Q_{v^\prime}(s,a)$ , a robust Q network $Q_r(s, a)$, and a \worstcriticname $\worstcritic(s, a)$ with parameters $\theta_{Q_v}$, $\theta_{Q_{v^\prime}}$, $\theta_{Q_r}$, and $\phi$
		\STATE Initialize replay buffer $\mathcal{B}$
		\FOR {$k=0, 1, ..., T$}
		\STATE With probability $\beta$ select random action $a_t$, otherwise select $a_t = \mathop{\arg\max}\limits_{a}Q_r(s_t, a|\theta_{Q_r})$
		\STATE Execute action $a_t$ in environment and observe reward $r_t$ and the next state $s_{t+1}$.
		\STATE Store transition $\{s_t, a_t, r_t, s_{t+1}\}$ in $\mathcal{B}$
		\STATE Sample random a minibatch of $N$ transitions $\{s_i, a_i, r_i, s_{i+1}\}$ from $\mathcal{B}$
		\STATE Set $y_{i}=\left\{\begin{array}{ll}r_{i} & \text { for terminal } s_{i+1} \\ r_{i}+\gamma \max _{a^{\prime}} Q_{v^\prime}\left(s_{i+1}, a^{\prime} ; \theta\right) & \text { for non-terminal } s_{i+1}\end{array}\right.$
		\STATE Compute TD-loss for the vanilla Q network: $L(s_i, a_i, s_{i+1};\theta)=(y_i-Q_v(s_i, a_i;\theta))^2$ and optimize $\theta_{Q_v}$
		\STATE Soft update the target action-value network: $\theta_{Q_{v^\prime}}\leftarrow \tau\theta_{Q_v}+(1-\tau)\theta_{Q_{v^\prime}}$
		\STATE \textcolor{teal}{Computing bounds of robust action-value function:}\\
		For each action $a$ in action space $\mathcal{A}$, calculate the output bounds of robust action-value function $Q_r$ under $\epsilon_t$-bounded perturbations using IBP to input $s_{i+1}$: ${Q}_l(s_{t+1}, a, \epsilon_t)$ and ${Q}_u(s_{t+1}, a,\epsilon_t)$.
		\STATE \textcolor{teal}{Find the possible perturbed action set:}\\
		 For every action $a\in\mathcal{A}$, if ${Q}_u(s_{t+1}, a,\epsilon_t)> {Q}_l(s_{t+1}, a', \epsilon_t), \forall a' \in \mathcal{A}$, then add $a$ in the perturbed action set $\aadvaction$
		 \STATE \textcolor{teal}{Calculate the worst-attack action}: $\hat{a}_{i+1}=\mathop{\arg\min}\limits_{a \in \aadvaction}\worstcritic(s_{i+1}, a)$.
		\STATE Set $\underline{y}_{i}=\left\{\begin{array}{ll}r_{i} & \text { for terminal } s_{i+1} \\ r_i+\gamma\worstcritic(s_{i+1},\hat{a}_{i+1};\theta) & \text { for non-terminal } s_{i+1}\end{array}\right.$
		\STATE \textcolor{teal}{Compute TD-loss for \worstcriticname}: $\lossest=(\underline{y}_i-\worstcritic(s_i, a_i;\phi))^2$ and perform a gradient descent step with respect to the parameters $\phi$
		\STATE \textcolor{teal}{Calculate the state importance $w_{s_i}$} for each $s_i$ by normalizing $\max\limits_{a}Q_v(s_t, a) - \min\limits_{a}Q_v(s_t, a)$
		\STATE \textcolor{teal}{Update the robust Q function $Q_r$} based on the modified TD-Loss and \textcolor{teal}{value-enhanced} state regularization: $$L(s_i, a_i, s_{i+1};\theta_{Q_r})=(y^r_i-Q_r(s_i, a_i;\theta))^2 + \weightreg w(s_i)\lossreg(\theta_{Q_r})$$
		where $y^r_i=r_{i}+\gamma \max _{a^{\prime}} \Big[\weightworst(t)Q_{v^\prime}\left(s_{i+1}, a^{\prime};\theta\right)+(1-\weightworst(t))\worstcritic\left(s_{i+1}, a^{\prime};\theta\right)\Big]$ if $s_{i+1}$ is a non-terminal state, otherwise $y^r_i=r_i$
		\ENDFOR
	\end{algorithmic}  
\label{alg:dqn}
\end{algorithm}

\subsection{Worst-case-aware Robust A2C (\ourac)}
\label{app:a2c}
We also provide WocaR-A2C based on A2C implementation in Algorithm~\ref{alg:a2c}. Differ from the original A2C, \ourac needs to learn an additional $\worstcritic$ similar to \ourppo. To learn $\worstcritic$, we compute the output bounds for the policy network $\pi_{\theta_{\pi}}$ under $\epsilon$-bounded perturbations and then select the worst action $\hat{a}_{t+1}$ to calculate the TD-loss $\lossest$ (Line 6 to 9). The solutions for state importance weight $w(s_t)$ and regularization $\lossreg$ are same as \ourppo (Line 10-11). To learn the policy network $\pi_{\theta_{\pi}}$, we minimize the $\worstcritic$ value together with the original actor loss (Line 12).
\begin{algorithm}[!h]
	\renewcommand{\algorithmicrequire}{\textbf{Input:}}
	\renewcommand{\algorithmicensure}{\textbf{Output:}}
	\caption{Worst-case-aware Robust A2C (WocaR-A2C). We highlight the difference compares with SA-A2C \cite{zhang2020robust} in \textcolor{teal}{blue}.}
	\begin{algorithmic}[1]
	    \REQUIRE Number of iterations $T$, a schedule $\epsilon_t$ for the perturbation radius $\epsilon$, weights $\weightworst,\weightreg$
		\STATE Initialize policy network $\pi_{\theta_\pi}(a\mid s)$ , value network $V_{\theta_V}(s)$ and \worstcriticname network $\worstcritic(s, a)$ with parameters $\theta_{\pi}$, $\theta_V$ and $\phi$
		\FOR {$k=0, 1, ..., T$}
		\STATE Collect a set of trajectories $\mathcal{D}=\{\tau_k\}$ by running $\pi_{\theta_\pi}$ in the environment, each trajectory $\tau_k$ contains $\tau_{k}:=\left\{\left(s_t, a_t, r_t, s_{t+1}\right)\right\}, t \in\left[\left|\tau_{k}\right|\right]$
		\STATE Compute advantage function $A_t$ by
		$$A_t = r_t + \gamma V_{\theta_V}(s_{t+1}) - V_{\theta_V}(s_t)$$
		\STATE Update parameters of value function $\theta_V$ by regression on mean-squared error:
		$$\theta_V \leftarrow \mathop{\arg\min}\limits_{\theta_V} \frac{1}{\left|\mathcal{D}\right||\tau_{k}|} \sum_{\tau_k \in \mathcal{D}} \sum_{t=0}^{|\tau_{k}|}A_t^2$$
		\STATE \textcolor{teal}{Use IBP to compute bounds of current policy network $\pi$:}\\
		   Find the upper bound $\overline{\pi}\left(s_{t+1}, \epsilon ; \theta\right)$ and lower bound $\underline{\pi}\left(s_{t+1}, \epsilon ; \theta\right)$ of the policy network $\pi_{\theta_\pi}$
		\STATE \textcolor{teal}{Select the worst action for next states:}\\
		Calculate the action satisfied $\hat{a}_{t+1} = \mathop{\arg\min}\limits_{a\in [\underline{\pi}, \overline{\pi}]} \worstcritic(s_{t+1}, a)$ with the \worstcriticname network $\worstcritic$ using gradient descent.
		\STATE \textcolor{teal}{Compute next worst-case value:}\\
		Set $\underline{y}_{t}=\left\{\begin{array}{ll}
        r_{t} & \mbox { for terminal } s_{t+1} \\
        r_{t}+\gamma \worstcritic(s_{t+1}, \hat{a}_{t+1}) & \mbox { for non-terminal } s_{t+1}
        \end{array}\right.$
        \STATE \textcolor{teal}{Update parameters of \worstcriticname network $\phi$} by minimizing the TD-error ($\lossest$):
        $$\phi \leftarrow \mathop{\arg\min}\limits_{\phi} \frac{1}{\left|\mathcal{D}\right||\tau_{k}|} \sum_{\tau_k \in \mathcal{D}} \sum_{t=0}^{|\tau_{k}|}(\underline{y}_t -\worstcritic(s_t, a_t))^2$$
        \STATE For each state $s_t$, calculate a \textcolor{teal}{state importance weight $w(s_t)$} by $V_{\theta_V}\left(s_{t}\right) - \min\limits_{a}\worstcritic(s_t, a)$ for $s_t$
        \STATE Solve the \textcolor{teal}{value-enhanced} state regularization loss~\citep{zhang2020towards} by SGLD (Stochastic gradient Langevin dynamics~\citep{gelfand1991recursive}):
        $$\lossreg(\pi_{\theta_\pi}) = \frac{1}{N} \sum_{t=1}^N w(s_t) \max_{\tilde{s}_t\in\attball(s_t)} \mathsf{Dist} (\pi_{\theta_\pi}(s_t), \pi_{\theta_\pi}(\tilde{s}_t))$$
        \STATE Update the policy network by (via ADAM)
        $$\begin{array}{l}
        \theta_{\pi} \leftarrow
        \mathop{\arg\min}\limits_{\theta_{\pi}^{\prime}} \frac{1}{|\mathcal{D}||\tau_{k}|}
        \left[ \sum_{\tau_k \in \mathcal{D}} \sum_{t=0}^{|\tau_{k}|} (A_t log \pi_{\theta_\pi}(s_t) + \weightworst\worstcritic(s_t, a_t))\right]
        \end{array}$$
		\ENDFOR
	\end{algorithmic}  
\label{alg:a2c}
\end{algorithm}

\subsection{Extension to Action Attacks}
\label{app:extension}
Although our paper mainly focuses on state attack, our proposed techniques and algorithms based on the worst-attack Bellman operator can be easily extended to action attack, which is another threat model studied in previous works ~\citep{pinto2017robust,tan2020robustifying,tessler2019action}. In fact, for action attack, we even do not need to apply IBP for the worst-attack Bellman backup. We could just simply replace $\advaction$ with the set of actions that the agent could take under attack, then the rest of the algorithms will follow the exact same as the ones presented here.

\section{Experiment Details and Additional Results}
\label{app:exp}

\subsection{Implementation Details}
\label{app:exp:imp}
For reproducibility, the reported results are selected from 30 agents for different training methods with medium performance due to the high variance in RL training.

\subsubsection{PPO in MuJoCo}

\textbf{(a) PPO Baselines}\quad 

\textbf{Vanilla PPO}\quad We use the optimal hyperparameters from \cite{zhang2020robust} with the original fully connected (MLP) structure as the policy network for vanilla PPO training on all environments. On Hopper, Walker2d and Halfcheetah, we train for 2 million steps (976 iterations) , and 10 million steps (4882 iterations) on Ant to ensure convergence, which are consistent with other baselines (except ATLA methods).\\
\textbf{SA-PPO}\quad We use the hyperparameters using a grid search and solve the regularizer using convex relaxation with the IBP+Backward scheme to solve the regularizer. The regularization parameter $kappa$ is chosen in $\{0.01, 0.03, 0.1, 0.3, 1.0\}$. \\
\textbf{ATLA-PPO}\quad The hyperparameters for both policy and adversary are tuned for vanilla PPO with LSTM models. A larger entropy bonus coefficient is set to allow sufficient exploration. We set $N_v = N_{\pi} = 1$ for all experiments. We train 2441 iterations for Hopper, Walker2d, and Halfcheetah as well as 4882 iterations for Ant.\\
\textbf{PA-ATLA-PPO}\quad We use the hyperparameters similar to ATLA-PPO and conduct a grid search for a part of adversary hyperparameters including the learning rate and the entropy bonus coefficient.\\
\textbf{RADIAL-PPO}\quad RADIAL-PPO applies the same value of hyperparameters from \citep{oikarinen2020robust}. We train agents with the same iterations  aligning vanilla PPO for fair comparison. 

\textbf{(b) PPO Attackers}\quad 

For \textbf{Random} and \textbf{MaxDiff} attack, we directly use the implementation from \cite{zhang2021robust}. The reported rewards under RS attack are from 30 trained robust value function, which is used to attack agents.\\
For \textbf{SA-RL} attack, a grid search of the optimal hyperparameters for each robust agents is conducted to find the strongest attacker. The strength of the regularization $\kappa$ is set as $1 \times 10^{-6}$ to 1.\\
For \textbf{PA-AD} attack, the adversaries are trained by PPO with a grid search of hyperparameters to obtain the strongest adversary.\\
For different types of RL-based attacks, we respectively train 100 adversaries and report the worst rewards among all trained adversaries.

\textbf{(c) \ourppo} \quad We use the same LSTM structure (single layer with 64 hidden neurons as in vanilla PPO agents. With a grid search experiment, we find the optimal hyperparameters for \ourppo. Specially, we use PGD to compute bounds for the policy network and convex relaxation to solve the state regularization. The number of \ourppo training steps in all environments are the same as those in vanilla PPO. We tune the adjustable weight $\kappa_{wst}$ and increase $\kappa_{wst}$ from $0$ to the target value. For Hopper, Walker2d and Halfcheetah, $\kappa_{wst}$ is linearly increasing and we set the target value as 0.8. For Ant, we choose the exponential increase and the target value as 0.5.

\subsubsection{DQN in Atari}

\textbf{(a) DQN Baselines} \quad

\textbf{Vanilla DQN}\quad We follow \cite{zhang2020robust} and \citep{oikarinen2020robust} in hyperparameters and network structures for vanilla DQN training. The implementation of all our baselines applies Double DQN~\citep{van2016deep} and Prioritized Experience Replay~\citep{schaul2015prioritized}. For each Atari environment without framestack, we normalize the pixel values to $[0, 1]$ and clip rewards to $[-1, +1]$. For reliably convergence, we run $6\times 10^6$ steps for all baselines on all environments. Additionally, we use a replay buffer with a capacity of $5\times 10$. During testing, we evaluate agents without epsilon greedy exploration for 1000 episodes.\\
\textbf{SA-DQN} \quad 
SA-DQN use the same settings of network structures and hyperparameters as in vanilla DQN. The regularization parameter $\kappa$ is chosen from ${0.005, 0.01, 0.02}$ and the schedule of $\epsilon$ during training also follows \cite{zhang2020robust}. \\
\textbf{RADIAL-DQN}\quad
Following the original implementation from \cite{oikarinen2020robust}, we reproduce the results of RADIAL-DQN with our environment settings.

\textbf{(b) DQN Attackers} \quad 

For \textbf{PGD} attacks, we apply 10-step untargeted PGD attacks. We also try 50-step PGD attacks, but we find that the rewards of robust agents do not further reduce.\\
For \textbf{MinBest} attacks, we use FGSM to compute state perturbations following \cite{huang2017adversarial}.\\
For \textbf{PA-AD} attacks, the PA-AD attackers are learned with the ACKTR algorithm. We use a learning rate $0.0001$ and train the attackers for 5 million frames. 

\textbf{(c) \ourdqn} \quad 
For \ourdqn, we keep the same network architectures and hyperparameters as in vanilla DQN agents. During training, we set the adjustable weight $\kappa_{wst}$ as 0 for the first $2 \times 10^6$ steps, and then exponentially increase it from 0 to 0.5 for $4 \times 10^6$ steps.

\subsection{Additional Experiment Results on Robustness Performance}
\label{app:exp:res}

\begin{table*}[!t]
\vspace{-0.5em}
\centering
\renewcommand{\arraystretch}{1.2}
\resizebox{\textwidth}{!}{%
\setlength{\tabcolsep}{4pt}
\begin{tabular}{c|c|p{2cm}<{\centering} p{2cm}<{\centering} p{2cm}<{\centering} p{2cm}<{\centering} p{2cm}<{\centering} p{2cm}<{\centering}}
\toprule
\textbf{Environment}                  & \textbf{Model}          & \begin{tabular}[c]{@{}c@{}}{\bf Natural}\\{\bf Reward}\end{tabular} & \textbf{Random}      & \textbf{MAD}          & \textbf{RS}          & \textbf{SA-RL}       & \textbf{PA-AD}        \\
 \hline
\multirow{6}{5.5em}{\begin{tabular}[c]{@{}c@{}}{\textbf{Halfcheetah}}\\state-dim: 17\\$\epsilon$=0.15\end{tabular}}    & PPO (vanilla)    & \textbf{7117 $\pm$ 98}    & 5486 $\pm$ 1378 & 1836 $\pm$ 866  & 489 $\pm$ 758   & -660 $\pm$ 218  & -356 $\pm$ 407     \\  
 & SA-PPO           & 3632 $\pm$ 20      & 3619 $\pm$ 18   & 3624 $\pm$ 23   & 3283 $\pm$ 20   & 3028 $\pm$ 23  & 2512 $\pm$ 16  \\
 & ATLA-PPO            & 6157 $\pm$ 852     & 6164 $\pm$ 603  & 5790 $\pm$ 174  & 4806 $\pm$ 392  & 5058 $\pm$ 418  & 2576 $\pm$ 548  \\
& PA-ATLA-PPO      & 6289 $\pm$ 342     & \textbf{6215 $\pm$ 346}  & \textbf{5961 $\pm$ 253}  & 5226 $\pm$ 114  & 4872 $\pm$ 379  & 3840 $\pm$ 273  \\
& RADIAL-PPO & 4724 $\pm$ 14 & 4731 $\pm$ 42 & 3994 $\pm$ 156 & 3864 $\pm$ 232 & 3253 $\pm$ 131 & 2674 $\pm$ 168\\
& \cellcolor{lightgray}{\textbf{\ourppo (Ours)}} & \cellcolor{lightgray}{6032 $\pm$ 68}    & \cellcolor{lightgray}{5969 $\pm$ 149}  & \cellcolor{lightgray}{5850 $\pm$ 228}  & \cellcolor{lightgray}{\textbf{5319 $\pm$ 220}}  & \cellcolor{lightgray}{\textbf{5365 $\pm$ 54}}  & \cellcolor{lightgray}{\textbf{4269 $\pm$ 172}}  \\
\midrule
\multirow{6}{5.5em}{\begin{tabular}[c]{@{}c@{}}{\textbf{Hopper} }\\state-dim: 11\\$\epsilon$=0.075\end{tabular}}    & PPO (vanilla)    & 3167 $\pm$ 542      & 2101 $\pm$ 793  & 1410 $\pm$ 655  & 794 $\pm$ 238   & 636 $\pm$ 9  & 160 $\pm$ 136     \\
 & SA-PPO           & 3705 $\pm$ 2     & 2710 $\pm$ 801  & 2652 $\pm$ 835  & 1130 $\pm$ 42     & 1076 $\pm$ 791    & 856 $\pm$ 21  \\
 & ATLA-PPO            & 3291 $\pm$ 600    & 3165 $\pm$ 576  & 2814 $\pm$ 725  & 2244 $\pm$ 618  & 1772 $\pm$ 802 & 1232 $\pm$ 350  \\
& PA-ATLA-PPO      & 3449 $\pm$ 237      & 3325 $\pm$ 239  & 3145 $\pm$ 546  & 3002 $\pm$ 329  & 1529 $\pm$ 284   & 2521 $\pm$ 325  \\
& RADIAL-PPO & \textbf{3740 $\pm$ 44} & \textbf{3729 $\pm$ 100} & 3214 $\pm$ 142 & 2141 $\pm$ 232 & 1722 $\pm$ 186 & 1439  $\pm$ 204\\
 & \cellcolor{lightgray}{\textbf{\ourppo (Ours)}} & \cellcolor{lightgray}{3616 $\pm$ 99}     & \cellcolor{lightgray}{3633 $\pm$ 30}   & \cellcolor{lightgray}{\textbf{3541 $\pm$ 207}}  & \cellcolor{lightgray}{\textbf{3277 $\pm$ 159}}  & \cellcolor{lightgray}{\textbf{2390 $\pm$ 145}}   & \cellcolor{lightgray}{\textbf{2579 $\pm$ 229}}  \\
 \midrule
\multirow{6}{5.5em}{\begin{tabular}[c]{@{}c@{}}{\textbf{Walker2d}}\\state-dim: 17\\$\epsilon$=0.05\end{tabular}}    & PPO (vanilla)    & 4472 $\pm$ 635     & 3007 $\pm$ 1200 & 2869 $\pm$ 1271 & 1336 $\pm$ 654  & 1086 $\pm$ 516    & 804 $\pm$ 130     \\
 & SA-PPO           & 4487 $\pm$ 61      & 4465 $\pm$ 39  & 3668 $\pm$ 689  & 3808 $\pm$ 138  & 2908 $\pm$ 336  & 1042 $\pm$ 353  \\
 & ATLA-PPO            & 3842 $\pm$ 475     & 3927 $\pm$ 368  & 3836 $\pm$ 492  & 3239 $\pm$ 294  & 3663 $\pm$ 707  & 1224 $\pm$ 770  \\
& PA-ATLA-PPO      & 4178 $\pm$ 529     & 4129 $\pm$ 78   & 4024 $\pm$ 272  & 3966 $\pm$ 307  & 3450 $\pm$ 178  & 2248 $\pm$ 131  \\
& RADIAL-PPO & \textbf{5251 $\pm$ 12} & \textbf{5184 $\pm$ 42} &  \textbf{4494 $\pm$ 150}  & 3572  $\pm$ 239 & 3320 $\pm$ 245 & 1395  $\pm$ 194\\
 & \cellcolor{lightgray}{\textbf{\ourppo (Ours)}} & \cellcolor{lightgray}{4156 $\pm$ 495}     & \cellcolor{lightgray}{4244 $\pm$ 157}  & \cellcolor{lightgray}{4177 $\pm$ 176}  & \cellcolor{lightgray}{\textbf{4093 $\pm$ 138}}  & \cellcolor{lightgray}{\textbf{3770 $\pm$ 196}}  & \cellcolor{lightgray}{\textbf{2722 $\pm$ 173}}  \\
 \midrule
 \multirow{6}{5.5em}{\begin{tabular}[c]{@{}c@{}}{\textbf{Ant}}\\state-dim: 111\\$\epsilon$=0.15\end{tabular}}    & PPO (vanilla)    & \textbf{5687 $\pm$ 758}     & 5261 $\pm$ 1005 & 1759 $\pm$ 828 & 268 $\pm$ 227  & -872 $\pm$ 436   & -2580 $\pm$ 872     \\
 & SA-PPO           & 4292 $\pm$ 384     & 4986 $\pm$ 452  & 4662 $\pm$ 522  & 3412 $\pm$ 1755 & 2511 $\pm$ 1117 & -1296 $\pm$ 923  \\
 & ATLA-PPO            & 5359 $\pm$ 153     & 5366 $\pm$ 104  & 5240 $\pm$ 170  & 4136 $\pm$ 149  & 3765 $\pm$ 101  & 220 $\pm$ 338  \\
& PA-ATLA-PPO      & 5469 $\pm$ 106     & 5496 $\pm$ 158  & \textbf{5328 $\pm$ 196}  & 4124 $\pm$ 291  & 3694 $\pm$ 188  & 2986 $\pm$ 364  \\
& RADIAL-PPO & 5076 $\pm$ 254 & 5031 $\pm$ 142 & 4777 $\pm$ 156 & 3731 $\pm$ 177 & 3188 $\pm$ 115 & 1544 $\pm$ 194\\
 & \cellcolor{lightgray}{\textbf{\ourppo (Ours)}} & \cellcolor{lightgray}{5596 $\pm$ 225}     & \cellcolor{lightgray}{\textbf{5558 $\pm$ 241}}  & \cellcolor{lightgray}{5284 $\pm$ 182}  & \cellcolor{lightgray}{\textbf{4339 $\pm$ 160}}  & \cellcolor{lightgray}{\textbf{3822 $\pm$ 185}}  & \cellcolor{lightgray}{\textbf{3164 $\pm$ 163}}  \\
 
\bottomrule
\end{tabular}}
\caption{Average episode rewards $\pm$ standard deviation over 50 episodes on five baselines and \ourppo on Hopper, Walker2d, Halfcheetah, and Ant. Natural reward and rewards under five types of attacks are reported. Under each column corresponding to an evaluation metric, we bold the best results. And the row for the most robust agent is highlighted as \colorbox{lightgray}{gray}. Note that \textit{ATLA-PPO, PA-ATLA-PPO and RADIAL-PPO are trained with more than $2\times$ steps than \ourppo}, as reported in Table~\ref{efficiency}.}
\label{tab:mujoco_app}
\vspace{-0.5em}
\end{table*}

\textbf{MuJoCo Experiments}\quad We reported all results in Table \ref{tab:mujoco_app} including episode rewards of well-trained robust models under various adversarial attacks.  Under this full adversarial evaluation, we provide a robustness comparison between baselines and our algorithm from a comprehensive angle. We report the attack performance under a common chosen perturbation budget $\epsilon$ following \cite{zhang2020robust, zhang2021robust}. Results in all four MuJoCo environments show that our \ourppo is the most robust method. 
We emphasize that Table~\ref{tab:mujoco_app} reports the final performance of all robust training baselines after convergence, but some baselines takes much more steps than our \ourppo. Table~\ref{tab:mujoco_app_less} in Appendix~\ref{app:exp:add} compares all methods under the same number of training steps, where \ourppo outperforms baselines more significantly.

\textbf{Atari Experiments}\quad In Table \ref{tab:atari_app}, we present performance based on DQN on four Atari environments under 1/255 and 3/255 $\epsilon$ attack. Under $\epsilon$ of 1/255, our \ourdqn achieves competitive performance under PGD attacks and outperforms all baselines under MinBest and PA-AD attacks, which shows better robustness of \ourdqn under weaker attacks.\\
Based on vanilla A2C, we implement SA-A2C\citep{zhang2020robust} and PA-ATLA-A2C\citep{sun2021strongest} as robust baselines. We implement \ourac to compare with ATLA methods on Atari. In Table \ref{tab:a2c_app}, under any $\epsilon$ value, our WocaR-A2C outperforms other robust baselines across different attacks. We can conclude that our method considerably enhance more robustness than ATLA methods on Atari.
\begin{table*}[!h]
\centering
\renewcommand{\arraystretch}{1.2}
\resizebox{\textwidth}{!}{%
\setlength{\tabcolsep}{3pt}
\begin{tabular}{c|p{3.1cm}<{\centering}|p{2.1cm}<{\centering}| cc|cc|cc }
\toprule
\multirow{2}{*}{\textbf{Environment}}                  & \multirow{2}{*}{\textbf{Model}}          & \multirow{2}{*}{\begin{tabular}[c]{@{}c@{}}{\bf Natural}\\{\bf Reward}\end{tabular}} &  \multicolumn{2}{c|}{\textbf{PGD} (10 steps)} &  \multicolumn{2}{c|}{\textbf{MinBest}}  & \multicolumn{2}{c}{\textbf{PA-AD}}        \\
 & & & $\epsilon$=1/255 & $\epsilon$=3/255 & $\epsilon$=1/255 & $\epsilon$=3/255 & $\epsilon$=1/255 & $\epsilon$=3/255 \\
 \hline
\multirow{4}{*}{\textbf{Pong}} & DQN & 21.0 $\pm$ 0.0 & -21.0 $\pm$ 0.0  & -21.0 $\pm$ 0.0 & -7.4 $\pm$ 2.8 & -9.7 $\pm$ 4.0  & -18.2 $\pm$ 2.3 & -19.0 $\pm$ 2.2 \\
 & SA-DQN & 21.0 $\pm$ 0.0 & 21.0 $\pm$ 0.0 & 21.0 $\pm$ 0.0 & 21.0 $\pm$ 0.0 & 20.6 $\pm$ 3.5  & 20.4 $\pm$ 1.8 & 18.7 $\pm$ 2.6  \\
 & RADIAL-DQN & 21.0 $\pm$ 0.0 & 21.0 $\pm$ 0.0 & 21.0 $\pm$ 0.0 & 21.0 $\pm$ 0.0 & 19.5 $\pm$ 2.1 & 20.3 $\pm$ 2.5 & 13.2 $\pm$ 1.8  \\
 & \cellcolor{lightgray}{\textbf{\ourdqn (Ours)}} & \cellcolor{lightgray}{\textbf{21.0 $\pm$ 0.0}}  & \cellcolor{lightgray}{\textbf{21.0 $\pm$ 0.0}} & \cellcolor{lightgray}{\textbf{21.0 $\pm$ 0.0}} & 
 \cellcolor{lightgray}{\textbf{21.0 $\pm$ 0.0}} & 
 \cellcolor{lightgray}{\textbf{20.8 $\pm$ 3.3}} & 
 \cellcolor{lightgray}{\textbf{21.0 $\pm$ 0.2}} & 
 \cellcolor{lightgray}{\textbf{19.7 $\pm$ 2.4 }}\\
 \midrule
\multirow{4}{*}{\textbf{Freeway}} & DQN & \textbf{34.0 $\pm$ 0.1} & 0.0 $\pm$ 0.0 & 0.0 $\pm$ 0.0 & 9.5 $\pm$ 3.0 & 5.5 $\pm$ 1.8 & 9.3 $\pm$ 2.7 & 4.7 $\pm$ 2.9 \\
 & SA-DQN & 30.0 $\pm$ 0.0 & 30.0 $\pm$ 0.0  & 30.0 $\pm$ 0.0 & 27.2 $\pm$ 3.4 & 18.3 $\pm$ 3.0 & 20.1 $\pm$ 4.0 & 9.5 $\pm$ 3.8 \\
 & RADIAL-DQN & 33.1 $\pm$ 0.2 & \textbf{33.1 $\pm$ 0.2} & \textbf{33.2 $\pm$ 0.2} & 22.6 $\pm$ 3.3 & 16.4 $\pm$ 2.3 & 18.5 $\pm$ 4.2 & 10.8 $\pm$ 3.6 \\
 & \cellcolor{lightgray}{\textbf{\ourdqn (Ours)}} & \cellcolor{lightgray}{31.2 $\pm$ 0.4}  & \cellcolor{lightgray}{31.2 $\pm$ 0.5} & \cellcolor{lightgray}{31.4 $\pm$ 0.3} & \cellcolor{lightgray}{\textbf{29.6 $\pm$ 2.5}} & \cellcolor{lightgray}{\textbf{19.8 $\pm$ 3.8 }} & \cellcolor{lightgray}{\textbf{24.9 $\pm$ 3.7}} & \cellcolor{lightgray}{\textbf{12.3 $\pm$ 3.2}}\\
\midrule
\multirow{4}{*}{\textbf{BankHeist}} & DQN & \textbf{1308 $\pm$ 24} & 54 $\pm$ 20 & 0 $\pm$ 0 & 210 $\pm$ 79 & 119 $\pm$ 65 & 213 $\pm$ 111 & 102 $\pm$ 92 \\
 & SA-DQN & 1245 $\pm$ 14 & \textbf{1245 $\pm$ 10} & 1176 $\pm$ 63 & 1148 $\pm$ 36 & 1024 $\pm$ 31 & 1054 $\pm$ 11 & 489 $\pm$ 106 \\
 & RADIAL-DQN & 1178 $\pm$ 4 & 1178 $\pm$ 4 & 1176 $\pm$ 63 & 1049 $\pm$ 27 & 928 $\pm$ 113 & 1035 $\pm$ 46 & 508 $\pm$ 85 \\
 & \cellcolor{lightgray}{\textbf{\ourdqn (Ours)}} & \cellcolor{lightgray}{1220 $\pm$ 12}  & \cellcolor{lightgray}{1220 $\pm$ 3} & \cellcolor{lightgray}{\textbf{1214 $\pm$ 7}} & \cellcolor{lightgray}{\textbf{1192 $\pm$ 12}} & \cellcolor{lightgray}{\textbf{1045 $\pm$ 20}} & \cellcolor{lightgray}{\textbf{1096 $\pm$ 19}} & \cellcolor{lightgray}{\textbf{754 $\pm$ 102}}\\
 \midrule
 \multirow{4}{*}{\textbf{RoadRunner}} & DQN & \textbf{45527 $\pm$ 4894} & 0 $\pm$ 0 & 0 $\pm$ 0 & 14962 $\pm$ 6431 & 2985 $\pm$ 1440 & 842 $\pm$ 41 & 203 $\pm$ 65 \\
 & SA-DQN & 44638 $\pm$ 2367 & 43970 $\pm$ 975 & 20678 $\pm$ 1563 & 39736 $\pm$ 2315 & 4214 $\pm$ 2587 & 38432 $\pm$ 3574 & 5516 $\pm$ 4684 \\
 & RADIAL-DQN & 44675 $\pm$ 5854 &  \textbf{44605 $\pm$ 1094} & 38576 $\pm$ 1960 & 38060 $\pm$ 1799 & 8476 $\pm$ 3964 & 36310 $\pm$ 9149 & 1290 $\pm$ 4015 \\
 & \cellcolor{lightgray}{\textbf{\ourdqn (Ours)}} & \cellcolor{lightgray}{44156 $\pm$ 2279} & \cellcolor{lightgray}{44079 $\pm$ 2154} & \cellcolor{lightgray}{\textbf{38720 $\pm$ 1765}} & \cellcolor{lightgray}{\textbf{40758 $\pm$ 3369}} & \cellcolor{lightgray}{\textbf{10545 $\pm$ 2984}} & \cellcolor{lightgray}{\textbf{38954 $\pm$ 3647}} & \cellcolor{lightgray}{\textbf{8239 $\pm$ 2766}}\\
\bottomrule
\end{tabular}}
\caption{Average episode rewards $\pm$ standard deviation over 1000 episodes on baselines and \ourdqn on Pong, Freeway, BankHeist, and RoadRunner. Natural reward and rewards under different attacks with $\epsilon$ of 1/255 and 3/255 are reported. We bold the best results for each evaluation metric. And the row for the most robust agents on all environments are highlighted by \textcolor{gray}{gray}.}
\label{tab:atari_app}
\vspace{-0.5em}
\end{table*}

\begin{table*}[!h]
\centering
\renewcommand{\arraystretch}{1.2}
\resizebox{\textwidth}{!}{%
\setlength{\tabcolsep}{3pt}
\begin{tabular}{c|p{3cm}<{\centering}|p{2.1cm}<{\centering}| cc|cc|cc }
\toprule
\multirow{2}{*}{\textbf{Environment}}                  & \multirow{2}{*}{\textbf{Model}}          & \multirow{2}{*}{\begin{tabular}[c]{@{}c@{}}{\bf Natural}\\{\bf Reward}\end{tabular}} &  \multicolumn{2}{c|}{\textbf{PGD} (10 steps)} &  \multicolumn{2}{c|}{\textbf{MinBest}}  & \multicolumn{2}{c}{\textbf{PA-AD}}        \\
 & & & $\epsilon$=1/255 & $\epsilon$=3/255 & $\epsilon$=1/255 & $\epsilon$=3/255 & $\epsilon$=1/255 & $\epsilon$=3/255 \\
 \hline
\multirow{4}{*}{\textbf{BankHeist}} & A2C & \textbf{1228 $\pm$ 93} & 67 $\pm$ 14 & 0 $\pm$ 0 & 972 $\pm$ 99 & 697 $\pm$ 153  & 636 $\pm$ 74 & 314 $\pm$ 116 \\
 & SA-A2C & 1029 $\pm$ 152 & 1029 $\pm$ 156 & 976 $\pm$ 54 & 902 $\pm$ 89 & 786 $\pm$ 52  & 836 $\pm$ 70 & 644 $\pm$ 153  \\
 & PA-ATLA-A2C & 1076 $\pm$ 56 & 1075 $\pm$ 79 & 1013 $\pm$ 69 & 957 $\pm$ 78 & 842 $\pm$ 154 & 862 $\pm$ 106 & 757 $\pm$ 132 \\
 & \cellcolor{lightgray}{\textbf{\ourac (Ours)}} &
 \cellcolor{lightgray}{1089 $\pm$ 34}  & \cellcolor{lightgray}{\textbf{1089 $\pm$ 78}} & \cellcolor{lightgray}{\textbf{ 1035 $\pm$ 102 }} & \cellcolor{lightgray}{\textbf{1043 $\pm$ 29}} & \cellcolor{lightgray}{\textbf{937 $\pm$ 65}} & \cellcolor{lightgray}{\textbf{1004 $\pm$ 94}} & \cellcolor{lightgray}{\textbf{879 $\pm$ 128}}\\
\bottomrule
\end{tabular}}
\caption{Average episode rewards $\pm$ standard deviation over 1000 episodes on baselines and VaR-A2C on BankHeist. Natural reward and rewards under different attacks with $\epsilon$ of 1/255 and 3/255 are reported. We bold the best results for each evaluation metric. And the row for the most robust agents on all environments are highlighted by \textcolor{gray}{gray}.}
\label{tab:a2c_app}
\vspace{-0.5em}
\end{table*}

\subsection{Additional Evaluation and Ablation Studies}
\label{app:exp:abl}
\subsubsection{Robustness Evaluation Using Multiple $\epsilon$} 
\label{app:exp:eps}
To study how \ourppo performs under attacks with different value of $\epsilon$, Figure \ref{fig:eps} shows the evaluation of our algorithms under different $\epsilon$ attacks compared with the baselines in Hopper and Walker2d. We can conclude that our robustly trained model universally and significantly outperforms other robust agents considering various attack budget $\epsilon$.
\begin{figure*}[!t]
    \centering
    \vspace{-0.5em}
    \begin{subfigure}[t]{0.31\textwidth}
        \centering
        \resizebox{\textwidth}{!}{
\begin{tikzpicture}[scale=0.65]

\definecolor{color0}{rgb}{0.917647058823529,0.917647058823529,0.949019607843137}
\definecolor{color1}{rgb}{0.729411764705882,0.333333333333333,0.827450980392157}
\definecolor{color2}{rgb}{0.0980392156862745,0.0980392156862745,0.43921568627451}
\definecolor{color3}{rgb}{0.201253172212011,0.690792081537903,0.479667611892753}
\definecolor{color4}{rgb}{0.8,0.270588235294118,0}

\begin{axis}[
axis background/.style={fill=white},
axis line style={black},
legend cell align={left},
legend style={
  fill opacity=0.2,
  draw opacity=1,
  text opacity=1,
  at={(0.47,0.97)},
  anchor=north west,
  draw=none,
  fill=color0
},
tick align=outside,
x grid style={white},
xlabel={Epsilon of Robust Sarsa attacker},
xmajorgrids,
xmin=-0.01, xmax=0.21,
xtick style={color=white!15!black},
y grid style={white},
ylabel={Average episode rewards},
ymajorgrids,
xtick={-0.025,0,0.025,0.05,0.075,0.1,0.125,0.15,0.175,0.2,0.225},
xticklabels={, 0.00, ,0.05, ,0.10, ,0.15, ,0.20,},
ymin=400, ymax=5000,
ytick style={color=white!15!black}
]
\path [draw=color1, fill=color1, opacity=0.2, line width=0.12pt]
(axis cs:0,3707)
--(axis cs:0,3703)
--(axis cs:0.05,1302)
--(axis cs:0.075,1088)
--(axis cs:0.1,1056)
--(axis cs:0.15,835)
--(axis cs:0.2,741)
--(axis cs:0.2,777)
--(axis cs:0.2,777)
--(axis cs:0.15,877)
--(axis cs:0.1,1078)
--(axis cs:0.075,1172)
--(axis cs:0.05,2654)
--(axis cs:0,3707)
--cycle;

\path [draw=color2, fill=color2, opacity=0.2, line width=0.12pt]
(axis cs:0,3891)
--(axis cs:0,2691)
--(axis cs:0.05,2574)
--(axis cs:0.075,1626)
--(axis cs:0.1,1411)
--(axis cs:0.15,1086)
--(axis cs:0.2,640)
--(axis cs:0.2,2482)
--(axis cs:0.2,2482)
--(axis cs:0.15,2580)
--(axis cs:0.1,2727)
--(axis cs:0.075,2862)
--(axis cs:0.05,3792)
--(axis cs:0,3891)
--cycle;

\path [draw=color3, fill=color3, opacity=0.2, line width=0.12pt]
(axis cs:0,3686)
--(axis cs:0,3212)
--(axis cs:0.05,3103)
--(axis cs:0.075,2873)
--(axis cs:0.1,2443)
--(axis cs:0.15,2437)
--(axis cs:0.2,2006)
--(axis cs:0.2,2506)
--(axis cs:0.2,2506)
--(axis cs:0.15,2645)
--(axis cs:0.1,3109)
--(axis cs:0.075,3131)
--(axis cs:0.05,3447)
--(axis cs:0,3686)
--cycle;

\path [draw=color4, fill=color4, opacity=0.2, line width=0.12pt]
(axis cs:0,3715)
--(axis cs:0,3517)
--(axis cs:0.05,2954)
--(axis cs:0.075,2942)
--(axis cs:0.1,2616)
--(axis cs:0.15,2276)
--(axis cs:0.2,2114)
--(axis cs:0.2,2756)
--(axis cs:0.2,2756)
--(axis cs:0.15,2872)
--(axis cs:0.1,3364)
--(axis cs:0.075,3526)
--(axis cs:0.05,3854)
--(axis cs:0,3715)
--cycle;

\addplot [line width=0.48pt, color1, mark=square*, mark size=3, mark options={solid}]
table {%
0 3705
0.05 1978
0.075 1130
0.1 1067
0.15 856
0.2 759
};
\addlegendentry{SA-PPO}
\addplot [line width=0.48pt, color2, mark=*, mark size=3, mark options={solid}]
table {%
0 3291
0.05 3183
0.075 2244
0.1 2069
0.15 1833
0.2 1561
};
\addlegendentry{ATLA-PPO}
\addplot [line width=0.48pt, color3, mark=pentagon*, mark size=3, mark options={solid}]
table {%
0 3449
0.05 3275
0.075 3002
0.1 2776
0.15 2541
0.2 2256
};
\addlegendentry{PA-ATLA-PPO}
\addplot [line width=0.48pt, color4, mark=asterisk, mark size=3, mark options={solid}]
table {%
0 3616
0.05 3404
0.075 3277
0.1 2990
0.15 2574
0.2 2435
};
\addlegendentry{\ourppo (Ours)}
\end{axis}

\end{tikzpicture}}
        \vspace{-1.5em}
        \caption{\small{Hopper: Robust Sarsa}}
    \end{subfigure}
    \hfill
    \begin{subfigure}[t]{0.31\textwidth}
        \centering
        \resizebox{\textwidth}{!}{
\begin{tikzpicture}[scale=0.65]

\definecolor{color0}{rgb}{0.917647058823529,0.917647058823529,0.949019607843137}
\definecolor{color1}{rgb}{0.729411764705882,0.333333333333333,0.827450980392157}
\definecolor{color2}{rgb}{0.0980392156862745,0.0980392156862745,0.43921568627451}
\definecolor{color3}{rgb}{0.201253172212011,0.690792081537903,0.479667611892753}
\definecolor{color4}{rgb}{0.8,0.270588235294118,0}

\begin{axis}[
axis background/.style={fill=white},
axis line style={black},
legend cell align={left},
legend style={
  fill opacity=0.2,
  draw opacity=1,
  text opacity=1,
  at={(0.47, 0.97)},
  anchor=north west,
  draw=none,
  fill=color0
},
tick align=outside,
x grid style={white},
xlabel={Epsilon of SA-RL attacker},
xmajorgrids,
xmin=-0.01, xmax=0.21,
xtick style={color=white!15!black},
y grid style={white},
ylabel={Average episode rewards},
ymajorgrids,
xtick={-0.025,0,0.025,0.05,0.075,0.1,0.125,0.15,0.175,0.2,0.225},
xticklabels={, 0.00, ,0.05, ,0.10, ,0.15, ,0.20,},
ymin=0, ymax=5000,
ytick style={color=white!15!black}
]

\path [draw=color1, fill=color1, opacity=0.2, line width=0.12pt]
(axis cs:0,3707)
--(axis cs:0,3703)
--(axis cs:0.05,3534)
--(axis cs:0.075,685)
--(axis cs:0.1,435)
--(axis cs:0.15,295)
--(axis cs:0.2,586)
--(axis cs:0.2,978)
--(axis cs:0.2,978)
--(axis cs:0.15,1159)
--(axis cs:0.1,1549)
--(axis cs:0.075,1467)
--(axis cs:0.05,3540)
--(axis cs:0,3707)
--cycle;

\path [draw=color2, fill=color2, opacity=0.2, line width=0.12pt]
(axis cs:0,3891)
--(axis cs:0,2691)
--(axis cs:0.05,1183)
--(axis cs:0.075,970)
--(axis cs:0.1,431)
--(axis cs:0.15,109)
--(axis cs:0.2,141)
--(axis cs:0.2,1711)
--(axis cs:0.2,1711)
--(axis cs:0.15,1813)
--(axis cs:0.1,2105)
--(axis cs:0.075,2574)
--(axis cs:0.05,2573)
--(axis cs:0,3891)
--cycle;

\path [draw=color3, fill=color3, opacity=0.2, line width=0.12pt]
(axis cs:0,3686)
--(axis cs:0,3212)
--(axis cs:0.05,1138)
--(axis cs:0.075,1245)
--(axis cs:0.1,384)
--(axis cs:0.15,499)
--(axis cs:0.2,160)
--(axis cs:0.2,1624)
--(axis cs:0.2,1624)
--(axis cs:0.15,1269)
--(axis cs:0.1,1978)
--(axis cs:0.075,1813)
--(axis cs:0.05,2806)
--(axis cs:0,3686)
--cycle;

\path [draw=color4, fill=color4, opacity=0.2, line width=0.12pt]
(axis cs:0,3715)
--(axis cs:0,3517)
--(axis cs:0.05,2311)
--(axis cs:0.075,2686)
--(axis cs:0.1,2093)
--(axis cs:0.15,2353)
--(axis cs:0.2,1251)
--(axis cs:0.2,2021)
--(axis cs:0.2,2021)
--(axis cs:0.15,2827)
--(axis cs:0.1,2995)
--(axis cs:0.075,3294)
--(axis cs:0.05,3617)
--(axis cs:0,3715)
--cycle;

\addplot [line width=0.48pt, color1, mark=square*, mark size=3, mark options={solid}]
table {%
0 3705
0.05 3537
0.075 1076
0.1 992
0.15 727
0.2 782
};
\addlegendentry{SA-PPO}
\addplot [line width=0.48pt, color2, mark=*, mark size=3, mark options={solid}]
table {%
0 3291
0.05 1878
0.075 1772
0.1 1268
0.15 961
0.2 926
};
\addlegendentry{ATLA-PPO}
\addplot [line width=0.48pt, color3, mark=pentagon*, mark size=3, mark options={solid}]
table {%
0 3449
0.05 1972
0.075 1529
0.1 1181
0.15 884
0.2 892
};
\addlegendentry{PA-ATLA-PPO}
\addplot [line width=0.48pt, color4, mark=asterisk, mark size=3, mark options={solid}]
table {%
0 3616
0.05 2964
0.075 2990
0.1 2544
0.15 2590
0.2 1636
};
\addlegendentry{\ourppo (Ours)}
\end{axis}

\end{tikzpicture}}
        \vspace{-1.5em}
        \caption{\small{Hopper: SA-RL}}
    \end{subfigure}
    \hfill
    \begin{subfigure}[t]{0.31\textwidth}
        \centering
        \resizebox{\textwidth}{!}{
\begin{tikzpicture}[scale=0.65]

\definecolor{color0}{rgb}{0.917647058823529,0.917647058823529,0.949019607843137}
\definecolor{color1}{rgb}{0.729411764705882,0.333333333333333,0.827450980392157}
\definecolor{color2}{rgb}{0.0980392156862745,0.0980392156862745,0.43921568627451}
\definecolor{color3}{rgb}{0.201253172212011,0.690792081537903,0.479667611892753}
\definecolor{color4}{rgb}{0.8,0.270588235294118,0}

\begin{axis}[
axis background/.style={fill=white},
axis line style={black},
legend cell align={left},
legend style={
  fill opacity=0.2,
  draw opacity=1,
  text opacity=1,
  at={(0.47,0.97)},
  anchor=north west,
  draw=none,
  fill=color0
},
tick align=outside,
x grid style={white},
xlabel={Epsilon of PA-AD attacker},
xmajorgrids,
xmin=-0.01, xmax=0.21,
xtick style={color=white!15!black},
y grid style={white},
ylabel={Average episode rewards},
ymajorgrids,
xtick={-0.025,0,0.025,0.05,0.075,0.1,0.125,0.15,0.175,0.2,0.225},
xticklabels={, 0.00, ,0.05, ,0.10, ,0.15, ,0.20,},
ymin=0, ymax=4600,
ytick style={color=white!15!black}
]
\path [draw=color1, fill=color1, opacity=0.2, line width=0.12pt]
(axis cs:0,3707)
--(axis cs:0,3703)
--(axis cs:0.05,2369)
--(axis cs:0.075,835)
--(axis cs:0.1,393)
--(axis cs:0.15,246)
--(axis cs:0.2,170)
--(axis cs:0.2,1158)
--(axis cs:0.2,1158)
--(axis cs:0.15,1148)
--(axis cs:0.1,1191)
--(axis cs:0.075,877)
--(axis cs:0.05,2713)
--(axis cs:0,3707)
--cycle;

\path [draw=color2, fill=color2, opacity=0.2, line width=0.12pt]
(axis cs:0,3891)
--(axis cs:0,2691)
--(axis cs:0.05,1061)
--(axis cs:0.075,882)
--(axis cs:0.1,397)
--(axis cs:0.15,207)
--(axis cs:0.2,349)
--(axis cs:0.2,1043)
--(axis cs:0.2,1043)
--(axis cs:0.15,1231)
--(axis cs:0.1,1527)
--(axis cs:0.075,1582)
--(axis cs:0.05,1987)
--(axis cs:0,3891)
--cycle;

\path [draw=color3, fill=color3, opacity=0.2, line width=0.12pt]
(axis cs:0,3686)
--(axis cs:0,3212)
--(axis cs:0.05,2714)
--(axis cs:0.075,2196)
--(axis cs:0.1,1376)
--(axis cs:0.15,1330)
--(axis cs:0.2,1059)
--(axis cs:0.2,2115)
--(axis cs:0.2,2115)
--(axis cs:0.15,2200)
--(axis cs:0.1,2702)
--(axis cs:0.075,2846)
--(axis cs:0.05,3258)
--(axis cs:0,3686)
--cycle;

\path [draw=color4, fill=color4, opacity=0.2, line width=0.12pt]
(axis cs:0,3715)
--(axis cs:0,3517)
--(axis cs:0.05,2289)
--(axis cs:0.075,2334)
--(axis cs:0.1,1952)
--(axis cs:0.15,1855)
--(axis cs:0.2,1475)
--(axis cs:0.2,2179)
--(axis cs:0.2,2179)
--(axis cs:0.15,2419)
--(axis cs:0.1,2874)
--(axis cs:0.075,2824)
--(axis cs:0.05,3071)
--(axis cs:0,3715)
--cycle;

\addplot [line width=0.48pt, color1, mark=square*, mark size=3, mark options={solid}]
table {%
0 3705
0.05 2541
0.075 856
0.1 792
0.15 697
0.2 664
};
\addlegendentry{SA-PPO}
\addplot [line width=0.48pt, color2, mark=*, mark size=3, mark options={solid}]
table {%
0 3291
0.05 1524
0.075 1232
0.1 962
0.15 719
0.2 696
};
\addlegendentry{ATLA-PPO}
\addplot [line width=0.48pt, color3, mark=pentagon*, mark size=3, mark options={solid}]
table {%
0 3449
0.05 2986
0.075 2521
0.1 2039
0.15 1765
0.2 1587
};
\addlegendentry{PA-ATLA-PPO}
\addplot [line width=0.48pt, color4, mark=asterisk, mark size=3, mark options={solid}]
table {%
0 3616
0.05 2680
0.075 2579
0.1 2413
0.15 2137
0.2 1827
};
\addlegendentry{\ourppo (Ours)}
\end{axis}

\end{tikzpicture}}
        \vspace{-1.5em}
        \caption{\small{Hopper: PA-AD}}
    \end{subfigure}\\
    \begin{subfigure}[t]{0.31\textwidth}
        \centering
        \resizebox{\textwidth}{!}{
\begin{tikzpicture}[scale=0.65]

\definecolor{color0}{rgb}{0.917647058823529,0.917647058823529,0.949019607843137}
\definecolor{color1}{rgb}{0.729411764705882,0.333333333333333,0.827450980392157}
\definecolor{color2}{rgb}{0.0980392156862745,0.0980392156862745,0.43921568627451}
\definecolor{color3}{rgb}{0.201253172212011,0.690792081537903,0.479667611892753}
\definecolor{color4}{rgb}{0.8,0.270588235294118,0}

\begin{axis}[
axis background/.style={fill=white},
axis line style={black},
legend cell align={left},
legend style={
  fill opacity=0.2,
  draw opacity=1,
  text opacity=1,
  at={(0.47,0.97)},
  anchor=north west,
  draw=none,
  fill=color0
},
tick align=outside,
x grid style={white},
xlabel={Epsilon of Robust Sarsa attacker},
xmajorgrids,
xmin=-0.015, xmax=0.315,
xtick style={color=white!15!black},
y grid style={white},
ylabel={Average episode rewards},
ymajorgrids,
xtick={-0.05,0,0.05,0.1,0.15,0.2,0.25,0.3,0.35},
xticklabels={, 0.00, ,0.10, ,0.20,  ,0.30,  },
ymin=0, ymax=6500,
ytick style={color=white!15!black}
]
\path [draw=color1, fill=color1, opacity=0.2, line width=0.12pt]
(axis cs:0,4548)
--(axis cs:0,4426)
--(axis cs:0.05,3106)
--(axis cs:0.1,3349)
--(axis cs:0.15,3670)
--(axis cs:0.2,309)
--(axis cs:0.25,299)
--(axis cs:0.3,282)
--(axis cs:0.3,686)
--(axis cs:0.3,686)
--(axis cs:0.25,597)
--(axis cs:0.2,621)
--(axis cs:0.15,3946)
--(axis cs:0.1,4409)
--(axis cs:0.05,5334)
--(axis cs:0,4548)
--cycle;

\path [draw=color2, fill=color2, opacity=0.2, line width=0.12pt]
(axis cs:0,4317)
--(axis cs:0,3367)
--(axis cs:0.05,3238)
--(axis cs:0.1,2688)
--(axis cs:0.15,2945)
--(axis cs:0.2,683)
--(axis cs:0.25,676)
--(axis cs:0.3,424)
--(axis cs:0.3,758)
--(axis cs:0.3,758)
--(axis cs:0.25,864)
--(axis cs:0.2,1085)
--(axis cs:0.15,3533)
--(axis cs:0.1,4216)
--(axis cs:0.05,4592)
--(axis cs:0,4317)
--cycle;

\path [draw=color3, fill=color3, opacity=0.2, line width=0.12pt]
(axis cs:0,4707)
--(axis cs:0,3649)
--(axis cs:0.05,3833)
--(axis cs:0.1,3811)
--(axis cs:0.15,3659)
--(axis cs:0.2,1441)
--(axis cs:0.25,771)
--(axis cs:0.3,1005)
--(axis cs:0.3,1679)
--(axis cs:0.3,1679)
--(axis cs:0.25,2239)
--(axis cs:0.2,2505)
--(axis cs:0.15,4273)
--(axis cs:0.1,4163)
--(axis cs:0.05,4203)
--(axis cs:0,4707)
--cycle;

\path [draw=color4, fill=color4, opacity=0.2, line width=0.12pt]
(axis cs:0,4571)
--(axis cs:0,3581)
--(axis cs:0.05,4382)
--(axis cs:0.1,3727)
--(axis cs:0.15,3769)
--(axis cs:0.2,2192)
--(axis cs:0.25,1675)
--(axis cs:0.3,898)
--(axis cs:0.3,2586)
--(axis cs:0.3,2586)
--(axis cs:0.25,3159)
--(axis cs:0.2,3410)
--(axis cs:0.15,4401)
--(axis cs:0.1,4989)
--(axis cs:0.05,5012)
--(axis cs:0,4571)
--cycle;

\addplot [line width=0.48pt, color1, mark=square*, mark size=3, mark options={solid}]
table {%
0 4487
0.05 4220
0.1 3879
0.15 3808
0.2 465
0.25 448
0.3 484
};
\addlegendentry{SA-PPO}
\addplot [line width=0.48pt, color2, mark=*, mark size=3, mark options={solid}]
table {%
0 3842
0.05 3915
0.1 3452
0.15 3239
0.2 884
0.25 770
0.3 591
};
\addlegendentry{ATLA-PPO}
\addplot [line width=0.48pt, color3, mark=pentagon*, mark size=3, mark options={solid}]
table {%
0 4178
0.05 4018
0.1 3987
0.15 3966
0.2 1973
0.25 1505
0.3 1342
};
\addlegendentry{PA-ATLA-PPO}
\addplot [line width=0.48pt, color4, mark=asterisk, mark size=3, mark options={solid}]
table {%
0 4076
0.05 4697
0.1 4358
0.15 4093
0.2 2801
0.25 2417
0.3 1742
};
\addlegendentry{\ourppo (Ours)}
\end{axis}

\end{tikzpicture}}
        \vspace{-1.5em}
        \caption{\small{Walker2d: Robust Sarsa}}
    \end{subfigure}
    \hfill
    \begin{subfigure}[t]{0.31\textwidth}
        \centering
        \resizebox{\textwidth}{!}{
\begin{tikzpicture}[scale=0.65]

\definecolor{color0}{rgb}{0.917647058823529,0.917647058823529,0.949019607843137}
\definecolor{color1}{rgb}{0.729411764705882,0.333333333333333,0.827450980392157}
\definecolor{color2}{rgb}{0.0980392156862745,0.0980392156862745,0.43921568627451}
\definecolor{color3}{rgb}{0.201253172212011,0.690792081537903,0.479667611892753}
\definecolor{color4}{rgb}{0.8,0.270588235294118,0}

\begin{axis}[
axis background/.style={fill=white},
axis line style={black},
legend cell align={left},
legend style={
  fill opacity=0.2,
  draw opacity=1,
  text opacity=1,
  at={(0.47,0.97)},
  anchor=north west,
  draw=none,
  fill=color0
},
tick align=outside,
x grid style={white},
xlabel={Epsilon of SA-RL attacker},
xmajorgrids,
xmin=-0.015, xmax=0.315,
xtick style={color=white!15!black},
y grid style={white},
ylabel={Average episode rewards},
ymajorgrids,
xtick={-0.05,0,0.05,0.1,0.15,0.2,0.25,0.3,0.35},
xticklabels={, 0.00, ,0.10, ,0.20,  ,0.30,  },
ymin=0, ymax=6000,
ytick style={color=white!15!black}
]
\path [draw=color1, fill=color1, opacity=0.2, line width=0.12pt]
(axis cs:0,4548)
--(axis cs:0,4426)
--(axis cs:0.05,2125)
--(axis cs:0.1,2074)
--(axis cs:0.15,2572)
--(axis cs:0.2,984)
--(axis cs:0.25,894)
--(axis cs:0.3,849)
--(axis cs:0.3,1491)
--(axis cs:0.3,1491)
--(axis cs:0.25,1126)
--(axis cs:0.2,1292)
--(axis cs:0.15,3244)
--(axis cs:0.1,3842)
--(axis cs:0.05,4363)
--(axis cs:0,4548)
--cycle;

\path [draw=color2, fill=color2, opacity=0.2, line width=0.12pt]
(axis cs:0,4317)
--(axis cs:0,3367)
--(axis cs:0.05,3026)
--(axis cs:0.1,3254)
--(axis cs:0.15,2956)
--(axis cs:0.2,1830)
--(axis cs:0.25,1164)
--(axis cs:0.3,934)
--(axis cs:0.3,1520)
--(axis cs:0.3,1520)
--(axis cs:0.25,2996)
--(axis cs:0.2,3222)
--(axis cs:0.15,4370)
--(axis cs:0.1,4212)
--(axis cs:0.05,4510)
--(axis cs:0,4317)
--cycle;

\path [draw=color3, fill=color3, opacity=0.2, line width=0.12pt]
(axis cs:0,4707)
--(axis cs:0,3649)
--(axis cs:0.05,3215)
--(axis cs:0.1,3553)
--(axis cs:0.15,3272)
--(axis cs:0.2,2107)
--(axis cs:0.25,1580)
--(axis cs:0.3,533)
--(axis cs:0.3,1801)
--(axis cs:0.3,1801)
--(axis cs:0.25,2758)
--(axis cs:0.2,2811)
--(axis cs:0.15,3628)
--(axis cs:0.1,4149)
--(axis cs:0.05,4655)
--(axis cs:0,4707)
--cycle;

\path [draw=color4, fill=color4, opacity=0.2, line width=0.12pt]
(axis cs:0,4571)
--(axis cs:0,3581)
--(axis cs:0.05,3161)
--(axis cs:0.1,3256)
--(axis cs:0.15,3399)
--(axis cs:0.2,2312)
--(axis cs:0.25,1388)
--(axis cs:0.3,689)
--(axis cs:0.3,2021)
--(axis cs:0.3,2021)
--(axis cs:0.25,2872)
--(axis cs:0.2,3678)
--(axis cs:0.15,4125)
--(axis cs:0.1,4506)
--(axis cs:0.05,4705)
--(axis cs:0,4571)
--cycle;

\addplot [line width=0.48pt, color1, mark=square*, mark size=3, mark options={solid}]
table {%
0 4487
0.05 3244
0.1 2958
0.15 2908
0.2 1138
0.25 1010
0.3 1170
};
\addlegendentry{SA-PPO}
\addplot [line width=0.48pt, color2, mark=*, mark size=3, mark options={solid}]
table {%
0 3842
0.05 3768
0.1 3733
0.15 3663
0.2 2526
0.25 2080
0.3 1227
};
\addlegendentry{ATLA-PPO}
\addplot [line width=0.48pt, color3, mark=pentagon*, mark size=3, mark options={solid}]
table {%
0 4178
0.05 3935
0.1 3851
0.15 3450
0.2 2459
0.25 2169
0.3 1167
};
\addlegendentry{PA-ATLA-PPO}
\addplot [line width=0.48pt, color4, mark=asterisk, mark size=3, mark options={solid}]
table {%
0 4076
0.05 3933
0.1 3881
0.15 3770
0.2 2995
0.25 2130
0.3 1355
};
\addlegendentry{\ourppo (Ours)}
\end{axis}

\end{tikzpicture}}
        \vspace{-1.5em}
        \caption{\small{Walker2d: SA-RL}}
    \end{subfigure}
    \hfill
    \begin{subfigure}[t]{0.31\textwidth}
        \centering
        \resizebox{\textwidth}{!}{
\begin{tikzpicture}[scale=0.65]

\definecolor{color0}{rgb}{0.917647058823529,0.917647058823529,0.949019607843137}
\definecolor{color1}{rgb}{0.729411764705882,0.333333333333333,0.827450980392157}
\definecolor{color2}{rgb}{0.0980392156862745,0.0980392156862745,0.43921568627451}
\definecolor{color3}{rgb}{0.201253172212011,0.690792081537903,0.479667611892753}
\definecolor{color4}{rgb}{0.8,0.270588235294118,0}

\begin{axis}[
axis background/.style={fill=white},
axis line style={black},
legend cell align={left},
legend style={
  fill opacity=0.2,
  draw opacity=1,
  text opacity=1,
  at={(0.47,0.97)},
  anchor=north west,
  draw=none,
  fill=color0
},
tick align=outside,
x grid style={white},
xlabel={Epsilon of PA-AD attacker},
xmajorgrids,
xmin=-0.015, xmax=0.315,
xtick style={color=white!15!black},
y grid style={white},
ylabel={Average episode rewards},
ymajorgrids,
xtick={-0.05,0,0.05,0.1,0.15,0.2,0.25,0.3,0.35},
xticklabels={, 0.00, ,0.10, ,0.20,  ,0.30,  },
ymin=0, ymax=5000,
ytick style={color=white!15!black}
]
\path [draw=color1, fill=color1, opacity=0.2, line width=0.12pt]
(axis cs:0,4548)
--(axis cs:0,4426)
--(axis cs:0.05,2275)
--(axis cs:0.1,1383)
--(axis cs:0.15,689)
--(axis cs:0.2,192)
--(axis cs:0.25,368)
--(axis cs:0.3,171)
--(axis cs:0.3,803)
--(axis cs:0.3,803)
--(axis cs:0.25,616)
--(axis cs:0.2,1092)
--(axis cs:0.15,1395)
--(axis cs:0.1,1621)
--(axis cs:0.05,3863)
--(axis cs:0,4548)
--cycle;

\path [draw=color2, fill=color2, opacity=0.2, line width=0.12pt]
(axis cs:0,4317)
--(axis cs:0,3367)
--(axis cs:0.05,2971)
--(axis cs:0.1,3017)
--(axis cs:0.15,454)
--(axis cs:0.2,783)
--(axis cs:0.25,285)
--(axis cs:0.3,174)
--(axis cs:0.3,1168)
--(axis cs:0.3,1168)
--(axis cs:0.25,1463)
--(axis cs:0.2,1487)
--(axis cs:0.15,1994)
--(axis cs:0.1,4213)
--(axis cs:0.05,4379)
--(axis cs:0,4317)
--cycle;

\path [draw=color3, fill=color3, opacity=0.2, line width=0.12pt]
(axis cs:0,4707)
--(axis cs:0,3649)
--(axis cs:0.05,3270)
--(axis cs:0.1,3615)
--(axis cs:0.15,2117)
--(axis cs:0.2,1857)
--(axis cs:0.25,607)
--(axis cs:0.3,286)
--(axis cs:0.3,1092)
--(axis cs:0.3,1092)
--(axis cs:0.25,1531)
--(axis cs:0.2,2293)
--(axis cs:0.15,2379)
--(axis cs:0.1,4081)
--(axis cs:0.05,4340)
--(axis cs:0,4707)
--cycle;

\path [draw=color4, fill=color4, opacity=0.2, line width=0.12pt]
(axis cs:0,4571)
--(axis cs:0,3581)
--(axis cs:0.05,3244)
--(axis cs:0.1,3623)
--(axis cs:0.15,2370)
--(axis cs:0.2,2412)
--(axis cs:0.25,1156)
--(axis cs:0.3,88)
--(axis cs:0.3,1360)
--(axis cs:0.3,1360)
--(axis cs:0.25,1504)
--(axis cs:0.2,2978)
--(axis cs:0.15,3074)
--(axis cs:0.1,4141)
--(axis cs:0.05,4374)
--(axis cs:0,4571)
--cycle;

\addplot [line width=0.48pt, color1, mark=square*, mark size=3, mark options={solid}]
table {%
0 4487
0.05 3069
0.1 1502
0.15 1042
0.2 642
0.25 492
0.3 487
};
\addlegendentry{SA-PPO}
\addplot [line width=0.48pt, color2, mark=*, mark size=3, mark options={solid}]
table {%
0 3842
0.05 3675
0.1 3615
0.15 1224
0.2 1135
0.25 874
0.3 671
};
\addlegendentry{ATLA-PPO}
\addplot [line width=0.48pt, color3, mark=pentagon*, mark size=3, mark options={solid}]
table {%
0 4178
0.05 3805
0.1 3848
0.15 2248
0.2 2075
0.25 1069
0.3 689
};
\addlegendentry{PA-ATLA-PPO}
\addplot [line width=0.48pt, color4, mark=asterisk, mark size=3, mark options={solid}]
table {%
0 4076
0.05 3809
0.1 3882
0.15 2722
0.2 2695
0.25 1330
0.3 724
};
\addlegendentry{\ourppo(Ours)}
\end{axis}

\end{tikzpicture}}
        \vspace{-1.5em}
        \caption{\small{Walker2d: PA-AD}}
    \end{subfigure}
    \caption{\small{Comparisons under different attacks w.r.t. different budget $\epsilon$’s on Hopper and Walker2d.
    }}
    \label{fig:eps}
\end{figure*}
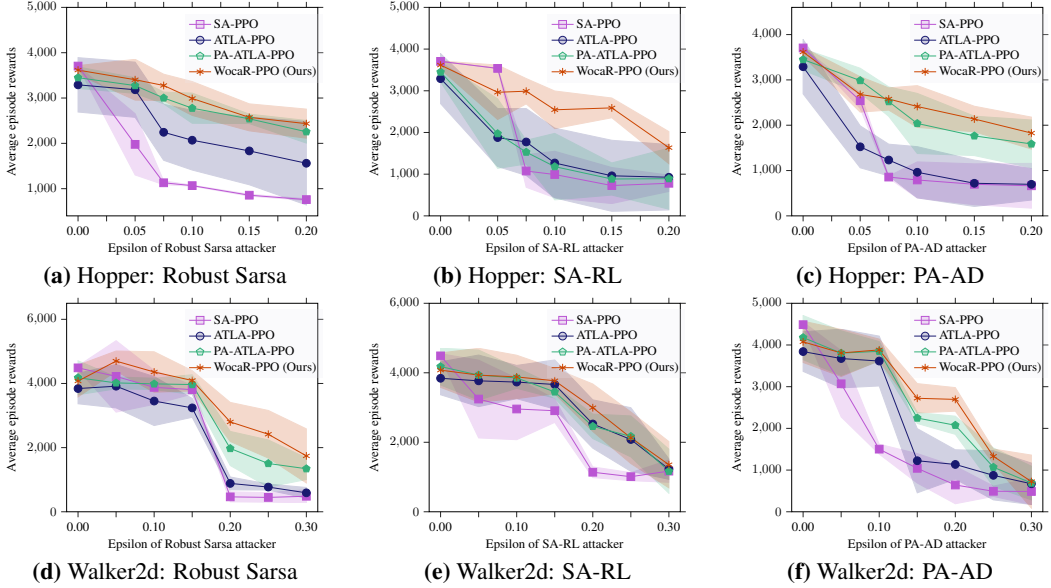

\subsubsection{Additional Evaluation on Sample Efficiency}
\label{app:exp:add}
\begin{table*}[!t]
\centering
\renewcommand{\arraystretch}{1.2}
\resizebox{\textwidth}{!}{%
\setlength{\tabcolsep}{4pt}
\begin{tabular}{c|c|p{2cm}<{\centering} p{2cm}<{\centering} p{2cm}<{\centering} p{2cm}<{\centering} p{2cm}<{\centering} p{2cm}<{\centering}}
\toprule
\textbf{Environment}                  & \textbf{Model}          & \begin{tabular}[c]{@{}c@{}}{\bf Natural}\\{\bf Reward}\end{tabular} & \textbf{Random}      & \textbf{MAD}          & \textbf{RS}          & \textbf{SA-RL}       & \textbf{PA-AD}        \\
 \hline
\multirow{4}{5.5em}{\begin{tabular}[c]{@{}c@{}}{\textbf{Halfcheetah}}\\state-dim: 17\\$\epsilon$=0.15\end{tabular}}    
 & ATLA-PPO            & 4817 $\pm$ 277     & 4809 $\pm$ 186  & 4584 $\pm$ 100  & 4074 $\pm$ 285  & 4129 $\pm$ 348  & 1856 $\pm$ 294  \\
& PA-ATLA-PPO      & 5023 $\pm$ 282     & 5076 $\pm$ 149  & 4720 $\pm$ 334  & 4392 $\pm$ 158  & 4159 $\pm$ 248  & 3085 $\pm$ 295  \\
& RADIAL-PPO & 4683 $\pm$ 97 & 4625 $\pm$ 190 & 3674 $\pm$ 222 & 3529 $\pm$ 173 & 2893 $\pm$ 165 & 2197 $\pm$ 251\\
& \cellcolor{lightgray}{\textbf{\ourppo (Ours)}} & \cellcolor{lightgray}{\textbf{6032 $\pm$ 68}}    & \cellcolor{lightgray}{\textbf{5969 $\pm$ 149}}  & \cellcolor{lightgray}{\textbf{5850 $\pm$ 228}}  & \cellcolor{lightgray}{\textbf{5319 $\pm$ 220}}  & \cellcolor{lightgray}{\textbf{5365 $\pm$ 54}}  & \cellcolor{lightgray}{\textbf{4269 $\pm$ 172}}  \\
\midrule
\multirow{4}{5.5em}{\begin{tabular}[c]{@{}c@{}}{\textbf{Hopper} }\\state-dim: 11\\$\epsilon$=0.075\end{tabular}}   
 & ATLA-PPO            & 3265 $\pm$ 342    & 3195 $\pm$ 275  & 2675 $\pm$ 332  & 2098 $\pm$ 398  & 1542 $\pm$ 639 & 1135 $\pm$ 289  \\
& PA-ATLA-PPO      & 3429 $\pm$ 196      & 3455 $\pm$ 315 & 3072 $\pm$ 478  & 2889 $\pm$ 258  & 1458 $\pm$ 274   & 2032 $\pm$ 244  \\
& RADIAL-PPO & \textbf{3687 $\pm$ 80} & 3627 $\pm$ 106 & 2952 $\pm$ 126 & 1094 $\pm$ 248 & 1243 $\pm$ 187 & 1036 $\pm$ 142\\
 & \cellcolor{lightgray}{\textbf{\ourppo (Ours)}} & \cellcolor{lightgray}{3616 $\pm$ 99}     & \cellcolor{lightgray}{\textbf{3633 $\pm$ 30}}   & \cellcolor{lightgray}{\textbf{3541 $\pm$ 207}}  & \cellcolor{lightgray}{\textbf{3277 $\pm$ 159}}  & \cellcolor{lightgray}{\textbf{2390 $\pm$ 145}}   & \cellcolor{lightgray}{\textbf{2579 $\pm$ 229}}  \\
 \midrule
\multirow{4}{5.5em}{\begin{tabular}[c]{@{}c@{}}{\textbf{Walker2d}}\\state-dim: 17\\$\epsilon$=0.05\end{tabular}}    
 & ATLA-PPO            & 2664 $\pm$ 366     & 2695 $\pm$ 320  & 2547 $\pm$   210 & 2439 $\pm$  174 & 2092 $\pm$ 144  & 1544 $\pm$ 280  \\
& PA-ATLA-PPO      & 3047 $\pm$ 223     & 3112 $\pm$ 111   & 2865 $\pm$ 230  & 2742 $\pm$ 177  & 2450 $\pm$ 229  & 1987 $\pm$ 246  \\
& RADIAL-PPO & 2143 $\pm$ 153 & 2231 $\pm$ 89 & 2095 $\pm$ 121 & 1680 $\pm$ 193 & 1078 $\pm$ 115 & 1274 $\pm$ 117\\
 & \cellcolor{lightgray}{\textbf{\ourppo (Ours)}} & \cellcolor{lightgray}{\textbf{4156 $\pm$ 495}}     & \cellcolor{lightgray}{\textbf{4244 $\pm$ 157}}  & \cellcolor{lightgray}{\textbf{4177 $\pm$ 176}}  & \cellcolor{lightgray}{\textbf{4093 $\pm$ 138}}  & \cellcolor{lightgray}{\textbf{3770 $\pm$ 196}}  & \cellcolor{lightgray}{\textbf{2722 $\pm$ 173}}  \\
 \midrule
 \multirow{4}{5.5em}{\begin{tabular}[c]{@{}c@{}}{\textbf{Ant}}\\state-dim: 111\\$\epsilon$=0.15\end{tabular}}    
 & ATLA-PPO            & 4249 $\pm$ 243     & 4218 $\pm$ 161  & 4036 $\pm$ 173  & 3391 $\pm$ 158  & 2045 $\pm$ 203  & -349 $\pm$ 175  \\
& PA-ATLA-PPO      & 4533 $\pm$ 238     & 4492 $\pm$ 190  & 4232 $\pm$ 203 & 3579 $\pm$ 261  & 2762 $\pm$ 152  & 1765 $\pm$ 185  \\
& RADIAL-PPO & 4379 $\pm$ 230 & 4194 $\pm$ 52 & 3278 $\pm$ 138 & 2348 $\pm$ 232 & 1380 $\pm$ 145 & 157 $\pm$ 124\\
 & \cellcolor{lightgray}{\textbf{\ourppo (Ours)}} & \cellcolor{lightgray}{\textbf{5596 $\pm$ 225}}     & \cellcolor{lightgray}{\textbf{5558 $\pm$ 241}}  & \cellcolor{lightgray}{\textbf{5284 $\pm$ 182}}  & \cellcolor{lightgray}{\textbf{4339 $\pm$ 160}}  & \cellcolor{lightgray}{\textbf{3822 $\pm$ 185}}  & \cellcolor{lightgray}{\textbf{3164 $\pm$ 163}}  \\
 
\bottomrule
\end{tabular}}
\caption{Average episode rewards $\pm$ standard deviation over 50 episodes on baselines and \ourppo trained for 2 million steps on Hopper, Walker2d, Halfcheetah and 7.5 million steps on Ant (less than the best settings). \textbf{Bold} numbers indicate the best results under each attack. The \colorbox{lightgray}{gray} rows are the most robust agents.}
\label{tab:mujoco_app_less}
\vspace{-0.5em}
\end{table*}

In Table~\ref{tab:mujoco_app_less}, we report the performance of \ourppo and all robust PPO baselines using the same training steps. 
We find that under limited training steps, ATLA-PPO, PA-ATLA-PPO and RADIAL-PPO obtain sub-optimal robustness, which suggests that these methods are more sample-hungry. In contrast, \ourppo converges under fewer steps and achieves best performance with a large advantage, which shows the higher efficiency of \ourppo.

\subsubsection{Additional Results of Time Efficiency}
\label{app:exp:eff}
\begin{table*}[!t]
\centering
\renewcommand{\arraystretch}{1.1}
\resizebox{0.7\textwidth}{!}{%
\setlength{\tabcolsep}{3pt}
\begin{tabular}{c|p{1.7cm}<{\centering} p{1.7cm}<{\centering} | p{1.7cm}<{\centering} p{1.7cm}<{\centering} }
\toprule
\multirow{2}{*}{Model} & \multicolumn{2}{c|}{Hopper} & \multicolumn{2}{c}{Ant} \\
\cline{2-5}
& Time (h) & Steps(m) & Time (h) & Steps (m)\\
\hline
SA-PPO & 3.0  & 2.0   & 8.9               & 10.0                   \\
ATLA-PPO   & 5.6    & 5.0     &     12.8           & 10.0                  \\
PA-ATLA-PPO   & 5.2               & 5.0      & 12.3                & 10.0                \\
RADIAL-PPO   & 3.2              & 4.0      & 10.2            & 10.0                \\
\cellcolor{lightgray}{\textbf{\ourppo (Ours)}} & \cellcolor{lightgray}{\textbf{2.3}}              & \cellcolor{lightgray}{\textbf{2.0}}      & \cellcolor{lightgray}{\textbf{8.7}}             & \cellcolor{lightgray}{\textbf{7.5}}    \\
\bottomrule
\end{tabular}}
\caption{Efficiency comparison of state-of-the-art robust training methods and \ourppo in Hopper and Ant. For Walker2d and Halfcheetah, the sampling steps are same as for Hopper and the training time is also extremely similar. We highlight the most efficient method as \colorbox{lightgray}{gray}.}
\label{efficiency}
\end{table*}

We show the training efficiency of \ourppo from three aspects including time, training iterations, and sampling in MuJoCo environments by comparing with SA-PPO and state-of-the-art methods ATLA-PPO, PA-ATLA-PPO, and RADIAL-PPO in Table \ref{efficiency}. For a fair comparison, we use the same {\it GeForce RTX 1080 Ti GPUs} to train all the robust agents. \\
It needs to mention that in continuous action spaces when estimating the worst-case value, we solve $\min_{\hat{a}\in\aadvaction} \worstcritic(s_{t+1},\hat{a})$ using 50-step gradient descent. The running time of this 50-step gradient descent is about \textbf{1.68 seconds} per batch with batch size 128. In total, this gradient descent computation takes 18\% of the total training time, thus it is not the computation bottleneck. 

Without training with an adversary, \textit{our algorithm requires much less (only 50\% or 75\%) steps to reliably converge.} \ourppo only takes less than half of time for low-dimensional environments to converge compared to ATLA methods and RADIAL-PPO. In high-dimensional environments like Ant, we only need 4 hours for training, while ATLA methods require at least 7 hours. When solving harder tasks, the efficiency advantage of \ourppo is more obvious.

\subsubsection{Effectiveness of Worst-attack Policy Optimization}
\label{app:exp:curve}
\begin{figure*}[!t]
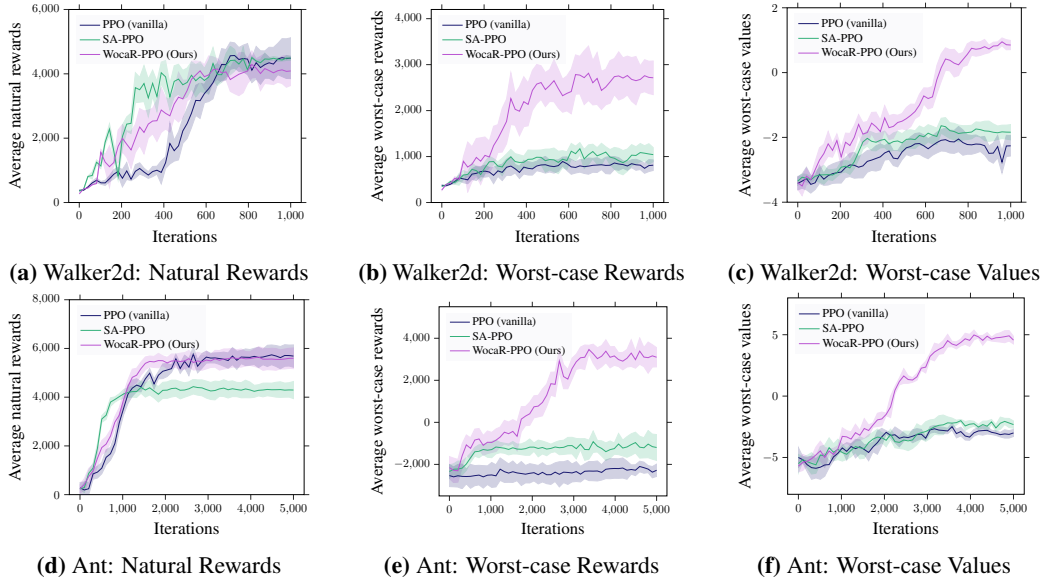

    \centering
    \begin{subfigure}[t]{0.31\textwidth}
        \centering
        \resizebox{0.95\textwidth}{!}{\input{\fighome/curve/walker-natural.tex}}
        \caption{\small{Walker2d: Natural Rewards}}
        \label{sfig:curves_walker_na}
    \end{subfigure}
    \hfill
    \begin{subfigure}[t]{0.31\textwidth}
        \centering
        \resizebox{0.95\textwidth}{!}{\input{\fighome/curve/walker-worst.tex}}
        \caption{\small{Walker2d: Worst-case Rewards}}
        \label{sfig:curves_walker_reward}
    \end{subfigure}
    \hfill
    \begin{subfigure}[t]{0.31\textwidth}
        \centering
        \resizebox{0.92\textwidth}{!}{\input{\fighome/curve/walker-value.tex}}
        \caption{\small{Walker2d: Worst-case Values}}
        \label{sfig:curves_walker_value}
    \end{subfigure}\\
    \begin{subfigure}[t]{0.31\textwidth}
        \centering
        \resizebox{0.95\textwidth}{!}{\input{\fighome/curve/ant-natural.tex}}
        \caption{\small{Ant: Natural Rewards}}
        \label{sfig:curves_ant_na}
    \end{subfigure}
    \hfill
    \begin{subfigure}[t]{0.31\textwidth}
        \centering
        \resizebox{0.95\textwidth}{!}{\input{\fighome/curve/ant-worst.tex}}
        \caption{\small{Ant: Worst-case Rewards}}
        \label{sfig:curves_ant_reward}
    \end{subfigure}
    \hfill
    \begin{subfigure}[t]{0.31\textwidth}
        \centering
        \resizebox{0.92\textwidth}{!}{\input{\fighome/curve/ant-value.tex}}
        \caption{\small{Ant: Worst-case Values}}
        \label{sfig:curves_ant_value}
    \end{subfigure}
    \caption{Learning curves (mean ± standard deviation) of natural rewards, worst-case rewards under attacks and estimated worst-case values during training on Walker2d and Ant for vanilla PPO (blue), SA-PPO (green) and \ourppo (purple).} 
    \label{fig:curve}
\end{figure*}

In addition to Figure~\ref{fig:curves_less}, we show the learning curves in Walker2d and Ant in Figure \ref{fig:curve} to verify the effectiveness of \estname and \worstname. Figure \ref{fig:curve}(a) and (d) show the natural rewards of agents during training without attacks. 
The actual worst-attack rewards in Figure \ref{fig:curve}(b) and (e) refer to the the reward obtained by the agents under PA-AD attack~\citep{sun2021strongest} which is the existing strongest attacking algorithm. To study the worst-case performance during training, We evaluate PPO, SA-PPO and \ourppo agents after every 20 iterations using all types of attacks and report the worst-case rewards for each checkpoint. 
We also present the trend of the estimated worst-case values during training in Figure \ref{fig:curve}(c) and (f), which are tested by the trained worst-attack value functions $\worstcritic$. \\
We observe from the curves that our \worstcriticname estimation matches the trend of actual worst-attack rewards. Also, the increases of estimated worst-attack values and actual worst-attack rewards of \ourppo show that our \ours significantly improves the robustness of agents by enhancing worst-attack values.

\subsubsection{Trade-off between Natural Performance and Robustness}
\label{app:exp:prefer}

As mentioned in Section~\ref{sec:result_effect}, the adjustable weight $\weightworst$ controls the trade-off between natural performance and robustness.
To discuss the effect of $\weightworst$, we train agents using \ourppo in Hopper, Walker2d, and Halfcheetah with uniformly sampled 40 different values of weight $\weightworst$ in range $(0,1]$. \\
Figure~\ref{app:fig:weight} plots the worst-case performance and natural performance of robust training baselines and 10 agents trained by \ourppo with various values of $\kappa_{wst}$. We can see that when reward under worst-case perturbations increases, it leads to a reduction of the natural reward. 

The choice of the worst-case value's weight $\kappa_{wst}$ is to control the trade-off between the final natural performance and robustness. It does not affect the convergence of the algorithm. When we increase the weight of worst-case values $\kappa_{wst}$, the reward under worst-case perturbations increases, but it leads to a reduction of the natural reward. Equally, when $\kappa_{wst}$ is set close to 0, the algorithm is similar to standard training, where the policy achieves high reward under no attack, but extremely low reward under attacks. Hence, $\kappa_{wst}$ is necessary for our algorithm to balance these two kinds of performance. In practice, one can adjust $\kappa_{wst}$ according to their preferences to robustness and natural performance.

We report the results in Table \ref{tab:mujoco_app} with significant better worst-case robustness and comparable natural performance compared with baselines. \ourppo can always find policies which dominate other robust agents.

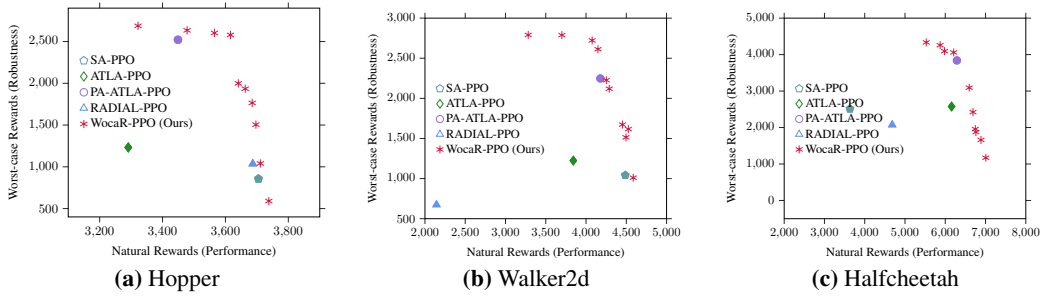
\begin{figure*}[!t]
    \centering
    \vspace{0.5em}
    \begin{subfigure}[t]{0.31\textwidth}
        \centering
        \resizebox{\textwidth}{!}{
\begin{tikzpicture}[scale=0.65]

\definecolor{color0}{rgb}{0.917647058823529,0.917647058823529,0.949019607843137}
\definecolor{color1}{rgb}{0.372549019607843,0.619607843137255,0.627450980392157}
\definecolor{color2}{rgb}{0.133333333333333,0.545098039215686,0.133333333333333}
\definecolor{color3}{rgb}{0.576470588235294,0.43921568627451,0.858823529411765}
\definecolor{color4}{rgb}{0.392156862745098,0.584313725490196,0.929411764705882}
\definecolor{color5}{rgb}{0.862745098039216,0.0784313725490196,0.235294117647059}

\begin{axis}[
axis background/.style={fill=white},
axis line style={black},
legend cell align={left},
legend style={
  fill opacity=0,
  draw opacity=1,
  text opacity=1,
  at={(0.03,0.8)},
  anchor=north west,
  draw=none,
  fill=white
},
tick align=outside,
x grid style={color=white},
xlabel={Natural Rewards (Performance)},
xmajorgrids,
xmin=3100, xmax=3900,
xtick style={color=white!15!black},
y grid style={color=white},
ylabel={Worst-case Rewards (Robustness)},
ymajorgrids,
ymin=400, ymax=2900,
ytick style={color=white!15!black}
]
\addplot [draw=color1, fill=color1, mark=pentagon*, mark size=3.2, mark options={solid}, only marks]
table{%
x  y
3705 856
3705 856
};
\addlegendentry{SA-PPO}
\addplot [draw=color2, fill=color2, mark=diamond*, mark size=3.5, mark options={solid}, only marks]
table{%
x  y
3291 1232
3291 1232
};
\addlegendentry{ATLA-PPO}
\addplot [draw=color3, fill=color3, mark=*, mark size=2.9, mark options={solid}, only marks]
table{%
x  y
3449 2521
3449 2521
};
\addlegendentry{PA-ATLA-PPO}
\addplot [draw=color4, fill=color4, mark=triangle*, mark size=3.3, only marks]
table{%
x  y
3687 1036
3687 1036
};
\addlegendentry{RADIAL-PPO}
\addplot [draw=color5, fill=color5, mark=asterisk, mark size=3.2, only marks]
table{%
x  y
3616 2579
3322 2688
3478 2634
3565 2603
3641 1999
3663 1935
3685 1765
3697 1505
3711 1040
3738 590
};
\addlegendentry{\ourppo (Ours)}
\end{axis}

\end{tikzpicture}}
        \vspace{-1.5em}
        \caption{\small{Hopper}}
    \end{subfigure}
    \hfill
    \begin{subfigure}[t]{0.31\textwidth}
        \centering
        \resizebox{\textwidth}{!}{
\begin{tikzpicture}[scale=0.65]

\definecolor{color0}{rgb}{0.917647058823529,0.917647058823529,0.949019607843137}
\definecolor{color1}{rgb}{0.372549019607843,0.619607843137255,0.627450980392157}
\definecolor{color2}{rgb}{0.133333333333333,0.545098039215686,0.133333333333333}
\definecolor{color3}{rgb}{0.576470588235294,0.43921568627451,0.858823529411765}
\definecolor{color4}{rgb}{0.392156862745098,0.584313725490196,0.929411764705882}
\definecolor{color5}{rgb}{0.862745098039216,0.0784313725490196,0.235294117647059}

\begin{axis}[
axis background/.style={fill=white},
axis line style={black},
legend cell align={left},
legend style={
  fill opacity=0,
  draw opacity=1,
  text opacity=1,
  at={(0.03,0.7)},
  anchor=north west,
  draw=none,
  fill=white
},
tick align=outside,
x grid style={white},
xlabel={Natural Rewards (Performance)},
xmajorgrids,
xmin=2000, xmax=5000,
xtick style={color=white!15!black},
y grid style={white},
ylabel={Worst-case Rewards (Robustness)},
ymajorgrids,
ymin=500, ymax=3000,
ytick style={color=white!15!black}
]
\addplot [draw=color1, fill=color1, mark=pentagon*, mark size=3.2, mark options={solid}, only marks]
table{%
x  y
4487 1042
4487 1042
};
\addlegendentry{SA-PPO}
\addplot [draw=color2, fill=color2, mark=diamond*, mark size=3.5, mark options={solid}, only marks]
table{%
x  y
3842 1224
3842 1224
};
\addlegendentry{ATLA-PPO}
\addplot [draw=color3, fill=color3, mark=*, mark size=2.9, mark options={solid}, only marks]
table{%
x  y
4178 2248
4178 2248
};
\addlegendentry{PA-ATLA-PPO}
\addplot [draw=color4, fill=color4, mark=triangle*, mark size=3.3, only marks]
table{%
x  y
2143 674
2143 674
};
\addlegendentry{RADIAL-PPO}
\addplot [draw=color5, fill=color5, mark=asterisk, mark size=3.2, only marks]
table{%
x  y
4076 2722
4252 2226
4452 1673
4148 2612
4525 1617
3282 2791
4586 1011
4288 2120
3700 2788
4496 1514
};
\addlegendentry{\ourppo (Ours)}
\end{axis}

\end{tikzpicture}}
        \vspace{-1.5em}
        \caption{\small{Walker2d}}
    \end{subfigure}
    \hfill
    \begin{subfigure}[t]{0.31\textwidth}
        \centering
        \resizebox{\textwidth}{!}{
\begin{tikzpicture}[scale=0.65]

\definecolor{color0}{rgb}{0.917647058823529,0.917647058823529,0.949019607843137}
\definecolor{color1}{rgb}{0.372549019607843,0.619607843137255,0.627450980392157}
\definecolor{color2}{rgb}{0.133333333333333,0.545098039215686,0.133333333333333}
\definecolor{color3}{rgb}{0.576470588235294,0.43921568627451,0.858823529411765}
\definecolor{color4}{rgb}{0.392156862745098,0.584313725490196,0.929411764705882}
\definecolor{color5}{rgb}{0.862745098039216,0.0784313725490196,0.235294117647059}
\begin{axis}[
axis background/.style={fill=white},
axis line style={black},
legend cell align={left},
legend style={
  fill opacity=0,
  draw opacity=1,
  text opacity=1,
  at={(0.03,0.7)},
  anchor=north west,
  draw=none,
  fill=white
},
tick align=outside,
x grid style={color=white},
xlabel={Natural Rewards (Performance)},
xmajorgrids,
xmin=2000, xmax=8000,
xtick style={color=white!15!black},
y grid style={color=white},
ylabel={Worst-case Rewards (Robustness)},
ymajorgrids,
ymin=-500, ymax=5000,
ytick style={color=white!15!black}
]
\addplot [draw=color1, fill=color1, mark=pentagon*, mark size=3.2, mark options={solid}, only marks]
table{%
x  y
3632 2512
3632 2512
};
\addlegendentry{SA-PPO}
\addplot [draw=color2, fill=color2, mark=diamond*, mark size=3.5, mark options={solid}, only marks]
table{%
x  y
6157 2576
6157 2576
};
\addlegendentry{ATLA-PPO}
\addplot [draw=color3, fill=color3, mark=*, mark size=2.9, mark options={solid}, only marks]
table{%
x  y
6289 3840
6289 3840
};
\addlegendentry{PA-ATLA-PPO}
\addplot [draw=color4, fill=color4, mark=triangle*, mark size=3.3, only marks]
table{%
x  y
4683 2074
4683 2074
};
\addlegendentry{RADIAL-PPO}
\addplot [draw=color5, fill=color5, mark=asterisk, mark size=3.2, only marks]
table{%
x  y
5872 4258
5982 4095
6595 3094
6688 2422 
5527 4334
6746 1958
6761 1875 
6209 4059
6887 1660
7008 1172
};
\addlegendentry{\ourppo (Ours)}
\end{axis}

\end{tikzpicture}}
        \vspace{-1.5em}
        \caption{\small{Halfcheetah}}
    \end{subfigure}
    \vspace{-0.5em}
    \caption{\small{Average natural rewards and worst-case rewards of \ourppo with different $\weightworst$ and other baselines on Hopper, Walker2d, and Halfcheetah.}}
    \label{app:fig:weight}
\end{figure*}

\subsubsection{Additional Ablation Studies}
\label{app:exp:reg}
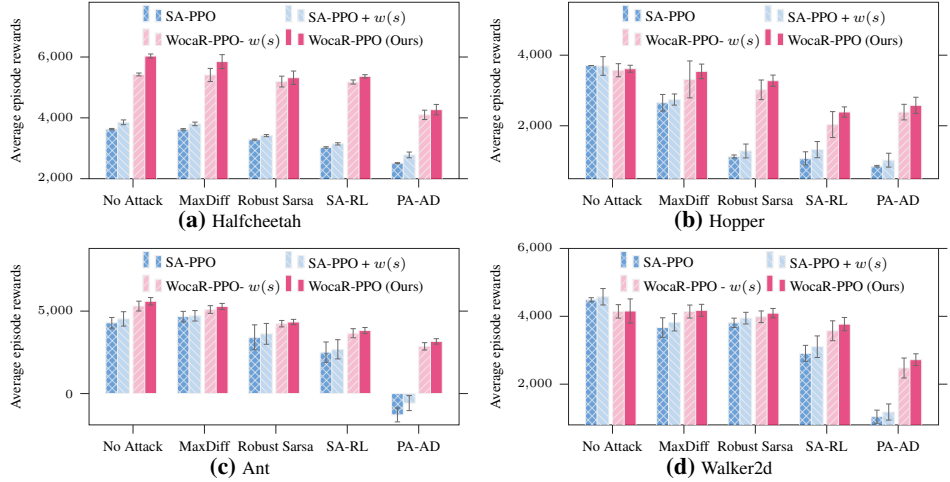
\begin{figure*}[!t]
    \centering
    \begin{subfigure}[t]{0.45\textwidth}
        \centering
        \resizebox{\textwidth}{!}{
\begin{tikzpicture}[scale=0.6]

\definecolor{color0}{rgb}{0.168966293897257,0.475610408705207,0.783522853775186}
\definecolor{color1}{rgb}{0.691639317416794,0.806706471648465,0.92224956246861}
\definecolor{color2}{rgb}{0.95715994546835,0.680184738372337,0.77629698219994}
\definecolor{color3}{rgb}{0.885434799735519,0.144732659736887,0.401761241571282}

\begin{axis}[
height=3cm,
width=\textwidth,
scale only axis,
axis line style={white!15!black},
legend cell align={left},
legend columns=2,
legend style={
  fill opacity=0,
  draw opacity=1,
  text opacity=1,
  at={(0.53,1.01)},
  anchor=north,
  draw=none,
  font=\scriptsize
},
xtick pos = lower,
tick align=outside,
x grid style={white!80!black},
xmin=-0.636, xmax=9.726,
xtick style={color=white!15!black},
xtick={0.595,2.595,4.595,6.595,8.595},
xticklabels={\scriptsize{No Attack},\scriptsize{MaxDiff},\scriptsize{Robust Sarsa},\scriptsize{SA-RL},\scriptsize{PA-AD}},
y grid style={white!80!black},
ylabel={\scriptsize{Average episode rewards}},
ymin=2000, ymax=7800,
ytick style={color=white!15!black},
y grid style={white},
tick label style={font=\tiny},
ymajorgrids,
]
\draw[draw=white!93.3333333333333!black,fill=color0,opacity=0.7,very thin,postaction={pattern=crosshatch, pattern color=white!93.3333333333333!black, fill opacity=0.7}] (axis cs:-0.165,0) rectangle (axis cs:0.165,3632);
\addlegendimage{ybar,ybar legend,draw=white!93.3333333333333!black,fill=color0,opacity=0.7,very thin,postaction={pattern=crosshatch, pattern color=white!93.3333333333333!black, fill opacity=0.7}}
\addlegendentry{SA-PPO}

\draw[draw=white!93.3333333333333!black,fill=color0,opacity=0.7,very thin,postaction={pattern=crosshatch, pattern color=white!93.3333333333333!black, fill opacity=0.7}] (axis cs:1.835,0) rectangle (axis cs:2.165,3624);
\draw[draw=white!93.3333333333333!black,fill=color0,opacity=0.7,very thin,postaction={pattern=crosshatch, pattern color=white!93.3333333333333!black, fill opacity=0.7}] (axis cs:3.835,0) rectangle (axis cs:4.165,3283);
\draw[draw=white!93.3333333333333!black,fill=color0,opacity=0.7,very thin,postaction={pattern=crosshatch, pattern color=white!93.3333333333333!black, fill opacity=0.7}] (axis cs:5.835,0) rectangle (axis cs:6.165,3028);
\draw[draw=white!93.3333333333333!black,fill=color0,opacity=0.7,very thin,postaction={pattern=crosshatch, pattern color=white!93.3333333333333!black, fill opacity=0.7}] (axis cs:7.835,0) rectangle (axis cs:8.165,2512);
\draw[draw=white!93.3333333333333!black,fill=color1,opacity=0.8,very thin,postaction={pattern=north west lines, pattern color=white!93.3333333333333!black, fill opacity=0.8}] (axis cs:0.165,0) rectangle (axis cs:0.495,3854);
\addlegendimage{ybar,ybar legend,draw=white!93.3333333333333!black,fill=color1,opacity=0.8,very thin,postaction={pattern=north west lines, pattern color=white!93.3333333333333!black, fill opacity=0.8}}
\addlegendentry{SA-PPO + $w(s)$}

\draw[draw=white!93.3333333333333!black,fill=color1,opacity=0.8,very thin,postaction={pattern=north west lines, pattern color=white!93.3333333333333!black, fill opacity=0.8}] (axis cs:2.165,0) rectangle (axis cs:2.495,3802);
\draw[draw=white!93.3333333333333!black,fill=color1,opacity=0.8,very thin,postaction={pattern=north west lines, pattern color=white!93.3333333333333!black, fill opacity=0.8}] (axis cs:4.165,0) rectangle (axis cs:4.495,3416);
\draw[draw=white!93.3333333333333!black,fill=color1,opacity=0.8,very thin,postaction={pattern=north west lines, pattern color=white!93.3333333333333!black, fill opacity=0.8}] (axis cs:6.165,0) rectangle (axis cs:6.495,3145);
\draw[draw=white!93.3333333333333!black,fill=color1,opacity=0.8,very thin,postaction={pattern=north west lines, pattern color=white!93.3333333333333!black, fill opacity=0.8}] (axis cs:8.165,0) rectangle (axis cs:8.495,2782);
\draw[draw=white!93.3333333333333!black,fill=color2,opacity=0.8,very thin,postaction={pattern=north east lines, pattern color=white!93.3333333333333!black, fill opacity=0.8}] (axis cs:0.595,0) rectangle (axis cs:0.925,5426);
\addlegendimage{ybar,ybar legend,draw=white!93.3333333333333!black,fill=color2,opacity=0.8,very thin,postaction={pattern=north east lines, pattern color=white!93.3333333333333!black, fill opacity=0.8}}
\addlegendentry{\ourppo - $w(s)$}

\draw[draw=white!93.3333333333333!black,fill=color2,opacity=0.8,very thin,postaction={pattern=north east lines, pattern color=white!93.3333333333333!black, fill opacity=0.8}] (axis cs:2.595,0) rectangle (axis cs:2.925,5409);
\draw[draw=white!93.3333333333333!black,fill=color2,opacity=0.8,very thin,postaction={pattern=north east lines, pattern color=white!93.3333333333333!black, fill opacity=0.8}] (axis cs:4.595,0) rectangle (axis cs:4.925,5193);
\draw[draw=white!93.3333333333333!black,fill=color2,opacity=0.8,very thin,postaction={pattern=north east lines, pattern color=white!93.3333333333333!black, fill opacity=0.8}] (axis cs:6.595,0) rectangle (axis cs:6.925,5178);
\draw[draw=white!93.3333333333333!black,fill=color2,opacity=0.8,very thin,postaction={pattern=north east lines, pattern color=white!93.3333333333333!black, fill opacity=0.8}] (axis cs:8.595,0) rectangle (axis cs:8.925,4098);
\draw[draw=white!93.3333333333333!black,fill=color3,opacity=0.8,very thin] (axis cs:0.925,0) rectangle (axis cs:1.255,6032);
\addlegendimage{ybar,ybar legend,draw=white!93.3333333333333!black,fill=color3,opacity=0.8,very thin}
\addlegendentry{\ourppo (Ours)}

\draw[draw=white!93.3333333333333!black,fill=color3,opacity=0.8,very thin] (axis cs:2.925,0) rectangle (axis cs:3.255,5850);
\draw[draw=white!93.3333333333333!black,fill=color3,opacity=0.8,very thin] (axis cs:4.925,0) rectangle (axis cs:5.255,5319);
\draw[draw=white!93.3333333333333!black,fill=color3,opacity=0.8,very thin] (axis cs:6.925,0) rectangle (axis cs:7.255,5365);
\draw[draw=white!93.3333333333333!black,fill=color3,opacity=0.8,very thin] (axis cs:8.925,0) rectangle (axis cs:9.255,4269);
\path [draw=white!41.1764705882353!black, line width=0.48pt]
(axis cs:0,3612)
--(axis cs:0,3652);

\path [draw=white!41.1764705882353!black, line width=0.48pt]
(axis cs:2,3601)
--(axis cs:2,3647);

\path [draw=white!41.1764705882353!black, line width=0.48pt]
(axis cs:4,3263)
--(axis cs:4,3303);

\path [draw=white!41.1764705882353!black, line width=0.48pt]
(axis cs:6,3005)
--(axis cs:6,3051);

\path [draw=white!41.1764705882353!black, line width=0.48pt]
(axis cs:8,2496)
--(axis cs:8,2528);

\path [draw=white!41.1764705882353!black, line width=0.48pt]
(axis cs:0.33,3776)
--(axis cs:0.33,3932);

\path [draw=white!41.1764705882353!black, line width=0.48pt]
(axis cs:2.33,3744)
--(axis cs:2.33,3860);

\path [draw=white!41.1764705882353!black, line width=0.48pt]
(axis cs:4.33,3382)
--(axis cs:4.33,3450);

\path [draw=white!41.1764705882353!black, line width=0.48pt]
(axis cs:6.33,3108)
--(axis cs:6.33,3182);

\path [draw=white!41.1764705882353!black, line width=0.48pt]
(axis cs:8.33,2688)
--(axis cs:8.33,2876);

\path [draw=white!41.1764705882353!black, line width=0.48pt]
(axis cs:0.76,5378)
--(axis cs:0.76,5474);

\path [draw=white!41.1764705882353!black, line width=0.48pt]
(axis cs:2.76,5193)
--(axis cs:2.76,5625);

\path [draw=white!41.1764705882353!black, line width=0.48pt]
(axis cs:4.76,5016)
--(axis cs:4.76,5370);

\path [draw=white!41.1764705882353!black, line width=0.48pt]
(axis cs:6.76,5110)
--(axis cs:6.76,5246);

\path [draw=white!41.1764705882353!black, line width=0.48pt]
(axis cs:8.76,3944)
--(axis cs:8.76,4252);

\path [draw=white!41.1764705882353!black, line width=0.48pt]
(axis cs:1.09,5964)
--(axis cs:1.09,6100);

\path [draw=white!41.1764705882353!black, line width=0.48pt]
(axis cs:3.09,5622)
--(axis cs:3.09,6078);

\path [draw=white!41.1764705882353!black, line width=0.48pt]
(axis cs:5.09,5099)
--(axis cs:5.09,5539);

\path [draw=white!41.1764705882353!black, line width=0.48pt]
(axis cs:7.09,5311)
--(axis cs:7.09,5419);

\path [draw=white!41.1764705882353!black, line width=0.48pt]
(axis cs:9.09,4097)
--(axis cs:9.09,4441);

\addplot [line width=0.48pt, white!41.1764705882353!black, opacity=1, mark=-, mark size=1.3, mark options={solid}, only marks]
table {%
0 3612
2 3601
4 3263
6 3005
8 2496
};
\addplot [line width=0.48pt, white!41.1764705882353!black, opacity=1, mark=-, mark size=1.3, mark options={solid}, only marks]
table {%
0 3652
2 3647
4 3303
6 3051
8 2528
};

\addplot [line width=0.48pt, white!41.1764705882353!black, opacity=1, mark=-, mark size=1.3, mark options={solid}, only marks]
table {%
0.33 3776
2.33 3744
4.33 3382
6.33 3108
8.33 2688
};

\addplot [line width=0.48pt, white!41.1764705882353!black, opacity=1, mark=-, mark size=1.3, mark options={solid}, only marks]
table {%
0.33 3932
2.33 3860
4.33 3450
6.33 3182
8.33 2876
};

\addplot [line width=0.48pt, white!41.1764705882353!black, opacity=1, mark=-, mark size=1.3, mark options={solid}, only marks]
table {%
0.76 5378
2.76 5193
4.76 5016
6.76 5110
8.76 3944
};

\addplot [line width=0.48pt, white!41.1764705882353!black, opacity=1, mark=-, mark size=1.3, mark options={solid}, only marks]
table {%
0.76 5474
2.76 5625
4.76 5370
6.76 5246
8.76 4252
};

\addplot [line width=0.48pt, white!41.1764705882353!black, opacity=1, mark=-, mark size=1.3, mark options={solid}, only marks]
table {%
1.09 5964
3.09 5622
5.09 5099
7.09 5311
9.09 4097
};

\addplot [line width=0.48pt, white!41.1764705882353!black, opacity=1, mark=-, mark size=1.3, mark options={solid}, only marks]
table {%
1.09 6100
3.09 6078
5.09 5539
7.09 5419
9.09 4441
};

\end{axis}

\end{tikzpicture}}
        \vspace{-1.8em}
        \caption{\scriptsize{Halfcheetah}}
        \label{sfig:abl_halfcheetah}
    \end{subfigure}
    \begin{subfigure}[t]{0.45\textwidth}
        \centering
        \resizebox{\textwidth}{!}{
\begin{tikzpicture}[scale=0.6]

\definecolor{color0}{rgb}{0.168966293897257,0.475610408705207,0.783522853775186}
\definecolor{color1}{rgb}{0.691639317416794,0.806706471648465,0.92224956246861}
\definecolor{color2}{rgb}{0.95715994546835,0.680184738372337,0.77629698219994}
\definecolor{color3}{rgb}{0.885434799735519,0.144732659736887,0.401761241571282}

\begin{axis}[
height=3cm,
width=\textwidth,
scale only axis,
axis line style={white!15!black},
legend cell align={left},
legend columns=2,
legend style={
  fill opacity=0,
  draw opacity=1,
  text opacity=1,
  at={(0.53,1.01)},
  anchor=north,
  draw=none,
  font=\scriptsize
},
xtick pos = lower,
tick align=outside,
x grid style={white!80!black},
xmin=-0.636, xmax=9.726,
xtick style={color=white!15!black},
xtick={0.595,2.595,4.595,6.595,8.595},
xticklabels={\scriptsize{No Attack},\scriptsize{MaxDiff},\scriptsize{Robust Sarsa},\scriptsize{SA-RL},\scriptsize{PA-AD}},
y grid style={white!80!black},
ylabel={\scriptsize{Average episode rewards}},
ymin=500, ymax=5500,
ytick style={color=white!15!black},
y grid style={white},
tick label style={font=\tiny},
ymajorgrids,
]
\draw[draw=white!93.3333333333333!black,fill=color0,opacity=0.7,very thin,postaction={pattern=crosshatch, pattern color=white!93.3333333333333!black, fill opacity=0.7}] (axis cs:-0.165,0) rectangle (axis cs:0.165,3705);
\addlegendimage{ybar,ybar legend,draw=white!93.3333333333333!black,fill=color0,opacity=0.7,very thin,postaction={pattern=crosshatch, pattern color=white!93.3333333333333!black, fill opacity=0.7}}
\addlegendentry{SA-PPO}

\draw[draw=white!93.3333333333333!black,fill=color0,opacity=0.7,very thin,postaction={pattern=crosshatch, pattern color=white!93.3333333333333!black, fill opacity=0.7}] (axis cs:1.835,0) rectangle (axis cs:2.165,2652);
\draw[draw=white!93.3333333333333!black,fill=color0,opacity=0.7,very thin,postaction={pattern=crosshatch, pattern color=white!93.3333333333333!black, fill opacity=0.7}] (axis cs:3.835,0) rectangle (axis cs:4.165,1130);
\draw[draw=white!93.3333333333333!black,fill=color0,opacity=0.7,very thin,postaction={pattern=crosshatch, pattern color=white!93.3333333333333!black, fill opacity=0.7}] (axis cs:5.835,0) rectangle (axis cs:6.165,1076);
\draw[draw=white!93.3333333333333!black,fill=color0,opacity=0.7,very thin,postaction={pattern=crosshatch, pattern color=white!93.3333333333333!black, fill opacity=0.7}] (axis cs:7.835,0) rectangle (axis cs:8.165,856);
\draw[draw=white!93.3333333333333!black,fill=color1,opacity=0.8,very thin,postaction={pattern=north west lines, pattern color=white!93.3333333333333!black, fill opacity=0.8}] (axis cs:0.165,0) rectangle (axis cs:0.495,3694);
\addlegendimage{ybar,ybar legend,draw=white!93.3333333333333!black,fill=color1,opacity=0.8,very thin,postaction={pattern=north west lines, pattern color=white!93.3333333333333!black, fill opacity=0.8}}
\addlegendentry{SA-PPO + $w(s)$}

\draw[draw=white!93.3333333333333!black,fill=color1,opacity=0.8,very thin,postaction={pattern=north west lines, pattern color=white!93.3333333333333!black, fill opacity=0.8}] (axis cs:2.165,0) rectangle (axis cs:2.495,2743);
\draw[draw=white!93.3333333333333!black,fill=color1,opacity=0.8,very thin,postaction={pattern=north west lines, pattern color=white!93.3333333333333!black, fill opacity=0.8}] (axis cs:4.165,0) rectangle (axis cs:4.495,1285);
\draw[draw=white!93.3333333333333!black,fill=color1,opacity=0.8,very thin,postaction={pattern=north west lines, pattern color=white!93.3333333333333!black, fill opacity=0.8}] (axis cs:6.165,0) rectangle (axis cs:6.495,1324);
\draw[draw=white!93.3333333333333!black,fill=color1,opacity=0.8,very thin,postaction={pattern=north west lines, pattern color=white!93.3333333333333!black, fill opacity=0.8}] (axis cs:8.165,0) rectangle (axis cs:8.495,1025);
\draw[draw=white!93.3333333333333!black,fill=color2,opacity=0.8,very thin,postaction={pattern=north east lines, pattern color=white!93.3333333333333!black, fill opacity=0.8}] (axis cs:0.595,0) rectangle (axis cs:0.925,3574);
\addlegendimage{ybar,ybar legend,draw=white!93.3333333333333!black,fill=color2,opacity=0.8,very thin,postaction={pattern=north east lines, pattern color=white!93.3333333333333!black, fill opacity=0.8}}
\addlegendentry{\ourppo - $w(s)$}

\draw[draw=white!93.3333333333333!black,fill=color2,opacity=0.8,very thin,postaction={pattern=north east lines, pattern color=white!93.3333333333333!black, fill opacity=0.8}] (axis cs:2.595,0) rectangle (axis cs:2.925,3314);
\draw[draw=white!93.3333333333333!black,fill=color2,opacity=0.8,very thin,postaction={pattern=north east lines, pattern color=white!93.3333333333333!black, fill opacity=0.8}] (axis cs:4.595,0) rectangle (axis cs:4.925,3019);
\draw[draw=white!93.3333333333333!black,fill=color2,opacity=0.8,very thin,postaction={pattern=north east lines, pattern color=white!93.3333333333333!black, fill opacity=0.8}] (axis cs:6.595,0) rectangle (axis cs:6.925,2034);
\draw[draw=white!93.3333333333333!black,fill=color2,opacity=0.8,very thin,postaction={pattern=north east lines, pattern color=white!93.3333333333333!black, fill opacity=0.8}] (axis cs:8.595,0) rectangle (axis cs:8.925,2387);
\draw[draw=white!93.3333333333333!black,fill=color3,opacity=0.8,very thin] (axis cs:0.925,0) rectangle (axis cs:1.255,3616);
\addlegendimage{ybar,ybar legend,draw=white!93.3333333333333!black,fill=color3,opacity=0.8,very thin}
\addlegendentry{\ourppo (Ours)}

\draw[draw=white!93.3333333333333!black,fill=color3,opacity=0.8,very thin] (axis cs:2.925,0) rectangle (axis cs:3.255,3541);
\draw[draw=white!93.3333333333333!black,fill=color3,opacity=0.8,very thin] (axis cs:4.925,0) rectangle (axis cs:5.255,3277);
\draw[draw=white!93.3333333333333!black,fill=color3,opacity=0.8,very thin] (axis cs:6.925,0) rectangle (axis cs:7.255,2390);
\draw[draw=white!93.3333333333333!black,fill=color3,opacity=0.8,very thin] (axis cs:8.925,0) rectangle (axis cs:9.255,2579);
\path [draw=white!41.1764705882353!black, line width=0.48pt]
(axis cs:0,3703)
--(axis cs:0,3707);

\path [draw=white!41.1764705882353!black, line width=0.48pt]
(axis cs:2,2417)
--(axis cs:2,2887);

\path [draw=white!41.1764705882353!black, line width=0.48pt]
(axis cs:4,1088)
--(axis cs:4,1172);

\path [draw=white!41.1764705882353!black, line width=0.48pt]
(axis cs:6,890)
--(axis cs:6,1262);

\path [draw=white!41.1764705882353!black, line width=0.48pt]
(axis cs:8,835)
--(axis cs:8,877);

\path [draw=white!41.1764705882353!black, line width=0.48pt]
(axis cs:0.33,3429)
--(axis cs:0.33,3959);

\path [draw=white!41.1764705882353!black, line width=0.48pt]
(axis cs:2.33,2585)
--(axis cs:2.33,2901);

\path [draw=white!41.1764705882353!black, line width=0.48pt]
(axis cs:4.33,1088)
--(axis cs:4.33,1482);

\path [draw=white!41.1764705882353!black, line width=0.48pt]
(axis cs:6.33,1095)
--(axis cs:6.33,1553);

\path [draw=white!41.1764705882353!black, line width=0.48pt]
(axis cs:8.33,828)
--(axis cs:8.33,1222);

\path [draw=white!41.1764705882353!black, line width=0.48pt]
(axis cs:0.76,3388)
--(axis cs:0.76,3760);

\path [draw=white!41.1764705882353!black, line width=0.48pt]
(axis cs:2.76,2791)
--(axis cs:2.76,3837);

\path [draw=white!41.1764705882353!black, line width=0.48pt]
(axis cs:4.76,2741)
--(axis cs:4.76,3297);

\path [draw=white!41.1764705882353!black, line width=0.48pt]
(axis cs:6.76,1667)
--(axis cs:6.76,2401);

\path [draw=white!41.1764705882353!black, line width=0.48pt]
(axis cs:8.76,2166)
--(axis cs:8.76,2608);

\path [draw=white!41.1764705882353!black, line width=0.48pt]
(axis cs:1.09,3517)
--(axis cs:1.09,3715);

\path [draw=white!41.1764705882353!black, line width=0.48pt]
(axis cs:3.09,3334)
--(axis cs:3.09,3748);

\path [draw=white!41.1764705882353!black, line width=0.48pt]
(axis cs:5.09,3118)
--(axis cs:5.09,3436);

\path [draw=white!41.1764705882353!black, line width=0.48pt]
(axis cs:7.09,2245)
--(axis cs:7.09,2535);

\path [draw=white!41.1764705882353!black, line width=0.48pt]
(axis cs:9.09,2350)
--(axis cs:9.09,2808);

\addplot [line width=0.48pt, white!41.1764705882353!black, opacity=1, mark=-, mark size=1.3, mark options={solid}, only marks]
table {%
0 3703
2 2417
4 1088
6 890
8 835
};

\addplot [line width=0.48pt, white!41.1764705882353!black, opacity=1, mark=-, mark size=1.3, mark options={solid}, only marks]
table {%
0 3707
2 2887
4 1172
6 1262
8 877
};

\addplot [line width=0.48pt, white!41.1764705882353!black, opacity=1, mark=-, mark size=1.3, mark options={solid}, only marks]
table {%
0.33 3429
2.33 2585
4.33 1088
6.33 1095
8.33 828
};

\addplot [line width=0.48pt, white!41.1764705882353!black, opacity=1, mark=-, mark size=1.3, mark options={solid}, only marks]
table {%
0.33 3959
2.33 2901
4.33 1482
6.33 1553
8.33 1222
};

\addplot [line width=0.48pt, white!41.1764705882353!black, opacity=1, mark=-, mark size=1.3, mark options={solid}, only marks]
table {%
0.76 3388
2.76 2791
4.76 2741
6.76 1667
8.76 2166
};

\addplot [line width=0.48pt, white!41.1764705882353!black, opacity=1, mark=-, mark size=1.3, mark options={solid}, only marks]
table {%
0.76 3760
2.76 3837
4.76 3297
6.76 2401
8.76 2608
};

\addplot [line width=0.48pt, white!41.1764705882353!black, opacity=1, mark=-, mark size=1.3, mark options={solid}, only marks]
table {%
1.09 3517
3.09 3334
5.09 3118
7.09 2245
9.09 2350
};

\addplot [line width=0.48pt, white!41.1764705882353!black, opacity=1, mark=-, mark size=1.3, mark options={solid}, only marks]
table {%
1.09 3715
3.09 3748
5.09 3436
7.09 2535
9.09 2808
};

\end{axis}

\end{tikzpicture}}
        \vspace{-1.8em}
        \caption{\scriptsize{Hopper}}
        \label{sfig:abl_hopper}
    \end{subfigure}\\
        \begin{subfigure}[t]{0.45\textwidth}
        \centering
        \resizebox{\textwidth}{!}{
\begin{tikzpicture}[scale=0.6]

\definecolor{color0}{rgb}{0.168966293897257,0.475610408705207,0.783522853775186}
\definecolor{color1}{rgb}{0.691639317416794,0.806706471648465,0.92224956246861}
\definecolor{color2}{rgb}{0.95715994546835,0.680184738372337,0.77629698219994}
\definecolor{color3}{rgb}{0.885434799735519,0.144732659736887,0.401761241571282}

\begin{axis}[
height=3cm,
width=\textwidth,
scale only axis,
axis line style={white!15!black},
legend cell align={left},
legend columns=2,
legend style={
  fill opacity=0,
  draw opacity=1,
  text opacity=1,
  at={(0.53,1.01)},
  anchor=north,
  draw=none,
  font=\scriptsize
},
xtick pos = lower,
tick align=outside,
x grid style={white!80!black},
xmin=-0.636, xmax=9.726,
xtick style={color=white!15!black},
xtick={0.595,2.595,4.595,6.595,8.595},
xticklabels={\scriptsize{No Attack},\scriptsize{MaxDiff},\scriptsize{Robust Sarsa},\scriptsize{SA-RL},\scriptsize{PA-AD}},
y grid style={white!80!black},
ylabel={\scriptsize{Average episode rewards}},
ymin=-1900, ymax=8800,
ytick style={color=white!15!black},
y grid style={white},
tick label style={font=\tiny},
ymajorgrids,
]
\draw[draw=white!93.3333333333333!black,fill=color0,opacity=0.7,very thin,postaction={pattern=crosshatch, pattern color=white!93.3333333333333!black, fill opacity=0.7}] (axis cs:-0.165,0) rectangle (axis cs:0.165,4292);
\addlegendimage{ybar,ybar legend,draw=white!93.3333333333333!black,fill=color0,opacity=0.7,very thin,postaction={pattern=crosshatch, pattern color=white!93.3333333333333!black, fill opacity=0.7}}
\addlegendentry{SA-PPO}

\draw[draw=white!93.3333333333333!black,fill=color0,opacity=0.7,very thin,postaction={pattern=crosshatch, pattern color=white!93.3333333333333!black, fill opacity=0.7}] (axis cs:1.835,0) rectangle (axis cs:2.165,4662);
\draw[draw=white!93.3333333333333!black,fill=color0,opacity=0.7,very thin,postaction={pattern=crosshatch, pattern color=white!93.3333333333333!black, fill opacity=0.7}] (axis cs:3.835,0) rectangle (axis cs:4.165,3412);
\draw[draw=white!93.3333333333333!black,fill=color0,opacity=0.7,very thin,postaction={pattern=crosshatch, pattern color=white!93.3333333333333!black, fill opacity=0.7}] (axis cs:5.835,0) rectangle (axis cs:6.165,2511);
\draw[draw=white!93.3333333333333!black,fill=color0,opacity=0.7,very thin,postaction={pattern=crosshatch, pattern color=white!93.3333333333333!black, fill opacity=0.7}] (axis cs:7.835,0) rectangle (axis cs:8.165,-1296);
\draw[draw=white!93.3333333333333!black,fill=color1,opacity=0.8,very thin,postaction={pattern=north west lines, pattern color=white!93.3333333333333!black, fill opacity=0.8}] (axis cs:0.165,0) rectangle (axis cs:0.495,4528);
\addlegendimage{ybar,ybar legend,draw=white!93.3333333333333!black,fill=color1,opacity=0.8,very thin,postaction={pattern=north west lines, pattern color=white!93.3333333333333!black, fill opacity=0.8}}
\addlegendentry{SA-PPO + $w(s)$}

\draw[draw=white!93.3333333333333!black,fill=color1,opacity=0.8,very thin,postaction={pattern=north west lines, pattern color=white!93.3333333333333!black, fill opacity=0.8}] (axis cs:2.165,0) rectangle (axis cs:2.495,4716);
\draw[draw=white!93.3333333333333!black,fill=color1,opacity=0.8,very thin,postaction={pattern=north west lines, pattern color=white!93.3333333333333!black, fill opacity=0.8}] (axis cs:4.165,0) rectangle (axis cs:4.495,3621);
\draw[draw=white!93.3333333333333!black,fill=color1,opacity=0.8,very thin,postaction={pattern=north west lines, pattern color=white!93.3333333333333!black, fill opacity=0.8}] (axis cs:6.165,0) rectangle (axis cs:6.495,2684);
\draw[draw=white!93.3333333333333!black,fill=color1,opacity=0.8,very thin,postaction={pattern=north west lines, pattern color=white!93.3333333333333!black, fill opacity=0.8}] (axis cs:8.165,0) rectangle (axis cs:8.495,-578);
\draw[draw=white!93.3333333333333!black,fill=color2,opacity=0.8,very thin,postaction={pattern=north east lines, pattern color=white!93.3333333333333!black, fill opacity=0.8}] (axis cs:0.595,0) rectangle (axis cs:0.925,5304);
\addlegendimage{ybar,ybar legend,draw=white!93.3333333333333!black,fill=color2,opacity=0.8,very thin,postaction={pattern=north east lines, pattern color=white!93.3333333333333!black, fill opacity=0.8}}
\addlegendentry{\ourppo - $w(s)$}

\draw[draw=white!93.3333333333333!black,fill=color2,opacity=0.8,very thin,postaction={pattern=north east lines, pattern color=white!93.3333333333333!black, fill opacity=0.8}] (axis cs:2.595,0) rectangle (axis cs:2.925,5097);
\draw[draw=white!93.3333333333333!black,fill=color2,opacity=0.8,very thin,postaction={pattern=north east lines, pattern color=white!93.3333333333333!black, fill opacity=0.8}] (axis cs:4.595,0) rectangle (axis cs:4.925,4238);
\draw[draw=white!93.3333333333333!black,fill=color2,opacity=0.8,very thin,postaction={pattern=north east lines, pattern color=white!93.3333333333333!black, fill opacity=0.8}] (axis cs:6.595,0) rectangle (axis cs:6.925,3659);
\draw[draw=white!93.3333333333333!black,fill=color2,opacity=0.8,very thin,postaction={pattern=north east lines, pattern color=white!93.3333333333333!black, fill opacity=0.8}] (axis cs:8.595,0) rectangle (axis cs:8.925,2864);
\draw[draw=white!93.3333333333333!black,fill=color3,opacity=0.8,very thin] (axis cs:0.925,0) rectangle (axis cs:1.255,5596);
\addlegendimage{ybar,ybar legend,draw=white!93.3333333333333!black,fill=color3,opacity=0.8,very thin}
\addlegendentry{\ourppo (Ours)}

\draw[draw=white!93.3333333333333!black,fill=color3,opacity=0.8,very thin] (axis cs:2.925,0) rectangle (axis cs:3.255,5284);
\draw[draw=white!93.3333333333333!black,fill=color3,opacity=0.8,very thin] (axis cs:4.925,0) rectangle (axis cs:5.255,4339);
\draw[draw=white!93.3333333333333!black,fill=color3,opacity=0.8,very thin] (axis cs:6.925,0) rectangle (axis cs:7.255,3822);
\draw[draw=white!93.3333333333333!black,fill=color3,opacity=0.8,very thin] (axis cs:8.925,0) rectangle (axis cs:9.255,3164);
\path [draw=white!41.1764705882353!black, line width=0.48pt]
(axis cs:0,3968)
--(axis cs:0,4616);

\path [draw=white!41.1764705882353!black, line width=0.48pt]
(axis cs:2,4340)
--(axis cs:2,4984);

\path [draw=white!41.1764705882353!black, line width=0.48pt]
(axis cs:4,2657)
--(axis cs:4,4167);

\path [draw=white!41.1764705882353!black, line width=0.48pt]
(axis cs:6,1894)
--(axis cs:6,3128);

\path [draw=white!41.1764705882353!black, line width=0.48pt]
(axis cs:8,-1719)
--(axis cs:8,-873);

\path [draw=white!41.1764705882353!black, line width=0.48pt]
(axis cs:0.33,4092)
--(axis cs:0.33,4964);

\path [draw=white!41.1764705882353!black, line width=0.48pt]
(axis cs:2.33,4398)
--(axis cs:2.33,5034);

\path [draw=white!41.1764705882353!black, line width=0.48pt]
(axis cs:4.33,2989)
--(axis cs:4.33,4253);

\path [draw=white!41.1764705882353!black, line width=0.48pt]
(axis cs:6.33,2101)
--(axis cs:6.33,3267);

\path [draw=white!41.1764705882353!black, line width=0.48pt]
(axis cs:8.33,-1034)
--(axis cs:8.33,-122);

\path [draw=white!41.1764705882353!black, line width=0.48pt]
(axis cs:0.76,5006)
--(axis cs:0.76,5602);

\path [draw=white!41.1764705882353!black, line width=0.48pt]
(axis cs:2.76,4863)
--(axis cs:2.76,5331);

\path [draw=white!41.1764705882353!black, line width=0.48pt]
(axis cs:4.76,4042)
--(axis cs:4.76,4434);

\path [draw=white!41.1764705882353!black, line width=0.48pt]
(axis cs:6.76,3383)
--(axis cs:6.76,3935);

\path [draw=white!41.1764705882353!black, line width=0.48pt]
(axis cs:8.76,2641)
--(axis cs:8.76,3087);

\path [draw=white!41.1764705882353!black, line width=0.48pt]
(axis cs:1.09,5371)
--(axis cs:1.09,5821);

\path [draw=white!41.1764705882353!black, line width=0.48pt]
(axis cs:3.09,5102)
--(axis cs:3.09,5466);

\path [draw=white!41.1764705882353!black, line width=0.48pt]
(axis cs:5.09,4179)
--(axis cs:5.09,4499);

\path [draw=white!41.1764705882353!black, line width=0.48pt]
(axis cs:7.09,3637)
--(axis cs:7.09,4007);

\path [draw=white!41.1764705882353!black, line width=0.48pt]
(axis cs:9.09,3001)
--(axis cs:9.09,3327);

\addplot [line width=0.48pt, white!41.1764705882353!black, opacity=1, mark=-, mark size=1.3, mark options={solid}, only marks]
table {%
0 3968
2 4340
4 2657
6 1894
8 -1719
};

\addplot [line width=0.48pt, white!41.1764705882353!black, opacity=1, mark=-, mark size=1.3, mark options={solid}, only marks]
table {%
0 4616
2 4984
4 4167
6 3128
8 -873
};

\addplot [line width=0.48pt, white!41.1764705882353!black, opacity=1, mark=-, mark size=1.3, mark options={solid}, only marks]
table {%
0.33 4092
2.33 4398
4.33 2989
6.33 2101
8.33 -1034
};

\addplot [line width=0.48pt, white!41.1764705882353!black, opacity=1, mark=-, mark size=1.3, mark options={solid}, only marks]
table {%
0.33 4964
2.33 5034
4.33 4253
6.33 3267
8.33 -122
};

\addplot [line width=0.48pt, white!41.1764705882353!black, opacity=1, mark=-, mark size=1.3, mark options={solid}, only marks]
table {%
0.76 5006
2.76 4863
4.76 4042
6.76 3383
8.76 2641
};

\addplot [line width=0.48pt, white!41.1764705882353!black, opacity=1, mark=-, mark size=1.3, mark options={solid}, only marks]
table {%
0.76 5602
2.76 5331
4.76 4434
6.76 3935
8.76 3087
};

\addplot [line width=0.48pt, white!41.1764705882353!black, opacity=1, mark=-, mark size=1.3, mark options={solid}, only marks]
table {%
1.09 5371
3.09 5102
5.09 4179
7.09 3637
9.09 3001
};

\addplot [line width=0.48pt, white!41.1764705882353!black, opacity=1, mark=-, mark size=1.3, mark options={solid}, only marks]
table {%
1.09 5821
3.09 5466
5.09 4499
7.09 4007
9.09 3327
};

\end{axis}

\end{tikzpicture}}
        \vspace{-1.8em}
        \caption{\scriptsize{Ant}}
        \label{sfig:abl_ant}
    \end{subfigure}
    \begin{subfigure}[t]{0.45\textwidth}
        \centering
        \resizebox{\textwidth}{!}{
\begin{tikzpicture}[scale=0.6]

\definecolor{color0}{rgb}{0.168966293897257,0.475610408705207,0.783522853775186}
\definecolor{color1}{rgb}{0.691639317416794,0.806706471648465,0.92224956246861}
\definecolor{color2}{rgb}{0.95715994546835,0.680184738372337,0.77629698219994}
\definecolor{color3}{rgb}{0.885434799735519,0.144732659736887,0.401761241571282}

\begin{axis}[
height=3cm,
width=\textwidth,
scale only axis,
axis line style={white!15!black},
legend cell align={left},
legend columns=2,
legend style={
  fill opacity=0,
  draw opacity=1,
  text opacity=1,
  at={(0.53,1.01)},
  anchor=north,
  draw=none,
  font=\scriptsize
},
xtick pos = lower,
tick align=outside,
x grid style={white!80!black},
xmin=-0.636, xmax=9.726,
xtick style={color=white!15!black},
xtick={0.595,2.595,4.595,6.595,8.595},
xticklabels={\scriptsize{No Attack},\scriptsize{MaxDiff},\scriptsize{Robust Sarsa},\scriptsize{SA-RL},\scriptsize{PA-AD}},
y grid style={white!80!black},
ylabel={\scriptsize{Average episode rewards}},
ymin=800, ymax=6000,
ytick style={color=white!15!black},
y grid style={white},
tick label style={font=\tiny},
ymajorgrids,
]
\draw[draw=white!93.3333333333333!black,fill=color0,opacity=0.7,very thin,postaction={pattern=crosshatch, pattern color=white!93.3333333333333!black, fill opacity=0.7}] (axis cs:-0.165,0) rectangle (axis cs:0.165,4487);
\addlegendimage{ybar,ybar legend,draw=white!93.3333333333333!black,fill=color0,opacity=0.7,very thin,postaction={pattern=crosshatch, pattern color=white!93.3333333333333!black, fill opacity=0.7}}
\addlegendentry{SA-PPO}

\draw[draw=white!93.3333333333333!black,fill=color0,opacity=0.7,very thin,postaction={pattern=crosshatch, pattern color=white!93.3333333333333!black, fill opacity=0.7}] (axis cs:1.835,0) rectangle (axis cs:2.165,3668);
\draw[draw=white!93.3333333333333!black,fill=color0,opacity=0.7,very thin,postaction={pattern=crosshatch, pattern color=white!93.3333333333333!black, fill opacity=0.7}] (axis cs:3.835,0) rectangle (axis cs:4.165,3808);
\draw[draw=white!93.3333333333333!black,fill=color0,opacity=0.7,very thin,postaction={pattern=crosshatch, pattern color=white!93.3333333333333!black, fill opacity=0.7}] (axis cs:5.835,0) rectangle (axis cs:6.165,2908);
\draw[draw=white!93.3333333333333!black,fill=color0,opacity=0.7,very thin,postaction={pattern=crosshatch, pattern color=white!93.3333333333333!black, fill opacity=0.7}] (axis cs:7.835,0) rectangle (axis cs:8.165,1042);
\draw[draw=white!93.3333333333333!black,fill=color1,opacity=0.8,very thin,postaction={pattern=north west lines, pattern color=white!93.3333333333333!black, fill opacity=0.8}] (axis cs:0.165,0) rectangle (axis cs:0.495,4576);
\addlegendimage{ybar,ybar legend,draw=white!93.3333333333333!black,fill=color1,opacity=0.8,very thin,postaction={pattern=north west lines, pattern color=white!93.3333333333333!black, fill opacity=0.8}}
\addlegendentry{SA-PPO + $w(s)$}

\draw[draw=white!93.3333333333333!black,fill=color1,opacity=0.8,very thin,postaction={pattern=north west lines, pattern color=white!93.3333333333333!black, fill opacity=0.8}] (axis cs:2.165,0) rectangle (axis cs:2.495,3823);
\draw[draw=white!93.3333333333333!black,fill=color1,opacity=0.8,very thin,postaction={pattern=north west lines, pattern color=white!93.3333333333333!black, fill opacity=0.8}] (axis cs:4.165,0) rectangle (axis cs:4.495,3946);
\draw[draw=white!93.3333333333333!black,fill=color1,opacity=0.8,very thin,postaction={pattern=north west lines, pattern color=white!93.3333333333333!black, fill opacity=0.8}] (axis cs:6.165,0) rectangle (axis cs:6.495,3104);
\draw[draw=white!93.3333333333333!black,fill=color1,opacity=0.8,very thin,postaction={pattern=north west lines, pattern color=white!93.3333333333333!black, fill opacity=0.8}] (axis cs:8.165,0) rectangle (axis cs:8.495,1179);
\draw[draw=white!93.3333333333333!black,fill=color2,opacity=0.8,very thin,postaction={pattern=north east lines, pattern color=white!93.3333333333333!black, fill opacity=0.8}] (axis cs:0.595,0) rectangle (axis cs:0.925,4146);
\addlegendimage{ybar,ybar legend,draw=white!93.3333333333333!black,fill=color2,opacity=0.8,very thin,postaction={pattern=north east lines, pattern color=white!93.3333333333333!black, fill opacity=0.8}}
\addlegendentry{WocaR-PPO - $w(s)$}

\draw[draw=white!93.3333333333333!black,fill=color2,opacity=0.8,very thin,postaction={pattern=north east lines, pattern color=white!93.3333333333333!black, fill opacity=0.8}] (axis cs:2.595,0) rectangle (axis cs:2.925,4141);
\draw[draw=white!93.3333333333333!black,fill=color2,opacity=0.8,very thin,postaction={pattern=north east lines, pattern color=white!93.3333333333333!black, fill opacity=0.8}] (axis cs:4.595,0) rectangle (axis cs:4.925,3986);
\draw[draw=white!93.3333333333333!black,fill=color2,opacity=0.8,very thin,postaction={pattern=north east lines, pattern color=white!93.3333333333333!black, fill opacity=0.8}] (axis cs:6.595,0) rectangle (axis cs:6.925,3574);
\draw[draw=white!93.3333333333333!black,fill=color2,opacity=0.8,very thin,postaction={pattern=north east lines, pattern color=white!93.3333333333333!black, fill opacity=0.8}] (axis cs:8.595,0) rectangle (axis cs:8.925,2476);
\draw[draw=white!93.3333333333333!black,fill=color3,opacity=0.8,very thin] (axis cs:0.925,0) rectangle (axis cs:1.255,4156);
\addlegendimage{ybar,ybar legend,draw=white!93.3333333333333!black,fill=color3,opacity=0.8,very thin}
\addlegendentry{WocaR-PPO (Ours)}

\draw[draw=white!93.3333333333333!black,fill=color3,opacity=0.8,very thin] (axis cs:2.925,0) rectangle (axis cs:3.255,4177);
\draw[draw=white!93.3333333333333!black,fill=color3,opacity=0.8,very thin] (axis cs:4.925,0) rectangle (axis cs:5.255,4093);
\draw[draw=white!93.3333333333333!black,fill=color3,opacity=0.8,very thin] (axis cs:6.925,0) rectangle (axis cs:7.255,3770);
\draw[draw=white!93.3333333333333!black,fill=color3,opacity=0.8,very thin] (axis cs:8.925,0) rectangle (axis cs:9.255,2722);
\path [draw=white!41.1764705882353!black, line width=0.48pt]
(axis cs:0,4426)
--(axis cs:0,4548);

\path [draw=white!41.1764705882353!black, line width=0.48pt]
(axis cs:2,3379)
--(axis cs:2,3957);

\path [draw=white!41.1764705882353!black, line width=0.48pt]
(axis cs:4,3670)
--(axis cs:4,3946);

\path [draw=white!41.1764705882353!black, line width=0.48pt]
(axis cs:6,2672)
--(axis cs:6,3144);

\path [draw=white!41.1764705882353!black, line width=0.48pt]
(axis cs:8,849)
--(axis cs:8,1235);

\path [draw=white!41.1764705882353!black, line width=0.48pt]
(axis cs:0.33,4334)
--(axis cs:0.33,4818);

\path [draw=white!41.1764705882353!black, line width=0.48pt]
(axis cs:2.33,3568)
--(axis cs:2.33,4078);

\path [draw=white!41.1764705882353!black, line width=0.48pt]
(axis cs:4.33,3774)
--(axis cs:4.33,4118);

\path [draw=white!41.1764705882353!black, line width=0.48pt]
(axis cs:6.33,2786)
--(axis cs:6.33,3422);

\path [draw=white!41.1764705882353!black, line width=0.48pt]
(axis cs:8.33,941)
--(axis cs:8.33,1417);

\path [draw=white!41.1764705882353!black, line width=0.48pt]
(axis cs:0.76,3948)
--(axis cs:0.76,4344);

\path [draw=white!41.1764705882353!black, line width=0.48pt]
(axis cs:2.76,3950)
--(axis cs:2.76,4332);

\path [draw=white!41.1764705882353!black, line width=0.48pt]
(axis cs:4.76,3814)
--(axis cs:4.76,4158);

\path [draw=white!41.1764705882353!black, line width=0.48pt]
(axis cs:6.76,3281)
--(axis cs:6.76,3867);

\path [draw=white!41.1764705882353!black, line width=0.48pt]
(axis cs:8.76,2182)
--(axis cs:8.76,2770);

\path [draw=white!41.1764705882353!black, line width=0.48pt]
(axis cs:1.09,3799)
--(axis cs:1.09,4513);

\path [draw=white!41.1764705882353!black, line width=0.48pt]
(axis cs:3.09,4001)
--(axis cs:3.09,4353);

\path [draw=white!41.1764705882353!black, line width=0.48pt]
(axis cs:5.09,3955)
--(axis cs:5.09,4231);

\path [draw=white!41.1764705882353!black, line width=0.48pt]
(axis cs:7.09,3574)
--(axis cs:7.09,3966);

\path [draw=white!41.1764705882353!black, line width=0.48pt]
(axis cs:9.09,2549)
--(axis cs:9.09,2895);

\addplot [line width=0.48pt, white!41.1764705882353!black, opacity=1, mark=-, mark size=1.3, mark options={solid}, only marks]
table {%
0 4426
2 3379
4 3670
6 2672
8 849
};
\addplot [line width=0.48pt, white!41.1764705882353!black, opacity=1, mark=-, mark size=1.3, mark options={solid}, only marks]
table {%
0 4548
2 3957
4 3946
6 3144
8 1235
};
\addplot [line width=0.48pt, white!41.1764705882353!black, opacity=1, mark=-, mark size=1.3, mark options={solid}, only marks]
table {%
0.33 4334
2.33 3568
4.33 3774
6.33 2786
8.33 941
};
\addplot [line width=0.48pt, white!41.1764705882353!black, opacity=1, mark=-, mark size=1.3, mark options={solid}, only marks]
table {%
0.33 4818
2.33 4078
4.33 4118
6.33 3422
8.33 1417
};
\addplot [line width=0.48pt, white!41.1764705882353!black, opacity=1, mark=-, mark size=1.3, mark options={solid}, only marks]
table {%
0.76 3948
2.76 3950
4.76 3814
6.76 3281
8.76 2182
};
\addplot [line width=0.48pt, white!41.1764705882353!black, opacity=1, mark=-, mark size=1.3, mark options={solid}, only marks]
table {%
0.76 4344
2.76 4332
4.76 4158
6.76 3867
8.76 2770
};
\addplot [line width=0.48pt, white!41.1764705882353!black, opacity=1, mark=-, mark size=1.3, mark options={solid}, only marks]
table {%
1.09 3799
3.09 4001
5.09 3955
7.09 3574
9.09 2549
};
\addplot [line width=0.48pt, white!41.1764705882353!black, opacity=1, mark=-, mark size=1.3, mark options={solid}, only marks]
table {%
1.09 4513
3.09 4353
5.09 4231
7.09 3966
9.09 2895
};
\end{axis}

\end{tikzpicture}}
        \vspace{-1.8em}
        \caption{\scriptsize{Walker2d}}
        \label{sfig:abl_w}
    \end{subfigure}
    \caption{\small{Ablation performance for the state importance weight $w(s)$ under no attack and different attacks on Hopper, Walker2d, Halfcheetah, and Ant.}}
\label{app:fig:abl}
\end{figure*}
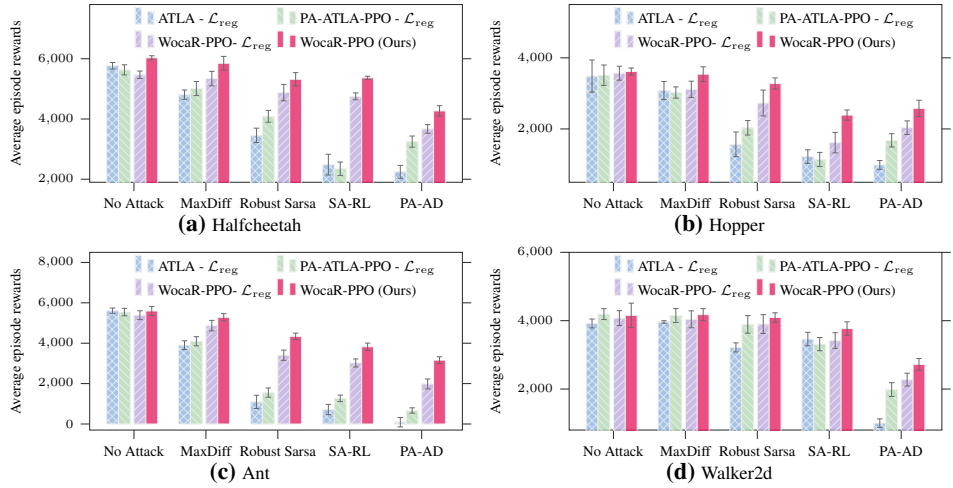
\begin{figure*}[!t]
    \centering
    \begin{subfigure}[t]{0.45\textwidth}
        \centering
        \resizebox{\textwidth}{!}{
\begin{tikzpicture}

\definecolor{color0}{rgb}{0.552941176470588,0.707450980392157,0.896078431372549}
\definecolor{color1}{rgb}{0.703057937269964,0.856470725837709,0.708208636049633}
\definecolor{color2}{rgb}{0.744102947491485,0.669256898711933,0.879184450886271}
\definecolor{color3}{rgb}{0.905434799735519,0.144732659736887,0.401761241571282}

\begin{axis}[
height=3cm,
width=\textwidth,
scale only axis,
axis line style={white!15!black},
legend cell align={left},
legend columns=2,
legend style={
  fill opacity=0,
  draw opacity=1,
  text opacity=1,
  at={(0.53,1.01)},
  anchor=north,
  draw=none,
  font=\scriptsize
},
xtick pos = lower,
tick align=outside,
x grid style={white!80!black},
xmin=-0.636, xmax=9.726,
xtick style={color=white!15!black},
xtick={0.595,2.595,4.595,6.595,8.595},
xticklabels={\scriptsize{No Attack},\scriptsize{MaxDiff},\scriptsize{Robust Sarsa},\scriptsize{SA-RL},\scriptsize{PA-AD}},
y grid style={white!80!black},
ylabel={\scriptsize{Average episode rewards}},
ymin=1900, ymax=7800,
ytick style={color=white!15!black},
y grid style={white},
tick label style={font=\tiny},
ymajorgrids,
]
\draw[draw=white!93.3333333333333!black,fill=color0,opacity=0.8,very thin,postaction={pattern=crosshatch, pattern color=white!93.3333333333333!black, fill opacity=0.8}] (axis cs:-0.165,0) rectangle (axis cs:0.165,5766);
\addlegendimage{ybar,ybar legend,draw=white!93.3333333333333!black,fill=color0,opacity=0.8,very thin,postaction={pattern=crosshatch, pattern color=white!93.3333333333333!black, fill opacity=0.8}}
\addlegendentry{ATLA - $\lossreg$}

\draw[draw=white!93.3333333333333!black,fill=color0,opacity=0.8,very thin,postaction={pattern=crosshatch, pattern color=white!93.3333333333333!black, fill opacity=0.8}] (axis cs:1.835,0) rectangle (axis cs:2.165,4807);
\draw[draw=white!93.3333333333333!black,fill=color0,opacity=0.8,very thin,postaction={pattern=crosshatch, pattern color=white!93.3333333333333!black, fill opacity=0.8}] (axis cs:3.835,0) rectangle (axis cs:4.165,3458);
\draw[draw=white!93.3333333333333!black,fill=color0,opacity=0.8,very thin,postaction={pattern=crosshatch, pattern color=white!93.3333333333333!black, fill opacity=0.8}] (axis cs:5.835,0) rectangle (axis cs:6.165,2485);
\draw[draw=white!93.3333333333333!black,fill=color0,opacity=0.8,very thin,postaction={pattern=crosshatch, pattern color=white!93.3333333333333!black, fill opacity=0.8}] (axis cs:7.835,0) rectangle (axis cs:8.165,2246);
\draw[draw=white!93.3333333333333!black,fill=color1,opacity=0.8,very thin,postaction={pattern=north west lines, pattern color=white!93.3333333333333!black, fill opacity=0.8}] (axis cs:0.165,0) rectangle (axis cs:0.495,5632);
\addlegendimage{ybar,ybar legend,draw=white!93.3333333333333!black,fill=color1,opacity=0.8,very thin,postaction={pattern=north west lines, pattern color=white!93.3333333333333!black, fill opacity=0.8}}
\addlegendentry{PA-ATLA-PPO - $\lossreg$}

\draw[draw=white!93.3333333333333!black,fill=color1,opacity=0.8,very thin,postaction={pattern=north west lines, pattern color=white!93.3333333333333!black, fill opacity=0.8}] (axis cs:2.165,0) rectangle (axis cs:2.495,5012);
\draw[draw=white!93.3333333333333!black,fill=color1,opacity=0.8,very thin,postaction={pattern=north west lines, pattern color=white!93.3333333333333!black, fill opacity=0.8}] (axis cs:4.165,0) rectangle (axis cs:4.495,4089);
\draw[draw=white!93.3333333333333!black,fill=color1,opacity=0.8,very thin,postaction={pattern=north west lines, pattern color=white!93.3333333333333!black, fill opacity=0.8}] (axis cs:6.165,0) rectangle (axis cs:6.495,2346);
\draw[draw=white!93.3333333333333!black,fill=color1,opacity=0.8,very thin,postaction={pattern=north west lines, pattern color=white!93.3333333333333!black, fill opacity=0.8}] (axis cs:8.165,0) rectangle (axis cs:8.495,3249);
\draw[draw=white!93.3333333333333!black,fill=color2,opacity=0.8,very thin,postaction={pattern=north east lines, pattern color=white!93.3333333333333!black, fill opacity=0.8}] (axis cs:0.595,0) rectangle (axis cs:0.925,5469);
\addlegendimage{ybar,ybar legend,draw=white!93.3333333333333!black,fill=color2,opacity=0.8,very thin,postaction={pattern=north east lines, pattern color=white!93.3333333333333!black, fill opacity=0.8}}
\addlegendentry{\ourppo - $\lossreg$}

\draw[draw=white!93.3333333333333!black,fill=color2,opacity=0.8,very thin,postaction={pattern=north east lines, pattern color=white!93.3333333333333!black, fill opacity=0.8}] (axis cs:2.595,0) rectangle (axis cs:2.925,5342);
\draw[draw=white!93.3333333333333!black,fill=color2,opacity=0.8,very thin,postaction={pattern=north east lines, pattern color=white!93.3333333333333!black, fill opacity=0.8}] (axis cs:4.595,0) rectangle (axis cs:4.925,4876);
\draw[draw=white!93.3333333333333!black,fill=color2,opacity=0.8,very thin,postaction={pattern=north east lines, pattern color=white!93.3333333333333!black, fill opacity=0.8}] (axis cs:6.595,0) rectangle (axis cs:6.925,4752);
\draw[draw=white!93.3333333333333!black,fill=color2,opacity=0.8,very thin,postaction={pattern=north east lines, pattern color=white!93.3333333333333!black, fill opacity=0.8}] (axis cs:8.595,0) rectangle (axis cs:8.925,3671);
\draw[draw=white!93.3333333333333!black,fill=color3,opacity=0.8,very thin] (axis cs:0.925,0) rectangle (axis cs:1.255,6032);
\addlegendimage{ybar,ybar legend,draw=white!93.3333333333333!black,fill=color3,opacity=0.8,very thin}
\addlegendentry{\ourppo (Ours)}

\draw[draw=white!93.3333333333333!black,fill=color3,opacity=0.8,very thin] (axis cs:2.925,0) rectangle (axis cs:3.255,5850);
\draw[draw=white!93.3333333333333!black,fill=color3,opacity=0.8,very thin] (axis cs:4.925,0) rectangle (axis cs:5.255,5319);
\draw[draw=white!93.3333333333333!black,fill=color3,opacity=0.8,very thin] (axis cs:6.925,0) rectangle (axis cs:7.255,5365);
\draw[draw=white!93.3333333333333!black,fill=color3,opacity=0.8,very thin] (axis cs:8.925,0) rectangle (axis cs:9.255,4269);
\path [draw=white!41.1764705882353!black, line width=0.48pt]
(axis cs:0,5657)
--(axis cs:0,5875);

\path [draw=white!41.1764705882353!black, line width=0.48pt]
(axis cs:2,4653)
--(axis cs:2,4961);

\path [draw=white!41.1764705882353!black, line width=0.48pt]
(axis cs:4,3220)
--(axis cs:4,3696);

\path [draw=white!41.1764705882353!black, line width=0.48pt]
(axis cs:6,2140)
--(axis cs:6,2830);

\path [draw=white!41.1764705882353!black, line width=0.48pt]
(axis cs:8,2033)
--(axis cs:8,2459);

\path [draw=white!41.1764705882353!black, line width=0.48pt]
(axis cs:0.33,5464)
--(axis cs:0.33,5800);

\path [draw=white!41.1764705882353!black, line width=0.48pt]
(axis cs:2.33,4778)
--(axis cs:2.33,5246);

\path [draw=white!41.1764705882353!black, line width=0.48pt]
(axis cs:4.33,3892)
--(axis cs:4.33,4286);

\path [draw=white!41.1764705882353!black, line width=0.48pt]
(axis cs:6.33,2121)
--(axis cs:6.33,2571);

\path [draw=white!41.1764705882353!black, line width=0.48pt]
(axis cs:8.33,3063)
--(axis cs:8.33,3435);

\path [draw=white!41.1764705882353!black, line width=0.48pt]
(axis cs:0.76,5345)
--(axis cs:0.76,5593);

\path [draw=white!41.1764705882353!black, line width=0.48pt]
(axis cs:2.76,5102)
--(axis cs:2.76,5582);

\path [draw=white!41.1764705882353!black, line width=0.48pt]
(axis cs:4.76,4604)
--(axis cs:4.76,5148);

\path [draw=white!41.1764705882353!black, line width=0.48pt]
(axis cs:6.76,4638)
--(axis cs:6.76,4866);

\path [draw=white!41.1764705882353!black, line width=0.48pt]
(axis cs:8.76,3528)
--(axis cs:8.76,3814);

\path [draw=white!41.1764705882353!black, line width=0.48pt]
(axis cs:1.09,5964)
--(axis cs:1.09,6100);

\path [draw=white!41.1764705882353!black, line width=0.48pt]
(axis cs:3.09,5622)
--(axis cs:3.09,6078);

\path [draw=white!41.1764705882353!black, line width=0.48pt]
(axis cs:5.09,5099)
--(axis cs:5.09,5539);

\path [draw=white!41.1764705882353!black, line width=0.48pt]
(axis cs:7.09,5311)
--(axis cs:7.09,5419);

\path [draw=white!41.1764705882353!black, line width=0.48pt]
(axis cs:9.09,4097)
--(axis cs:9.09,4441);

\addplot [line width=0.48pt, white!41.1764705882353!black, opacity=1, mark=-, mark size=1.3, mark options={solid}, only marks]
table {%
0 5657
2 4653
4 3220
6 2140
8 2033
};

\addplot [line width=0.48pt, white!41.1764705882353!black, opacity=1, mark=-, mark size=1.3, mark options={solid}, only marks]
table {%
0 5875
2 4961
4 3696
6 2830
8 2459
};

\addplot [line width=0.48pt, white!41.1764705882353!black, opacity=1, mark=-, mark size=1.3, mark options={solid}, only marks]
table {%
0.33 5464
2.33 4778
4.33 3892
6.33 2121
8.33 3063
};

\addplot [line width=0.48pt, white!41.1764705882353!black, opacity=1, mark=-, mark size=1.3, mark options={solid}, only marks]
table {%
0.33 5800
2.33 5246
4.33 4286
6.33 2571
8.33 3435
};

\addplot [line width=0.48pt, white!41.1764705882353!black, opacity=1, mark=-, mark size=1.3, mark options={solid}, only marks]
table {%
0.76 5345
2.76 5102
4.76 4604
6.76 4638
8.76 3528
};

\addplot [line width=0.48pt, white!41.1764705882353!black, opacity=1, mark=-, mark size=1.3, mark options={solid}, only marks]
table {%
0.76 5593
2.76 5582
4.76 5148
6.76 4866
8.76 3814
};

\addplot [line width=0.48pt, white!41.1764705882353!black, opacity=1, mark=-, mark size=1.3, mark options={solid}, only marks]
table {%
1.09 5964
3.09 5622
5.09 5099
7.09 5311
9.09 4097
};

\addplot [line width=0.48pt, white!41.1764705882353!black, opacity=1, mark=-, mark size=1.3, mark options={solid}, only marks]
table {%
1.09 6100
3.09 6078
5.09 5539
7.09 5419
9.09 4441
};

\end{axis}

\end{tikzpicture}}
        \vspace{-1.8em}
        \caption{\scriptsize{Halfcheetah}}
        \label{sfig:reg_halfcheetah}
    \end{subfigure}
    \begin{subfigure}[t]{0.45\textwidth}
        \centering
        \resizebox{\textwidth}{!}{
\begin{tikzpicture}

\definecolor{color0}{rgb}{0.552941176470588,0.707450980392157,0.896078431372549}
\definecolor{color1}{rgb}{0.703057937269964,0.856470725837709,0.708208636049633}
\definecolor{color2}{rgb}{0.744102947491485,0.669256898711933,0.879184450886271}
\definecolor{color3}{rgb}{0.905434799735519,0.144732659736887,0.401761241571282}

\begin{axis}[
height=3cm,
width=\textwidth,
scale only axis,
axis line style={white!15!black},
legend cell align={left},
legend columns=2,
legend style={
  fill opacity=0,
  draw opacity=1,
  text opacity=1,
  at={(0.53,1.01)},
  anchor=north,
  draw=none,
  font=\scriptsize
},
xtick pos = lower,
tick align=outside,
x grid style={white!80!black},
xmin=-0.636, xmax=9.726,
xtick style={color=white!15!black},
xtick={0.595,2.595,4.595,6.595,8.595},
xticklabels={\scriptsize{No Attack},\scriptsize{MaxDiff},\scriptsize{Robust Sarsa},\scriptsize{SA-RL},\scriptsize{PA-AD}},
y grid style={white!80!black},
ylabel={\scriptsize{Average episode rewards}},
ymin=500, ymax=5500,
ytick style={color=white!15!black},
y grid style={white},
tick label style={font=\tiny},
ymajorgrids,
]
\draw[draw=white!93.3333333333333!black,fill=color0,opacity=0.8,very thin,postaction={pattern=crosshatch, pattern color=white!93.3333333333333!black, fill opacity=0.8}] (axis cs:-0.165,0) rectangle (axis cs:0.165,3487);
\addlegendimage{ybar,ybar legend,draw=white!93.3333333333333!black,fill=color0,opacity=0.8,very thin,postaction={pattern=crosshatch, pattern color=white!93.3333333333333!black, fill opacity=0.8}}
\addlegendentry{ATLA - $\lossreg$}

\draw[draw=white!93.3333333333333!black,fill=color0,opacity=0.8,very thin,postaction={pattern=crosshatch, pattern color=white!93.3333333333333!black, fill opacity=0.8}] (axis cs:1.835,0) rectangle (axis cs:2.165,3081);
\draw[draw=white!93.3333333333333!black,fill=color0,opacity=0.8,very thin,postaction={pattern=crosshatch, pattern color=white!93.3333333333333!black, fill opacity=0.8}] (axis cs:3.835,0) rectangle (axis cs:4.165,1567);
\draw[draw=white!93.3333333333333!black,fill=color0,opacity=0.8,very thin,postaction={pattern=crosshatch, pattern color=white!93.3333333333333!black, fill opacity=0.8}] (axis cs:5.835,0) rectangle (axis cs:6.165,1224);
\draw[draw=white!93.3333333333333!black,fill=color0,opacity=0.8,very thin,postaction={pattern=crosshatch, pattern color=white!93.3333333333333!black, fill opacity=0.8}] (axis cs:7.835,0) rectangle (axis cs:8.165,987);
\draw[draw=white!93.3333333333333!black,fill=color1,opacity=0.8,very thin,postaction={pattern=north west lines, pattern color=white!93.3333333333333!black, fill opacity=0.8}] (axis cs:0.165,0) rectangle (axis cs:0.495,3512);
\addlegendimage{ybar,ybar legend,draw=white!93.3333333333333!black,fill=color1,opacity=0.8,very thin,postaction={pattern=north west lines, pattern color=white!93.3333333333333!black, fill opacity=0.8}}
\addlegendentry{PA-ATLA-PPO - $\lossreg$}

\draw[draw=white!93.3333333333333!black,fill=color1,opacity=0.8,very thin,postaction={pattern=north west lines, pattern color=white!93.3333333333333!black, fill opacity=0.8}] (axis cs:2.165,0) rectangle (axis cs:2.495,3024);
\draw[draw=white!93.3333333333333!black,fill=color1,opacity=0.8,very thin,postaction={pattern=north west lines, pattern color=white!93.3333333333333!black, fill opacity=0.8}] (axis cs:4.165,0) rectangle (axis cs:4.495,2032);
\draw[draw=white!93.3333333333333!black,fill=color1,opacity=0.8,very thin,postaction={pattern=north west lines, pattern color=white!93.3333333333333!black, fill opacity=0.8}] (axis cs:6.165,0) rectangle (axis cs:6.495,1142);
\draw[draw=white!93.3333333333333!black,fill=color1,opacity=0.8,very thin,postaction={pattern=north west lines, pattern color=white!93.3333333333333!black, fill opacity=0.8}] (axis cs:8.165,0) rectangle (axis cs:8.495,1679);
\draw[draw=white!93.3333333333333!black,fill=color2,opacity=0.8,very thin,postaction={pattern=north east lines, pattern color=white!93.3333333333333!black, fill opacity=0.8}] (axis cs:0.595,0) rectangle (axis cs:0.925,3568);
\addlegendimage{ybar,ybar legend,draw=white!93.3333333333333!black,fill=color2,opacity=0.8,very thin,postaction={pattern=north east lines, pattern color=white!93.3333333333333!black, fill opacity=0.8}}
\addlegendentry{\ourppo - $\lossreg$}

\draw[draw=white!93.3333333333333!black,fill=color2,opacity=0.8,very thin,postaction={pattern=north east lines, pattern color=white!93.3333333333333!black, fill opacity=0.8}] (axis cs:2.595,0) rectangle (axis cs:2.925,3114);
\draw[draw=white!93.3333333333333!black,fill=color2,opacity=0.8,very thin,postaction={pattern=north east lines, pattern color=white!93.3333333333333!black, fill opacity=0.8}] (axis cs:4.595,0) rectangle (axis cs:4.925,2729);
\draw[draw=white!93.3333333333333!black,fill=color2,opacity=0.8,very thin,postaction={pattern=north east lines, pattern color=white!93.3333333333333!black, fill opacity=0.8}] (axis cs:6.595,0) rectangle (axis cs:6.925,1615);
\draw[draw=white!93.3333333333333!black,fill=color2,opacity=0.8,very thin,postaction={pattern=north east lines, pattern color=white!93.3333333333333!black, fill opacity=0.8}] (axis cs:8.595,0) rectangle (axis cs:8.925,2035);
\draw[draw=white!93.3333333333333!black,fill=color3,opacity=0.8,very thin] (axis cs:0.925,0) rectangle (axis cs:1.255,3616);
\addlegendimage{ybar,ybar legend,draw=white!93.3333333333333!black,fill=color3,opacity=0.8,very thin}
\addlegendentry{\ourppo (Ours)}

\draw[draw=white!93.3333333333333!black,fill=color3,opacity=0.8,very thin] (axis cs:2.925,0) rectangle (axis cs:3.255,3541);
\draw[draw=white!93.3333333333333!black,fill=color3,opacity=0.8,very thin] (axis cs:4.925,0) rectangle (axis cs:5.255,3277);
\draw[draw=white!93.3333333333333!black,fill=color3,opacity=0.8,very thin] (axis cs:6.925,0) rectangle (axis cs:7.255,2390);
\draw[draw=white!93.3333333333333!black,fill=color3,opacity=0.8,very thin] (axis cs:8.925,0) rectangle (axis cs:9.255,2579);
\path [draw=white!41.1764705882353!black, line width=0.48pt]
(axis cs:0,3035)
--(axis cs:0,3939);

\path [draw=white!41.1764705882353!black, line width=0.48pt]
(axis cs:2,2827)
--(axis cs:2,3335);

\path [draw=white!41.1764705882353!black, line width=0.48pt]
(axis cs:4,1220)
--(axis cs:4,1914);

\path [draw=white!41.1764705882353!black, line width=0.48pt]
(axis cs:6,1033)
--(axis cs:6,1415);

\path [draw=white!41.1764705882353!black, line width=0.48pt]
(axis cs:8,863)
--(axis cs:8,1111);

\path [draw=white!41.1764705882353!black, line width=0.48pt]
(axis cs:0.33,3225)
--(axis cs:0.33,3799);

\path [draw=white!41.1764705882353!black, line width=0.48pt]
(axis cs:2.33,2865)
--(axis cs:2.33,3183);

\path [draw=white!41.1764705882353!black, line width=0.48pt]
(axis cs:4.33,1829)
--(axis cs:4.33,2235);

\path [draw=white!41.1764705882353!black, line width=0.48pt]
(axis cs:6.33,944)
--(axis cs:6.33,1340);

\path [draw=white!41.1764705882353!black, line width=0.48pt]
(axis cs:8.33,1492)
--(axis cs:8.33,1866);

\path [draw=white!41.1764705882353!black, line width=0.48pt]
(axis cs:0.76,3372)
--(axis cs:0.76,3764);

\path [draw=white!41.1764705882353!black, line width=0.48pt]
(axis cs:2.76,2885)
--(axis cs:2.76,3343);

\path [draw=white!41.1764705882353!black, line width=0.48pt]
(axis cs:4.76,2365)
--(axis cs:4.76,3093);

\path [draw=white!41.1764705882353!black, line width=0.48pt]
(axis cs:6.76,1326)
--(axis cs:6.76,1904);

\path [draw=white!41.1764705882353!black, line width=0.48pt]
(axis cs:8.76,1844)
--(axis cs:8.76,2226);

\path [draw=white!41.1764705882353!black, line width=0.48pt]
(axis cs:1.09,3517)
--(axis cs:1.09,3715);

\path [draw=white!41.1764705882353!black, line width=0.48pt]
(axis cs:3.09,3334)
--(axis cs:3.09,3748);

\path [draw=white!41.1764705882353!black, line width=0.48pt]
(axis cs:5.09,3118)
--(axis cs:5.09,3436);

\path [draw=white!41.1764705882353!black, line width=0.48pt]
(axis cs:7.09,2245)
--(axis cs:7.09,2535);

\path [draw=white!41.1764705882353!black, line width=0.48pt]
(axis cs:9.09,2350)
--(axis cs:9.09,2808);

\addplot [line width=0.48pt, white!41.1764705882353!black, opacity=1, mark=-, mark size=1.3, mark options={solid}, only marks]
table {%
0 3035
2 2827
4 1220
6 1033
8 863
};

\addplot [line width=0.48pt, white!41.1764705882353!black, opacity=1, mark=-, mark size=1.3, mark options={solid}, only marks]
table {%
0 3939
2 3335
4 1914
6 1415
8 1111
};

\addplot [line width=0.48pt, white!41.1764705882353!black, opacity=1, mark=-, mark size=1.3, mark options={solid}, only marks]
table {%
0.33 3225
2.33 2865
4.33 1829
6.33 944
8.33 1492
};

\addplot [line width=0.48pt, white!41.1764705882353!black, opacity=1, mark=-, mark size=1.3, mark options={solid}, only marks]
table {%
0.33 3799
2.33 3183
4.33 2235
6.33 1340
8.33 1866
};

\addplot [line width=0.48pt, white!41.1764705882353!black, opacity=1, mark=-, mark size=1.3, mark options={solid}, only marks]
table {%
0.76 3372
2.76 2885
4.76 2365
6.76 1326
8.76 1844
};

\addplot [line width=0.48pt, white!41.1764705882353!black, opacity=1, mark=-, mark size=1.3, mark options={solid}, only marks]
table {%
0.76 3764
2.76 3343
4.76 3093
6.76 1904
8.76 2226
};

\addplot [line width=0.48pt, white!41.1764705882353!black, opacity=1, mark=-, mark size=1.3, mark options={solid}, only marks]
table {%
1.09 3517
3.09 3334
5.09 3118
7.09 2245
9.09 2350
};

\addplot [line width=0.48pt, white!41.1764705882353!black, opacity=1, mark=-, mark size=1.3, mark options={solid}, only marks]
table {%
1.09 3715
3.09 3748
5.09 3436
7.09 2535
9.09 2808
};

\end{axis}
\end{tikzpicture}}
        \vspace{-1.8em}
        \caption{\scriptsize{Hopper}}
        \label{sfig:reg_hopper}
    \end{subfigure}\\
        \begin{subfigure}[t]{0.45\textwidth}
        \centering
        \resizebox{\textwidth}{!}{
\begin{tikzpicture}

\definecolor{color0}{rgb}{0.552941176470588,0.707450980392157,0.896078431372549}
\definecolor{color1}{rgb}{0.703057937269964,0.856470725837709,0.708208636049633}
\definecolor{color2}{rgb}{0.744102947491485,0.669256898711933,0.879184450886271}
\definecolor{color3}{rgb}{0.905434799735519,0.144732659736887,0.401761241571282}

\begin{axis}[
height=3cm,
width=\textwidth,
scale only axis,
axis line style={white!15!black},
legend cell align={left},
legend columns=2,
legend style={
  fill opacity=0,
  draw opacity=1,
  text opacity=1,
  at={(0.53,1.01)},
  anchor=north,
  draw=none,
  font=\scriptsize
},
xtick pos = lower,
tick align=outside,
x grid style={white!80!black},
xmin=-0.636, xmax=9.726,
xtick style={color=white!15!black},
xtick={0.595,2.595,4.595,6.595,8.595},
xticklabels={\scriptsize{No Attack},\scriptsize{MaxDiff},\scriptsize{Robust Sarsa},\scriptsize{SA-RL},\scriptsize{PA-AD}},
y grid style={white!80!black},
ylabel={\scriptsize{Average episode rewards}},
ymin=-300, ymax=8500,
ytick style={color=white!15!black},
y grid style={white},
tick label style={font=\tiny},
ymajorgrids,
]
\draw[draw=white!93.3333333333333!black,fill=color0,opacity=0.8,very thin,postaction={pattern=crosshatch, pattern color=white!93.3333333333333!black, fill opacity=0.8}] (axis cs:-0.165,0) rectangle (axis cs:0.165,5612);
\addlegendimage{ybar,ybar legend,draw=white!93.3333333333333!black,fill=color0,opacity=0.8,very thin,postaction={pattern=crosshatch, pattern color=white!93.3333333333333!black, fill opacity=0.8}}
\addlegendentry{ ATLA - $\lossreg$}

\draw[draw=white!93.3333333333333!black,fill=color0,opacity=0.8,very thin,postaction={pattern=crosshatch, pattern color=white!93.3333333333333!black, fill opacity=0.8}] (axis cs:1.835,0) rectangle (axis cs:2.165,3903);
\draw[draw=white!93.3333333333333!black,fill=color0,opacity=0.8,very thin,postaction={pattern=crosshatch, pattern color=white!93.3333333333333!black, fill opacity=0.8}] (axis cs:3.835,0) rectangle (axis cs:4.165,1096);
\draw[draw=white!93.3333333333333!black,fill=color0,opacity=0.8,very thin,postaction={pattern=crosshatch, pattern color=white!93.3333333333333!black, fill opacity=0.8}] (axis cs:5.835,0) rectangle (axis cs:6.165,716);
\draw[draw=white!93.3333333333333!black,fill=color0,opacity=0.8,very thin,postaction={pattern=crosshatch, pattern color=white!93.3333333333333!black, fill opacity=0.8}] (axis cs:7.835,0) rectangle (axis cs:8.165,89);
\draw[draw=white!93.3333333333333!black,fill=color1,opacity=0.8,very thin,postaction={pattern=north west lines, pattern color=white!93.3333333333333!black, fill opacity=0.8}] (axis cs:0.165,0) rectangle (axis cs:0.495,5543);
\addlegendimage{ybar,ybar legend,draw=white!93.3333333333333!black,fill=color1,opacity=0.8,very thin,postaction={pattern=north west lines, pattern color=white!93.3333333333333!black, fill opacity=0.8}}
\addlegendentry{PA-ATLA-PPO - $\lossreg$}

\draw[draw=white!93.3333333333333!black,fill=color1,opacity=0.8,very thin,postaction={pattern=north west lines, pattern color=white!93.3333333333333!black, fill opacity=0.8}] (axis cs:2.165,0) rectangle (axis cs:2.495,4098);
\draw[draw=white!93.3333333333333!black,fill=color1,opacity=0.8,very thin,postaction={pattern=north west lines, pattern color=white!93.3333333333333!black, fill opacity=0.8}] (axis cs:4.165,0) rectangle (axis cs:4.495,1554);
\draw[draw=white!93.3333333333333!black,fill=color1,opacity=0.8,very thin,postaction={pattern=north west lines, pattern color=white!93.3333333333333!black, fill opacity=0.8}] (axis cs:6.165,0) rectangle (axis cs:6.495,1272);
\draw[draw=white!93.3333333333333!black,fill=color1,opacity=0.8,very thin,postaction={pattern=north west lines, pattern color=white!93.3333333333333!black, fill opacity=0.8}] (axis cs:8.165,0) rectangle (axis cs:8.495,669);
\draw[draw=white!93.3333333333333!black,fill=color2,opacity=0.8,very thin,postaction={pattern=north east lines, pattern color=white!93.3333333333333!black, fill opacity=0.8}] (axis cs:0.595,0) rectangle (axis cs:0.925,5388);
\addlegendimage{ybar,ybar legend,draw=white!93.3333333333333!black,fill=color2,opacity=0.8,very thin,postaction={pattern=north east lines, pattern color=white!93.3333333333333!black, fill opacity=0.8}}
\addlegendentry{\ourppo - $\lossreg$}

\draw[draw=white!93.3333333333333!black,fill=color2,opacity=0.8,very thin,postaction={pattern=north east lines, pattern color=white!93.3333333333333!black, fill opacity=0.8}] (axis cs:2.595,0) rectangle (axis cs:2.925,4872);
\draw[draw=white!93.3333333333333!black,fill=color2,opacity=0.8,very thin,postaction={pattern=north east lines, pattern color=white!93.3333333333333!black, fill opacity=0.8}] (axis cs:4.595,0) rectangle (axis cs:4.925,3406);
\draw[draw=white!93.3333333333333!black,fill=color2,opacity=0.8,very thin,postaction={pattern=north east lines, pattern color=white!93.3333333333333!black, fill opacity=0.8}] (axis cs:6.595,0) rectangle (axis cs:6.925,3023);
\draw[draw=white!93.3333333333333!black,fill=color2,opacity=0.8,very thin,postaction={pattern=north east lines, pattern color=white!93.3333333333333!black, fill opacity=0.8}] (axis cs:8.595,0) rectangle (axis cs:8.925,1982);
\draw[draw=white!93.3333333333333!black,fill=color3,opacity=0.8,very thin] (axis cs:0.925,0) rectangle (axis cs:1.255,5596);
\addlegendimage{ybar,ybar legend,draw=white!93.3333333333333!black,fill=color3,opacity=0.8,very thin}
\addlegendentry{\ourppo (Ours)}

\draw[draw=white!93.3333333333333!black,fill=color3,opacity=0.8,very thin] (axis cs:2.925,0) rectangle (axis cs:3.255,5284);
\draw[draw=white!93.3333333333333!black,fill=color3,opacity=0.8,very thin] (axis cs:4.925,0) rectangle (axis cs:5.255,4339);
\draw[draw=white!93.3333333333333!black,fill=color3,opacity=0.8,very thin] (axis cs:6.925,0) rectangle (axis cs:7.255,3822);
\draw[draw=white!93.3333333333333!black,fill=color3,opacity=0.8,very thin] (axis cs:8.925,0) rectangle (axis cs:9.255,3164);
\path [draw=white!41.1764705882353!black, line width=0.48pt]
(axis cs:0,5482)
--(axis cs:0,5742);

\path [draw=white!41.1764705882353!black, line width=0.48pt]
(axis cs:2,3686)
--(axis cs:2,4120);

\path [draw=white!41.1764705882353!black, line width=0.48pt]
(axis cs:4,767)
--(axis cs:4,1425);

\path [draw=white!41.1764705882353!black, line width=0.48pt]
(axis cs:6,460)
--(axis cs:6,972);

\path [draw=white!41.1764705882353!black, line width=0.48pt]
(axis cs:8,-143)
--(axis cs:8,321);

\path [draw=white!41.1764705882353!black, line width=0.48pt]
(axis cs:0.33,5359)
--(axis cs:0.33,5727);

\path [draw=white!41.1764705882353!black, line width=0.48pt]
(axis cs:2.33,3873)
--(axis cs:2.33,4323);

\path [draw=white!41.1764705882353!black, line width=0.48pt]
(axis cs:4.33,1324)
--(axis cs:4.33,1784);

\path [draw=white!41.1764705882353!black, line width=0.48pt]
(axis cs:6.33,1113)
--(axis cs:6.33,1431);

\path [draw=white!41.1764705882353!black, line width=0.48pt]
(axis cs:8.33,537)
--(axis cs:8.33,801);

\path [draw=white!41.1764705882353!black, line width=0.48pt]
(axis cs:0.76,5170)
--(axis cs:0.76,5606);

\path [draw=white!41.1764705882353!black, line width=0.48pt]
(axis cs:2.76,4612)
--(axis cs:2.76,5132);

\path [draw=white!41.1764705882353!black, line width=0.48pt]
(axis cs:4.76,3153)
--(axis cs:4.76,3659);

\path [draw=white!41.1764705882353!black, line width=0.48pt]
(axis cs:6.76,2824)
--(axis cs:6.76,3222);

\path [draw=white!41.1764705882353!black, line width=0.48pt]
(axis cs:8.76,1736)
--(axis cs:8.76,2228);

\path [draw=white!41.1764705882353!black, line width=0.48pt]
(axis cs:1.09,5371)
--(axis cs:1.09,5821);

\path [draw=white!41.1764705882353!black, line width=0.48pt]
(axis cs:3.09,5102)
--(axis cs:3.09,5466);

\path [draw=white!41.1764705882353!black, line width=0.48pt]
(axis cs:5.09,4179)
--(axis cs:5.09,4499);

\path [draw=white!41.1764705882353!black, line width=0.48pt]
(axis cs:7.09,3637)
--(axis cs:7.09,4007);

\path [draw=white!41.1764705882353!black, line width=0.48pt]
(axis cs:9.09,3001)
--(axis cs:9.09,3327);

\addplot [line width=0.48pt, white!41.1764705882353!black, opacity=1, mark=-, mark size=1.3, mark options={solid}, only marks]
table {%
0 5482
2 3686
4 767
6 460
8 -143
};

\addplot [line width=0.48pt, white!41.1764705882353!black, opacity=1, mark=-, mark size=1.3, mark options={solid}, only marks]
table {%
0 5742
2 4120
4 1425
6 972
8 321
};

\addplot [line width=0.48pt, white!41.1764705882353!black, opacity=1, mark=-, mark size=1.3, mark options={solid}, only marks]
table {%
0.33 5359
2.33 3873
4.33 1324
6.33 1113
8.33 537
};

\addplot [line width=0.48pt, white!41.1764705882353!black, opacity=1, mark=-, mark size=1.3, mark options={solid}, only marks]
table {%
0.33 5727
2.33 4323
4.33 1784
6.33 1431
8.33 801
};

\addplot [line width=0.48pt, white!41.1764705882353!black, opacity=1, mark=-, mark size=1.3, mark options={solid}, only marks]
table {%
0.76 5170
2.76 4612
4.76 3153
6.76 2824
8.76 1736
};

\addplot [line width=0.48pt, white!41.1764705882353!black, opacity=1, mark=-, mark size=1.3, mark options={solid}, only marks]
table {%
0.76 5606
2.76 5132
4.76 3659
6.76 3222
8.76 2228
};

\addplot [line width=0.48pt, white!41.1764705882353!black, opacity=1, mark=-, mark size=1.3, mark options={solid}, only marks]
table {%
1.09 5371
3.09 5102
5.09 4179
7.09 3637
9.09 3001
};

\addplot [line width=0.48pt, white!41.1764705882353!black, opacity=1, mark=-, mark size=1.3, mark options={solid}, only marks]
table {%
1.09 5821
3.09 5466
5.09 4499
7.09 4007
9.09 3327
};

\end{axis}

\end{tikzpicture}}
        \vspace{-1.8em}
        \caption{\scriptsize{Ant}}
        \label{sfig:reg_ant}
    \end{subfigure}
    \begin{subfigure}[t]{0.45\textwidth}
        \centering
        \resizebox{\textwidth}{!}{
\begin{tikzpicture}

\definecolor{color0}{rgb}{0.552941176470588,0.707450980392157,0.896078431372549}
\definecolor{color1}{rgb}{0.703057937269964,0.856470725837709,0.708208636049633}
\definecolor{color2}{rgb}{0.744102947491485,0.669256898711933,0.879184450886271}
\definecolor{color3}{rgb}{0.905434799735519,0.144732659736887,0.401761241571282}

\begin{axis}[
height=3cm,
width=\textwidth,
scale only axis,
axis line style={white!15!black},
legend cell align={left},
legend columns=2,
legend style={
  fill opacity=0,
  draw opacity=1,
  text opacity=1,
  at={(0.53,1.01)},
  anchor=north,
  draw=none,
  font=\scriptsize
},
xtick pos = lower,
tick align=outside,
x grid style={white!80!black},
xmin=-0.636, xmax=9.726,
xtick style={color=white!15!black},
xtick={0.595,2.595,4.595,6.595,8.595},
xticklabels={\scriptsize{No Attack},\scriptsize{MaxDiff},\scriptsize{Robust Sarsa},\scriptsize{SA-RL},\scriptsize{PA-AD}},
y grid style={white!80!black},
ylabel={\scriptsize{Average episode rewards}},
ymin=800, ymax=6000,
ytick style={color=white!15!black},
y grid style={white},
tick label style={font=\tiny},
ymajorgrids,
]
\draw[draw=white!93.3333333333333!black,fill=color0,opacity=0.8,very thin,postaction={pattern=crosshatch, pattern color=white!93.3333333333333!black, fill opacity=0.8}] (axis cs:-0.165,0) rectangle (axis cs:0.165,3920);
\addlegendimage{ybar,ybar legend,draw=white!93.3333333333333!black,fill=color0,opacity=0.8,very thin,postaction={pattern=crosshatch, pattern color=white!93.3333333333333!black, fill opacity=0.8}}
\addlegendentry{ATLA - $\lossreg$}

\draw[draw=white!93.3333333333333!black,fill=color0,opacity=0.8,very thin,postaction={pattern=crosshatch, pattern color=white!93.3333333333333!black, fill opacity=0.8}] (axis cs:1.835,0) rectangle (axis cs:2.165,3963);
\draw[draw=white!93.3333333333333!black,fill=color0,opacity=0.8,very thin,postaction={pattern=crosshatch, pattern color=white!93.3333333333333!black, fill opacity=0.8}] (axis cs:3.835,0) rectangle (axis cs:4.165,3219);
\draw[draw=white!93.3333333333333!black,fill=color0,opacity=0.8,very thin,postaction={pattern=crosshatch, pattern color=white!93.3333333333333!black, fill opacity=0.8}] (axis cs:5.835,0) rectangle (axis cs:6.165,3463);
\draw[draw=white!93.3333333333333!black,fill=color0,opacity=0.8,very thin,postaction={pattern=crosshatch, pattern color=white!93.3333333333333!black, fill opacity=0.8}] (axis cs:7.835,0) rectangle (axis cs:8.165,1004);
\draw[draw=white!93.3333333333333!black,fill=color1,opacity=0.8,very thin,postaction={pattern=north west lines, pattern color=white!93.3333333333333!black, fill opacity=0.8}] (axis cs:0.165,0) rectangle (axis cs:0.495,4192);
\addlegendimage{ybar,ybar legend,draw=white!93.3333333333333!black,fill=color1,opacity=0.8,very thin,postaction={pattern=north west lines, pattern color=white!93.3333333333333!black, fill opacity=0.8}}
\addlegendentry{PA-ATLA-PPO - $\lossreg$}

\draw[draw=white!93.3333333333333!black,fill=color1,opacity=0.8,very thin,postaction={pattern=north west lines, pattern color=white!93.3333333333333!black, fill opacity=0.8}] (axis cs:2.165,0) rectangle (axis cs:2.495,4152);
\draw[draw=white!93.3333333333333!black,fill=color1,opacity=0.8,very thin,postaction={pattern=north west lines, pattern color=white!93.3333333333333!black, fill opacity=0.8}] (axis cs:4.165,0) rectangle (axis cs:4.495,3893);
\draw[draw=white!93.3333333333333!black,fill=color1,opacity=0.8,very thin,postaction={pattern=north west lines, pattern color=white!93.3333333333333!black, fill opacity=0.8}] (axis cs:6.165,0) rectangle (axis cs:6.495,3315);
\draw[draw=white!93.3333333333333!black,fill=color1,opacity=0.8,very thin,postaction={pattern=north west lines, pattern color=white!93.3333333333333!black, fill opacity=0.8}] (axis cs:8.165,0) rectangle (axis cs:8.495,1984);
\draw[draw=white!93.3333333333333!black,fill=color2,opacity=0.8,very thin,postaction={pattern=north east lines, pattern color=white!93.3333333333333!black, fill opacity=0.8}] (axis cs:0.595,0) rectangle (axis cs:0.925,4076);
\addlegendimage{ybar,ybar legend,draw=white!93.3333333333333!black,fill=color2,opacity=0.8,very thin,postaction={pattern=north east lines, pattern color=white!93.3333333333333!black, fill opacity=0.8}}
\addlegendentry{\ourppo - $\lossreg$}

\draw[draw=white!93.3333333333333!black,fill=color2,opacity=0.8,very thin,postaction={pattern=north east lines, pattern color=white!93.3333333333333!black, fill opacity=0.8}] (axis cs:2.595,0) rectangle (axis cs:2.925,4041);
\draw[draw=white!93.3333333333333!black,fill=color2,opacity=0.8,very thin,postaction={pattern=north east lines, pattern color=white!93.3333333333333!black, fill opacity=0.8}] (axis cs:4.595,0) rectangle (axis cs:4.925,3902);
\draw[draw=white!93.3333333333333!black,fill=color2,opacity=0.8,very thin,postaction={pattern=north east lines, pattern color=white!93.3333333333333!black, fill opacity=0.8}] (axis cs:6.595,0) rectangle (axis cs:6.925,3419);
\draw[draw=white!93.3333333333333!black,fill=color2,opacity=0.8,very thin,postaction={pattern=north east lines, pattern color=white!93.3333333333333!black, fill opacity=0.8}] (axis cs:8.595,0) rectangle (axis cs:8.925,2276);
\draw[draw=white!93.3333333333333!black,fill=color3,opacity=0.8,very thin] (axis cs:0.925,0) rectangle (axis cs:1.255,4156);
\addlegendimage{ybar,ybar legend,draw=white!93.3333333333333!black,fill=color3,opacity=0.8,very thin}
\addlegendentry{\ourppo (Ours)}

\draw[draw=white!93.3333333333333!black,fill=color3,opacity=0.8,very thin] (axis cs:2.925,0) rectangle (axis cs:3.255,4177);
\draw[draw=white!93.3333333333333!black,fill=color3,opacity=0.8,very thin] (axis cs:4.925,0) rectangle (axis cs:5.255,4093);
\draw[draw=white!93.3333333333333!black,fill=color3,opacity=0.8,very thin] (axis cs:6.925,0) rectangle (axis cs:7.255,3770);
\draw[draw=white!93.3333333333333!black,fill=color3,opacity=0.8,very thin] (axis cs:8.925,0) rectangle (axis cs:9.255,2722);
\path [draw=white!41.1764705882353!black, line width=0.48pt]
(axis cs:0,3791)
--(axis cs:0,4049);

\path [draw=white!41.1764705882353!black, line width=0.48pt]
(axis cs:2,3927)
--(axis cs:2,3999);

\path [draw=white!41.1764705882353!black, line width=0.48pt]
(axis cs:4,3087)
--(axis cs:4,3351);

\path [draw=white!41.1764705882353!black, line width=0.48pt]
(axis cs:6,3267)
--(axis cs:6,3659);

\path [draw=white!41.1764705882353!black, line width=0.48pt]
(axis cs:8,879)
--(axis cs:8,1129);

\path [draw=white!41.1764705882353!black, line width=0.48pt]
(axis cs:0.33,4033)
--(axis cs:0.33,4351);

\path [draw=white!41.1764705882353!black, line width=0.48pt]
(axis cs:2.33,3948)
--(axis cs:2.33,4356);

\path [draw=white!41.1764705882353!black, line width=0.48pt]
(axis cs:4.33,3635)
--(axis cs:4.33,4151);

\path [draw=white!41.1764705882353!black, line width=0.48pt]
(axis cs:6.33,3122)
--(axis cs:6.33,3508);

\path [draw=white!41.1764705882353!black, line width=0.48pt]
(axis cs:8.33,1782)
--(axis cs:8.33,2186);

\path [draw=white!41.1764705882353!black, line width=0.48pt]
(axis cs:0.76,3858)
--(axis cs:0.76,4294);

\path [draw=white!41.1764705882353!black, line width=0.48pt]
(axis cs:2.76,3793)
--(axis cs:2.76,4289);

\path [draw=white!41.1764705882353!black, line width=0.48pt]
(axis cs:4.76,3628)
--(axis cs:4.76,4176);

\path [draw=white!41.1764705882353!black, line width=0.48pt]
(axis cs:6.76,3189)
--(axis cs:6.76,3649);

\path [draw=white!41.1764705882353!black, line width=0.48pt]
(axis cs:8.76,2089)
--(axis cs:8.76,2463);

\path [draw=white!41.1764705882353!black, line width=0.48pt]
(axis cs:1.09,3799)
--(axis cs:1.09,4513);

\path [draw=white!41.1764705882353!black, line width=0.48pt]
(axis cs:3.09,4001)
--(axis cs:3.09,4353);

\path [draw=white!41.1764705882353!black, line width=0.48pt]
(axis cs:5.09,3955)
--(axis cs:5.09,4231);

\path [draw=white!41.1764705882353!black, line width=0.48pt]
(axis cs:7.09,3574)
--(axis cs:7.09,3966);

\path [draw=white!41.1764705882353!black, line width=0.48pt]
(axis cs:9.09,2549)
--(axis cs:9.09,2895);

\addplot [line width=0.48pt, white!41.1764705882353!black, opacity=1, mark=-, mark size=1.3, mark options={solid}, only marks]
table {%
0 3791
2 3927
4 3087
6 3267
8 879
};

\addplot [line width=0.48pt, white!41.1764705882353!black, opacity=1, mark=-, mark size=1.3, mark options={solid}, only marks]
table {%
0 4049
2 3999
4 3351
6 3659
8 1129
};

\addplot [line width=0.48pt, white!41.1764705882353!black, opacity=1, mark=-, mark size=1.3, mark options={solid}, only marks]
table {%
0.33 4033
2.33 3948
4.33 3635
6.33 3122
8.33 1782
};

\addplot [line width=0.48pt, white!41.1764705882353!black, opacity=1, mark=-, mark size=1.3, mark options={solid}, only marks]
table {%
0.33 4351
2.33 4356
4.33 4151
6.33 3508
8.33 2186
};

\addplot [line width=0.48pt, white!41.1764705882353!black, opacity=1, mark=-, mark size=1.3, mark options={solid}, only marks]
table {%
0.76 3858
2.76 3793
4.76 3628
6.76 3189
8.76 2089
};

\addplot [line width=0.48pt, white!41.1764705882353!black, opacity=1, mark=-, mark size=1.3, mark options={solid}, only marks]
table {%
0.76 4294
2.76 4289
4.76 4176
6.76 3649
8.76 2463
};

\addplot [line width=0.48pt, white!41.1764705882353!black, opacity=1, mark=-, mark size=1.3, mark options={solid}, only marks]
table {%
1.09 3799
3.09 4001
5.09 3955
7.09 3574
9.09 2549
};
\addplot [line width=0.48pt, white!41.1764705882353!black, opacity=1, mark=-, mark size=1.3, mark options={solid}, only marks]
table {%
1.09 4513
3.09 4353
5.09 4231
7.09 3966
9.09 2895
};
\end{axis}

\end{tikzpicture}}
        \vspace{-1.8em}
        \caption{\scriptsize{Walker2d}}
        \label{sfig:reg_w}
    \end{subfigure}
    \caption{\small{Ablation performance for the state regularization loss $\lossreg$ under no attack and different attacks on Hopper, Walker2d, Halfcheetah, and Ant.}}
\label{app:fig:reg}
\end{figure*}
We provide full ablation experimental results for the state importance weight $w(s)$ and the regularization loss $\lossreg$ \citep{zhang2020robust} on four MuJoCo environments.

For the state importance weight $w(s)$, we compare the performance between the original \ourppo and \ourppo without $w(s)$ in Figure~\ref{app:fig:abl}. Additionally, we also equip \sappo with $w(s)$ to show the universal applicability of this design.
In all four MuJoCo environments, we can see that with $w(s)$, both \ourppo and SA-PPO get boosted robustness, verifying the effectiveness of the state importance weight.

For the state regularization loss $\lossreg$, Figure~\ref{app:fig:reg} verifies that $\lossreg$ enhances the robustness of \ourppo, since the performance of \ourppo drops without $\lossreg$. 
On the other hand, Figure~\ref{app:fig:reg} also compares the performance of ATLA methods and our algorithm without $\lossreg$ (note that ATLA methods also regularizes the PPO policies during training). The results indicate that \textit{the decisive contribution of \ourppo to robustness improving comes from the worst-attack-aware policy optimization. }

These ablation studies demonstrate that all the techniques are beneficial for robustness improvement and further show that our worst-case-aware training performs better than training with attackers.

\section{Potential Societal Impacts}
\label{app:limit}


This work focuses on improving the robustness of deep RL agents, which can make RL models more reliable in high-stakes applications. Although it is generally positive for the community to build more robust agents, such robust agents may also bring some potentially negative impacts, including the possibility of robust robots replacing some occupations and causing mass unemployment.

\end{document}